\def\eqref#1{equation~\ref{#1}}
\def\1{\bm{1}}
\DeclareMathAlphabet{\mathsfit}{\encodingdefault}{\sfdefault}{m}{sl}
\SetMathAlphabet{\mathsfit}{bold}{\encodingdefault}{\sfdefault}{bx}{n}
\newcommand{\R}{\mathbb{R}}
\theoremstyle{plain}
\newtheorem{thm}{\protect\theoremname}
\theoremstyle{remark}
\newtheorem*{rem}{\protect\remarkname}
\theoremstyle{remark}
\newtheorem{example}{Example}
\theoremstyle{plain}
\newtheorem{prop}[thm]{\protect\propositionname}
\theoremstyle{plain}
\newtheorem{lem}[thm]{\protect\lemmaname}
\theoremstyle{plain}
\newtheorem{cor}[thm]{\protect\corollaryname}
\newcommand {\sph} {{\mathbb{S}}}
\newcommand{\lip}{\mathrm{Lip}}
\newcommand {\dd} {{\mathrm{d}}}
\newcommand{\indic}{\mathbbm{1}}
\newcommand{\edit}[1]{#1}
\author{%
  Arthur Jacot, Seok Hoan Choi \& Yuxiao Wen \\
  Courant Institute\\
  New York University\\
  New York, NY 10012 \\
  \texttt{\{arthur.jacot,shc443,yuxiaowen\}@nyu.edu}
}
\providecommand{\corollaryname}{Corollary}
\providecommand{\lemmaname}{Lemma}
\providecommand{\propositionname}{Proposition}
\providecommand{\remarkname}{Remark}
\providecommand{\theoremname}{Theorem}
\begin{document}
\title{How DNNs break the Curse of Dimensionality:\\
 Compositionality and Symmetry Learning}
\maketitle
\begin{abstract}
We show that deep neural networks (DNNs) can efficiently learn any
composition of functions with bounded $F_{1}$-norm, which allows
DNNs to break the curse of dimensionality in ways that shallow networks
cannot. More specifically, we derive a generalization bound that combines
a covering number argument for compositionality, and the $F_{1}$-norm
(or the related Barron norm) for large width adaptivity. We show that
the global minimizer of the regularized loss of DNNs can fit for example
the composition of two functions $f^{*}=h\circ g$ from a small number
of observations, assuming $g$ is smooth/regular and reduces the dimensionality
(e.g. $g$ could be the quotient map of the symmetries of $f^{*}$),
so that $h$ can be learned in spite of its low regularity. The measures
of regularity we consider is the Sobolev norm with different levels
of differentiability, which is well adapted to the $F_{1}$ norm.
We compute scaling laws empirically and observe phase transitions
depending on whether $g$ or $h$ is harder to learn, as predicted
by our theory.
\end{abstract}
\maketitle

\section{Introduction}

One of the fundamental features of DNNs is their ability to generalize
even when the number of neurons (and of parameters) is so large that
the network could fit almost any function \citep{Zhang}. Actually
DNNs have been observed to generalize best when the number of neurons
is infinite \citep{belkin2018reconciling,geiger2019jamming,geiger2020scaling}.
The now quite generally accepted explanation to this phenomenon is
that DNNs have an implicit bias coming from the training dynamic where
properties of the training algorithm lead to networks that generalize
well. This implicit bias is quite well understood in shallow networks
\citep{Chizat2018,Rotskoff2018}, in linear networks \citep{gunasekar_2018_implicit_bias,li2020towards},
or in the NTK regime \citep{jacot2018neural}, but it remains ill-understood
in the general deep nonlinear case.

In both shallow networks and linear networks, one observes a bias
towards small parameter norm (either implicit \citep{chizat_2020_implicit_bias}
or explicit in the presence of weight decay \citep{wang_2023_bias_SGD_L2}).
Thanks to tools such as the $F_{1}$-norm \citep{bach2017_F1_norm},
or the related Barron norm \citep{weinan_2019_barron}, or more generally
the representation cost \citep{dai_2021_repres_cost_DLN}, it is possible
to describe the family of functions that can be represented by shallow
networks or linear networks with a finite parameter norm. This was
then leveraged to prove uniform generalization bounds (based on Rademacher
complexity) over these sets \citep{bach2017_F1_norm}, which depend
only on the parameter norm, not on the number of neurons or parameters.

Similar bounds have been proposed for DNNs \citep{bartlett_2017_composition_generalization,golowich2020size,barron2019_path_norm_complexity,Schmidt-Hieber_2020_compositional_sparsity,nitanda_2020_generalization_NTK,hsu_2021_generalization_distillation,sellke_2024_product_Frobenius},
relying on different types of norms on the parameters of the network. Analogues of these bounds have been extended to other architectures, such as CNNs \citep{ledent2021_generalization_CNNs,graf2022_generalization_CNN,truong2022rademacher_CNN,galanti2023norm}. But there remains many issues:
these bounds are typically orders of magnitude too large \citep{jiang_2019_fantastic_generalization,gonon2023_path_norm},
and they tend to explode as the depth $L$ grows \citep{sellke_2024_product_Frobenius}. 

\edit{It is difficult to compare these bounds, since they involve many terms that have complex behavior as the depth increases. To justify our new complexity measure $R(\theta)$  and bound, we pair it with an approximation result on composition of Sobolev functions (functions with a certain decay of their Fourier or spherical harmonics coefficients), which implies that DNNs can learn such composite functions with almost optimal rates, allowing to break the curse of dimensionality in certain cases.}

\edit{Previous works has shown that DNNs with either bounded width/depth \citep{Schmidt-Hieber_2020_compositional_sparsity} or with bounded number of non-zero parameters \citep{galanti2023norm,poggio2024compositional} can effectively learn such compositional functions. This paper shows that these strong sparsity assumptions are not strictly necessary for DNNs to learn such compositional functions.}

\edit{The family of composite Sobolev functions is also a useful theoretical baseline to compare different norm-based bounds, by checking which complexity measure lead to tighter rates. This allows us to show the importance of replacing operator norms by Lipschitz constants and other design choices.} 




\subsection{Contribution}

We consider Accordion Networks (AccNets), which are the composition $f_{L:1}=f_{L}\circ\cdots\circ f_{1}$ of multiple shallow networks $f_\ell(x)=W_\ell \sigma(V_\ell x + b_\ell)$,
we prove a uniform generalization bound  for the MSE $\sqrt{\mathcal{L}(f_{L:1})}-\sqrt{\tilde{\mathcal{L}}_{N}(f_{L:1})} \lesssim \sqrt{\frac{R(\theta)}{\sqrt{N}}}$,
for a complexity measure
\[
R(\theta)=\left(\prod_{\ell=1}^{L}Lip(f_{\ell}) \right) \left(\sum_{\ell=1}^{L}\frac{\left\Vert f_{\ell}\right\Vert _{F_{1}}}{Lip(f_{\ell})}\sqrt{d_{\ell}+d_{\ell-1}} \right)
\]
that depends on the $F_{1}$-norms $\left\Vert f_{\ell}\right\Vert _{F_{1}}$ (which is upper bounded by the parameter norm of the corresponding shallow network $\|W_\ell\|^2_F+\|V_\ell\|^2_F + \|b_\ell\|^2$)
and Lipschitz constanst $Lip(f_{\ell})$ of the subnetworks, and the
intermediate dimensions $d_{0},\dots,d_{L}$. This use of the $F_{1}$-norms
makes this bound independent of the widths $w_{1},\dots,w_{L}$ of
the subnetworks, though it does depend on the depth $L$ (it typically
grows linearly in $L$ which is still better than the exponential
growth often observed).

Any traditional DNN can be mapped to an AccNet (and vice versa), by
spliting the middle weight matrices $W_{\ell}$ with SVD $USV^{T}$
into two matrices $U\sqrt{S}$ and $\sqrt{S}V^{T}$ to obtain an AccNet
with dimensions $d_{\ell}=\mathrm{Rank}W_{\ell}$, so that the bound
can be applied to traditional DNNs with bounded rank.

We then show an approximation result: any composition of Sobolev functions
$f^{*}=f_{L^{*}}^{*}\circ\cdots\circ f_{1}^{*}$ can be approximated
with a network with either a bounded complexity $R(\theta)$ or a
slowly growing one. Thus under certain assumptions one can show that
DNNs can learn general compositions of Sobolev functions (i.e. functions whose first $\nu\in\mathbb{N}$ derivatives are bounded in $L_2$ norm). This ability can be interpreted as DNNs being able to learn symmetries, allowing
them to avoid the curse of dimensionality in settings where kernel
methods or even shallow networks suffer heavily from it.

Empirically, we observe a good match between the scaling laws of learning
and our theory, as well as qualitative features such as transitions
between regimes depending on whether it is harder to learn the symmetries
of a task, or to learn the task given its symmetries. 

\section{Accordion Neural Networks and ResNets}

Our analysis is most natural for a slight variation on the traditional
fully-connected neural networks (FCNNs), which we call Accordion Networks,
which we define here. Nevertheless, all of our results can easily
be adapted to FCNNs.

Accordion Networks (AccNets) are simply the composition of $L$ shallow
networks, that is $f_{L:1}=f_{L}\circ\cdots\circ f_{1}$ where $f_{\ell}(z)=W_{\ell}\sigma(V_{\ell}z+b_{\ell})$
for the nonlinearity $\sigma:\mathbb{R}\to\mathbb{R}$, the $d_{\ell}\times w_{\ell}$
matrix $W_{\ell}$ , $w_{\ell}\times d_{\ell-1}$ matrix $V_{\ell}$,
and $w_{\ell}$-dim. vector $b_{\ell}$, and for the widths $w_{1},\dots,w_{L}$
and dimensions $d_{0},\dots,d_{L}$. We will focus on the ReLU $\sigma(x)=\max\{0,x\}$
for the nonlinearity. The parameters $\theta$ are made up of the
concatenation of all $(W_{\ell},V_{\ell},b_{\ell})$. More generally,
we denote $f_{\ell_{2}:\ell_{1}}=f_{\ell_{2}}\circ\cdots\circ f_{\ell_{1}}$
for any $1\leq\ell_{1}\leq\ell_{2}\leq L$.

We will typically be interested in settings where the widths $w_{\ell}$
is large (or even infinitely large), while the dimensions $d_\ell$ remain finite
or much smaller in comparison, hence the name accordion.

If we add residual connections, i.e. $f_{L:1}^{res}=(f_{L}+id)\circ\cdots\circ(f_{1}+id)$
for the same shallow nets $f_{1},\dots,f_{L}$ we recover the typical
ResNets.
\begin{rem}
The only difference between AccNets and FCNNs is that each weight
matrix $M_{\ell}$ of the FCNN is replaced by a product of two matrices
$M_{\ell}=V_{\ell}W_{\ell-1}$ in the middle of the network (such a structure
has already been proposed \citep{ongie2022_linear_layer_in_DNN,parkinson_2023_add_DLN_layers}).
Given an AccNet one can recover an equivalent FCNN by choosing $M_{\ell}=V_{\ell}W_{\ell-1}$,
$M_{0}=V_{0}$ and $M_{L+1}=W_{L}$. In the other direction there
could be multiple ways to split $M_{\ell}$ into the product of two
matrices, but we will focus on taking $V_{\ell}=U\sqrt{S}$ and $W_{\ell-1}=\sqrt{S}V^{T}$
for the SVD decomposition $M_{\ell}=USV^{T}$, along with the choice
$d_{\ell}=\mathrm{Rank}M_{\ell}$. One can thus think of AccNets as rank-constrained FCNNs. 
\end{rem}

\subsection{Learning Setup}

We consider a traditional MSE regression setup, where we want to find a
function $f:\Omega\subset\mathbb{R}^{d_{in}}\to\mathbb{R}^{d_{out}}$
that minimizes the population loss $\mathcal{L}(f)=\mathbb{E}_{x\sim\pi}\left\|f(x) - f^*(x)\right\|^2$
for the `true function' $f^*$, and input distribution $\pi$. Given a training set $x_{1},\dots,x_{N}$
of size $N$ we approximate the population loss by the empirical loss
$\tilde{\mathcal{L}}_{N}(f)=\frac{1}{N}\sum_{i=1}^{N}\left\|f(x_i) - f^*(x_i)\right\|^2$
that can be minimized. 

To ensure that the empirical loss remains representative of the population
loss, we will prove high probability bounds on the generalization
gap $\sqrt{\mathcal{L}(f)} - \sqrt{\tilde{\mathcal{L}}_{N}(f)} = O(\sqrt{RN^{-\frac{1}{2}}})$ uniformly over function families with complexity bounded by $R$. Thus as long as the train error is small enough $\tilde{\mathcal{L}}_{N}(f) = O(RN^{-\frac{1}{2}})$, so will the test error $\mathcal{L}(f) = O(RN^{-\frac{1}{2}})$. This type of bound on the gap between squared roots of the losses is more natural for the MSE loss and allows for so-called fast rates under the assumption that the train error is itself small. Note that our generalization bounds rely on covering number arguments, so they could easily be applied to Lipschitz losses using Dudley's theorem.

For simplicity of analysis, we will assume that the true function $f^*$ and estimator $\hat{f}$ are uniformly bounded $\| f^* \|_\infty,\| \hat{f} \|_\infty \leq B$. This can be guaranteed easily by adding a renormalizing nonlinearity at the end of the network $\gamma_B(x) = \frac{Bx}{\max\{B,\|x\|\}}$. Since this is a contraction it does not affect the covering number argument that we rely on.

\section{Generalization Bound for DNNs}

The reason we focus on accordion networks is that there exists generalization
bounds for shallow networks \citep{bach2017_F1_norm,weinan_2019_barron},
that are (to our knowledge) widely considered to be tight, which is
in contrast to the deep case, where many bounds exist but no clear
optimal bound has been identified. Our strategy is to extend the
results for shallow nets to the composition of multiple shallow nets, i.e. AccNets.
Roughly speaking, we will show that the complexity of an AccNet $f_{\theta}$
is bounded by the sum of the complexities of the shallow nets $f_{1},\dots,f_{L}$
it is made of.

We will therefore first review (and slightly adapt) the existing generalization
bounds for shallow networks in terms of their so-called $F_{1}$-norm
\citep{bach2017_F1_norm}, and then prove a generalization bound for
deep AccNets.

\subsection{Shallow Networks}

The complexity of a shallow net $f(x)=W\sigma(Vx+b)$, with weights $V\in\mathbb{R}^{w\times d_{in}}$ and $W\in\mathbb{R}^{d_{out}\times w}$, can be bounded
in terms of the quantity $C=\sum_{i=1}^{w}\left\Vert W_{\cdot i}\right\Vert \sqrt{\left\Vert V_{i\cdot}\right\Vert ^{2}+b_{i}^{2}}$.
First note that the rescaled function $\frac{1}{C}f$ can be written
as a convex combination $\frac{1}{C}f(x)=\sum_{i=1}^{w}\frac{\left\Vert W_{\cdot i}\right\Vert \sqrt{\left\Vert V_{i\cdot}\right\Vert ^{2}+b_{i}^{2}}}{C}\bar{W}_{\cdot i}\sigma(\bar{V}_{i\cdot}x+\bar{b}_{i})$
for $\bar{W}_{\cdot i}=\frac{W_{\cdot i}}{\left\Vert W_{\cdot i}\right\Vert }$,
$\bar{V}_{i\cdot}=\frac{V_{i\cdot}}{\sqrt{\left\Vert V_{i\cdot}\right\Vert ^{2}+b_{i}^{2}}}$,
and $\bar{b}_{i}=\frac{b_{i}}{\sqrt{\left\Vert V_{i\cdot}\right\Vert ^{2}+b_{i}^{2}}}$,
since the coefficients $\frac{\left\Vert W_{\cdot i}\right\Vert \sqrt{\left\Vert V_{i\cdot}\right\Vert ^{2}+b_{i}^{2}}}{C}$
are positive and sum up to 1. Thus $f$ belongs to $C$ times the
convex hull
\[
B_{F_{1}}=\mathrm{Conv}\left\{ x\mapsto w\sigma(v^{T}x+b):\left\Vert w\right\Vert ^{2}=\left\Vert v\right\Vert ^{2}+b^{2}=1\right\} .
\]
This set can be thought as the unit ball w.r.t. to the $F_{1}$-norm \citep{bach2017_F1_norm}. The $F_1$-norm $\left\Vert f\right\Vert _{F_{1}}$ itself can then be defined  as the smallest positive scalar $s$ such that $\frac{1}{s}f\in B_{F_{1}}$. Note that by the AM-GM inequality, we have
\[
\left\Vert f\right\Vert _{F_{1}} \leq C =\sum_{i=1}^{w}\left\Vert W_{\cdot i}\right\Vert \sqrt{\left\Vert V_{i\cdot}\right\Vert ^{2}+b_{i}^{2}} \leq \frac{1}{2} \sum_{i=1}^{w}\left\Vert W_{\cdot i}\right\Vert^2 + \left\Vert V_{i\cdot}\right\Vert^2+b_{i}^{2}
\]

The generalization gap over any $F_{1}$-ball can be uniformly bounded
with high probability:
\begin{thm}
\label{thm:generalization_gap_shallow}For any input distribution
$\pi$ supported on the $L_2$ ball $B(0,b)$ with radius $b$, we have with probability $1-p$,
over the training samples $x_{1},\dots,x_{N}$, that for all $f\in \{f: \| f\|_{F_1}\leq R , \| f\|_{\infty} \leq B \}$
\[
\sqrt{\mathcal{L}(f)}-\sqrt{\tilde{\mathcal{L}}_{N}(f)}\leq c_0 \sqrt{BR N^{-\frac{1}{2}}} + c_1 B \sqrt{\frac{-\log \nicefrac{p}{2}}{N}}.
\]
Therefore if $\tilde{\mathcal{L}}_{N}(f)= O(BR N^{-\frac{1}{2}})$ then $\mathcal{L}(f)= O(BR N^{-\frac{1}{2}})$.
\end{thm}

This theorem is a slight variation of the one found in \citep{bach2017_F1_norm}:
we simply generalize it to multiple outputs, and apply  to the (non-Lipschitz) MSE loss instead of a general Lipschitz loss. The proof technique however relies on
covering numbers rather than Rademacher complexity, which will be key to obtaining a generalization bound for the deep case.

Notice how this bound does not depend on the width $w$, because the
$F_{1}$-norm (and the $F_{1}$-ball) themselves do not depend on
the width. This matches with empirical evidence that shows that increasing
the width does not hurt generalization \citep{belkin2018reconciling,geiger2019jamming,geiger2020scaling}.

To use Theorem \ref{thm:generalization_gap_shallow} effectively we
need to be able to guarantee that the learned function will have a
small enough $F_{1}$-norm. The $F_{1}$-norm is hard to compute exactly,
but it is bounded by the parameter norm: if $f(x)=W\sigma(Vx+b)$,
then $\left\Vert f\right\Vert _{F_{1}}\leq\frac{1}{2}\left(\left\Vert W\right\Vert _{F}^{2}+\left\Vert V\right\Vert _{F}^{2}+\left\Vert b\right\Vert ^{2}\right)$,
and this bound is tight if the width $w$ is large enough and the
parameters are chosen optimally. Adding weight decay/$L_{2}$-regularization
to the cost then leads to bias towards learning with small $F_{1}$
norm.

\subsection{Deep Networks}

Since an AccNet is simply the composition of multiple shallow nets,
the functions represented by an AccNet is included in the set of composition
of $F_{1}$ balls. More precisely, if $\left\Vert W_{\ell}\right\Vert ^{2}+\left\Vert V_{\ell}\right\Vert ^{2}+\left\Vert b_{\ell}\right\Vert ^{2}\leq2R_{\ell}$
then $f_{L:1}$ belongs to the set $\left\{ g_{L}\circ\cdots\circ g_{1}:g_{\ell}\in B_{F_{1}}(0,R_{\ell})\right\} $
for some $R_{\ell}$.

As noticed in \citep{bartlett_2017_composition_generalization},
the covering number is well-behaved under composition, therefore the complexity of AccNets can be bounded in terms of the individual
shallow nets it is made of:
\begin{thm}
\label{thm:generalization_gap_deep_AccNets}
Consider an accordion net of depth $L$ and widths $d_{L},\dots,d_{0}$,
with corresponding set of functions $\mathcal{F}=\{f_{L:1}:\left\Vert f_{\ell}\right\Vert _{F_{1}}\leq R_{\ell},\text{Lip}(f_{\ell})\leq\rho_{\ell},\|f_{L:1}\|_\infty \leq B\}$
with input space $\Omega=B(0,r)$. Then with probability $1-p$ over the sampling of the
training set $X$ from the distribution $\pi$, we have for all $f\in\mathcal{F}$
\begin{align*}
\sqrt{\mathcal{L}(f)}-\sqrt{\tilde{\mathcal{L}}_{N}(f)} & \leq c_0\sqrt{B\rho_{L:1}r\sum_{\ell'=1}^{L}\frac{R_{\ell'}}{\rho_{\ell'}}\sqrt{d_{\ell'}+d_{\ell'-1}}\frac{1}{\sqrt{N}}}(1+o(1))+c_1B\sqrt{\frac{-\log \nicefrac{p}{2}}{N},}
\end{align*}
for $c_0 \approx14.6$
and $c_1 \approx7.4$. Thus if $\tilde{\mathcal{L}}_{N}(f)=O(B\rho_{L:1}r\sum_{\ell'=1}^{L}\frac{R_{\ell'}}{\rho_{\ell'}}\sqrt{d_{\ell'}+d_{\ell'-1}}N^{-\frac{1}{2}})$ then $\mathcal{L}(f)=O(B\rho_{L:1}r\sum_{\ell'=1}^{L}\frac{R_{\ell'}}{\rho_{\ell'}}\sqrt{d_{\ell'}+d_{\ell'-1}}N^{-\frac{1}{2}})$.

\end{thm}

Theorem \ref{thm:generalization_gap_deep_AccNets} can be extended to ResNets $(f_{L}+id)\circ\cdots\circ(f_{1}+id)$
by simply replacing the Lipschitz constant $Lip(f_{\ell})$ by $Lip(f_{\ell}+id)$.

The Lipschitz constants $Lip(f_{\ell})$ are difficult to compute
exactly, so it is easiest to simply bound it by the product of the
operator norms $Lip(f_{\ell})\leq\left\Vert W_{\ell}\right\Vert _{op}\left\Vert V_{\ell}\right\Vert _{op}$\edit{,
but we see in Theorem \ref{thm:generalization_composition_Sobolev} how this bound can be very loose\footnote{A simple example of a function whose Lipschitz constant is much smaller than its $F_1$ norm (and thus its operator norms since $\|W \|_{op} \geq \frac{\|W \|_F}{\mathrm{Rank} W}$) is is the `zig-zag' function $f(x)=|x - a^{-1}\mathrm{round}(ax)|$ on the interval $[0,1]$: we have $Lip(f)=1$ but $\|f\|_{F_1} = \Omega(a)$ since one neuron is needed for each up or down, each contributing a constant to the parameter norm.} } The fact that our bound depends
on the Lipschitz constants rather than the operator norms $\left\Vert W_{\ell}\right\Vert _{op},\left\Vert V_{\ell}\right\Vert _{op}$
is thus a significant advantage.

This bound can be applied to a FCNNs with weight
matrices $M_{1},\dots,M_{L+1}$, by replacing the middle $M_{\ell}$ with SVD decomposition $USV^{T}$ in the middle by two matrices $W_{\ell-1}=\sqrt{S}V^{T}$
and $V_{\ell}=U\sqrt{S}$, with dimensions $d_{\ell}=\mathrm{Rank}M_{\ell+1}$. The Frobenius norm of
the new matrices equals the nuclear norm of the original one $\left\Vert W_{\ell-1}\right\Vert _{F}^{2}=\left\Vert V_{\ell}\right\Vert _{F}^{2}=\left\Vert M_{\ell}\right\Vert _{*}$. Recent
results have shown that weight-decay leads to a low-rank bias on the
weight matrices of the network \citep{jacot_2022_BN_rank,jacot_2023_bottleneck2,galanti_2022_sgd_low_rank}.

\edit{This allows us compare our complexity measure to the one in  \citep{bartlett_2017_composition_generalization}, whose complexity measure $\prod_\ell \|W_\ell \|_{op} \left( \sum_\ell \frac{\|W_\ell \|_{2,1}^{2/3}}{\|W_\ell \|^{2/3}_{op}}\right)^{3/2}$ is closest to ours. We obtain three improvements: replacing the operator norm by the Lipschitz constant of $f_\ell$, replacing the $(2,1)$-norm ($\|A\|_{2,1}=\|(\|A_{:,1}\|_2,\dots,\|A_{:,d}\|_2)\|_1$) by the nuclear norm, and removing the $2/3$ and $3/2$ exponents. These three changes play a significant role in Theorem \ref{thm:generalization_composition_Sobolev}, as we discuss later. Note however that since our bound relies on the rank of the weight matrices being bounded, we cannot say that it is a strict improvement over \citep{bartlett_2017_composition_generalization}.}

We compute in Figure \ref{fig:test_error_rates} the upper bound of Theorem \ref{thm:generalization_gap_deep_AccNets} for both AccNets and DNNs, and even though we observe a very large gap (roughly of order $10^7$), we do observe that it captures rate/scaling of the test error (the log-log slope) well. So this generalization bound could be well adapted to predicting rates, which is what we will do in the next section.

\begin{rem}
Note that if one wants to compute this upper bound in practical setting, it is important to train with $L_2$ regularization until the parameter norm also converges (this often happens after the train and test loss have converged). The intuition is that at initialization, the weights are initialized randomly, and they contribute a lot to the parameter norm, and thus lead to a larger generalization bound. During training with weight decay, these random initial weights slowly vanish, thus leading to a smaller parameter norm and better generalization bound. It might be possible to improve our generalization bounds to take into account the randomness at initialization to obtain better bounds throughout training, but we leave this to future work.
\end{rem}

\begin{figure}
  \centering
  \includegraphics[width=1\linewidth]{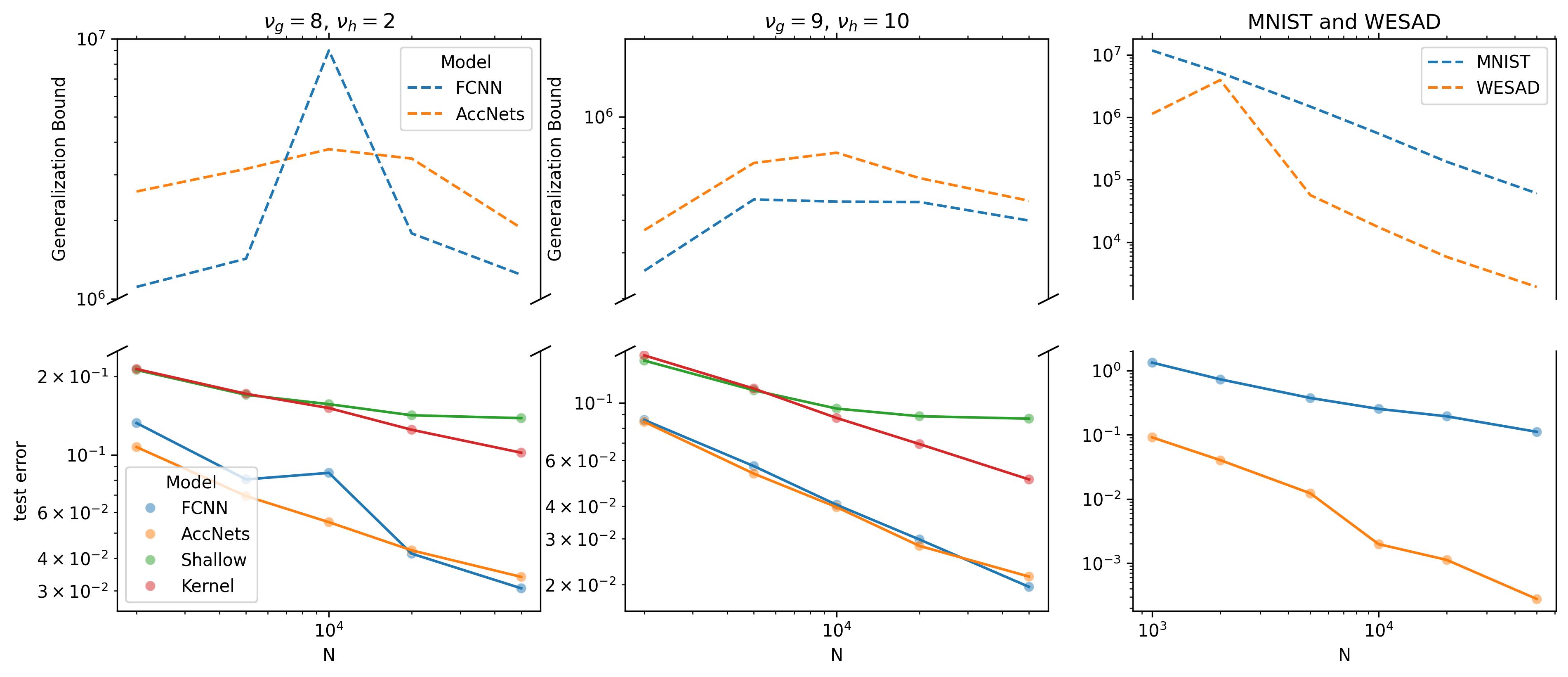}
  \caption{Visualization of scaling laws. We observe that deep networks (either AccNets or DNNs) achieve better scaling laws than kernel methods or shallow networks on certain compositional tasks, in agreement with our theory. We also see that our new generalization bounds approximately recover the right scaling laws (even though they are orders of magnitude too large overall). We consider a compositional true function $f^*=h\circ g$ where $g$ maps from dimension 15 to 3 while h maps from 3 to 20, and we denote $\nu_g,\nu_h$ for the number of times $g,h$ are differentiable. In the first plot $\nu_g =8,\nu_h=2$ so that $g$ is easy to learn while $h$ is hard, whereas in the second plot $\nu_g=9,\nu_h=10$, so both $g$ and $h$ are relatively easier. The third plot presents the test error and generalization bounds for MNIST and WESAD \citep{10.1145/3242969.3242985}.}
  \label{fig:test_error_rates}
\end{figure}

\section{Breaking the Curse of Dimensionality with Compositionality}

In this section we study a large family of functions spaces, obtained
by taking compositions of Sobolev balls. We focus on this family of
tasks because they are well adapted to the
complexity measure we have identified, and because kernel methods
and even shallow networks do suffer from the curse of dimensionality
on such tasks, whereas deep networks avoid it (e.g. Figure \ref{fig:test_error_rates}).

More precisely, we will show that these sets of functions can be approximated
by a AccNets with bounded (or in some cases slowly growing) complexity
measure 
\[
R(\theta)=\prod_{\ell=1}^{L}Lip(f_{\ell})\sum_{\ell=1}^{L}\frac{\left\Vert f_{\ell}\right\Vert _{F_{1}}}{Lip(f_{\ell})}\sqrt{d_{\ell}+d_{\ell-1}}.
\]
This will then allow us show that AccNets can (assuming
global convergence) avoid the curse of dimensionality, even in settings that
should suffer from the curse of dimensionality, when the input dimension
is large and the function is not very smooth (only a few times differentiable).

Since the regularization term $R(\theta)$ is difficult to optimize directly because of the Lipschitz constants, we also consider the upper bound $\tilde{R}(\theta)$ obtained by replacing each $Lip(f_\ell)$ with $ \Vert V_\ell \Vert_{op}\Vert W_\ell \Vert_{op}$.

\subsection{Composition of Sobolev Balls}

The family of Sobolev norms capture the regularity of a
function, by measuring the size of its derivatives. Consider a function $f:\mathbb{R}^{d_{in}}\to\mathbb{R}$ with derivatives $\partial_{x}^{\alpha}f$ for
some $d_{in}$-multi-index $\alpha$, the $W^{\nu,p}(\pi)$-Sobolev
norm with integer $\nu$ and $p\geq1$ is defined as 
\[
\left\Vert f\right\Vert^p_{W^{\nu,p}(\pi)}=\sum_{\left|\alpha\right|\leq \nu}\left\Vert \partial_{x}^{\alpha}f\right\Vert _{L_{p}(\pi)}^{p}.
\]
Note that the derivative $\partial_{x}^{\alpha}f$ only needs to be
defined in the `weak' sense, which means that even non-differentiable functions such as the ReLU
functions can actually have finite Sobolev norm.

The Sobolev balls $B_{W^{\nu,p}(\pi)}(0,R)=\{f:\left\Vert f\right\Vert _{W^{\nu,p}(\pi)} \leq R\}$
are a family of function spaces with a range of regularity (the larger
$\nu$, the more regular). This regularity makes these spaces of functions
learnable purely from the fact that they enforce the function $f$
to vary slowly as the input changes. Indeed we can prove the following
generalization bound:
\begin{prop}
\label{prop:generalization_Sobolev_ball}Given a distribution
$\pi$ with support in $B(0,b)$, we have that with probability $1-p$
for all functions $f\in\mathcal{F}=\left\{ f:\left\Vert f\right\Vert _{W^{\nu,2}}\leq R,\left\Vert f\right\Vert _{\infty}\leq B\right\} $
\begin{align*}
\sqrt{\mathcal{L}(f)}-\sqrt{\tilde{\mathcal{L}}_{N}(f)} &= c_0 \left( \frac{B^2 R^r}{N} \right)^\frac{1}{2+r} + c_1 B \sqrt{\frac{-\log \nicefrac{p}{2}}{N}},
\end{align*}
where $r=\frac{d_{in}}{\nu}$. Therefore if $\tilde{\mathcal{L}}_{N}(f)= O(( \frac{B^2 R^r}{N})^\frac{2}{2+r})$, then $\mathcal{L}(f)= O(( \frac{B^2 R^r}{N})^\frac{2}{2+r})$.
\end{prop}

But this result also illustrates the \textbf{curse of dimensionality:}
the differentiability $\nu$ needs to scale with the input dimension
$d_{in}$ to obtain a reasonable rate. If instead $\nu$ is constant
and $d_{in}$ grows, then the number of datapoints $N$ needed to
guarantee a test error of at most $\epsilon^2$ scales exponentially
in $d_{in}$, i.e. $N\sim\epsilon^{-(2 + r)}=\epsilon^{-(2+\frac{d_{in}}{\nu})}$. One way to interpret
this issue is that regularity becomes less and less useful the larger
the dimension: knowing that similar inputs have similar outputs is
useless in high dimension where the closest training point $x_{i}$
to a test point $x$ is typically very far away.

\begin{figure}
  \centering
  \includegraphics[scale=0.3]{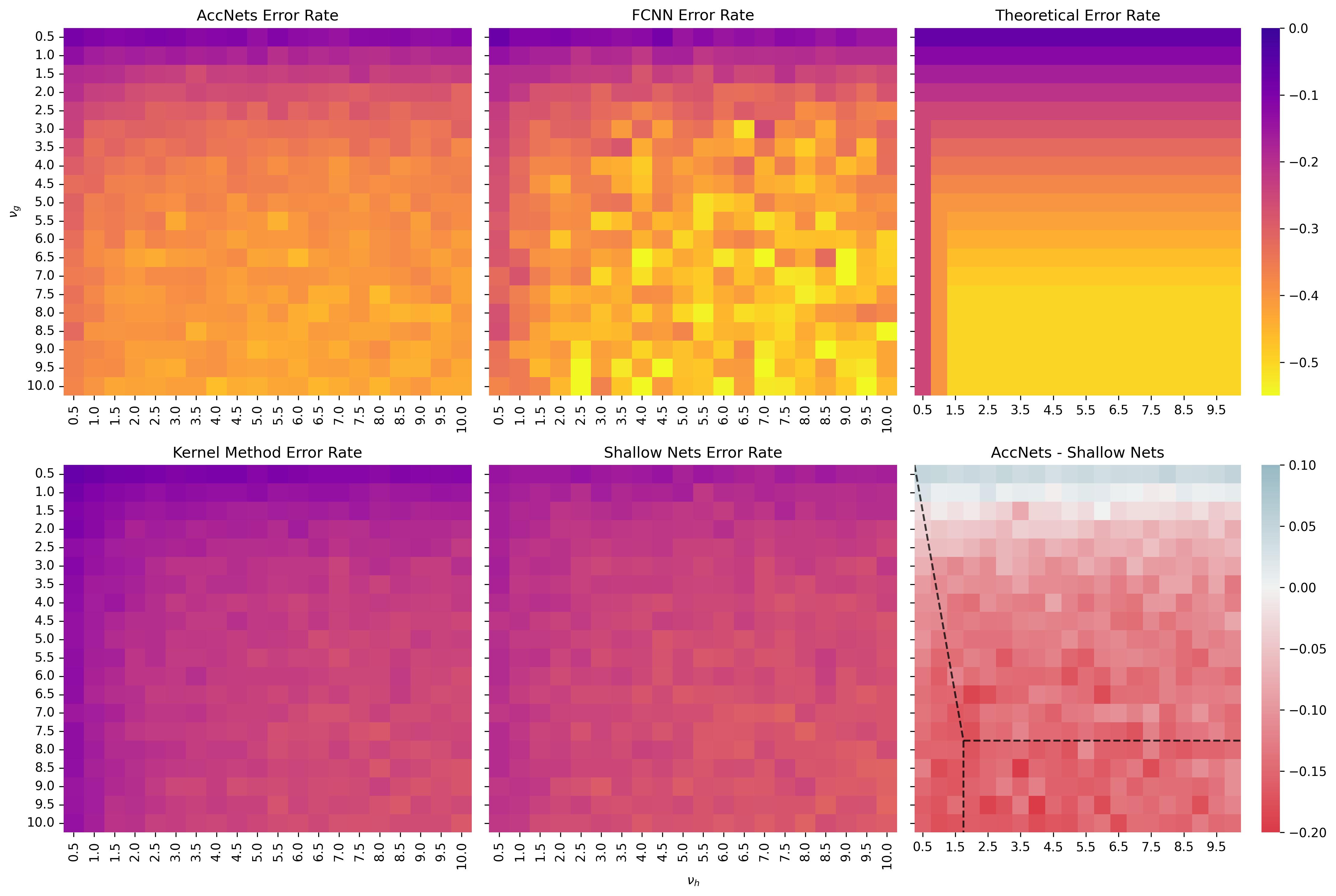}
  \caption{A comparison of empirical and theoretical  error rates. The frist two columns show the log decay rate of the test error with respect to the dataset size $N$ based on our empirical simulations for 4 different models. The top right plot depicts the theoretical decay rate of the test error $-\min\{\frac{1}{2},\frac{2\nu_g}{2\nu_g + d_{in}},\frac{2\nu_h}{2\nu_h + d_{mid}}\}$. The bottom right plot displays the difference between the rates of AccNets and shallow nets. The lower left region represents the area where $g$ is easier to learn than $h$, the upper right where $h$ is easier to learn than $g$, and the lower right region where both $f$ and $g$ are easy. We see that the biggest gain of AccNets over shallow nets are in the. lower left regions, where learning $h$ is hard.}
  \label{fig:heatmap_scaling}
\end{figure}

\subsubsection{Breaking the Curse of Dimensionality with Compositionality}

To break
the curse of dimensionality, we need to assume some additional structure on the data
or task which introduces an `intrinsic dimension' that can be much
lower than the input dimension $d_{in}$:

\textbf{Manifold hypothesis}: If the input distribution lies on a
$d_{surf}$-dimensional manifold, the error rates typically depends on $d_{surf}$ instead
of $d_{in}$ \citep{schmidt_2019_generalization_manifold_input,chen_2022_generalization_manifold_input}.


\textbf{Known Symmetries:} If $f^{*}(g\cdot x)=f^{*}(x)$ for a group
action $\cdot$ w.r.t. a group $G$, then $f^{*}$ can be written
as the composition of a quotient map $g^{*}:\mathbb{R}^{d_{in}}\to\nicefrac{\mathbb{R}^{d_{in}}}{G}$
which maps pairs of inputs which are equivalent up to symmetries  to the same value (pairs $x,y$ s.t. $y=g\cdot x$
for some $g\in G$), and then a second function $h^{*}:\nicefrac{\mathbb{R}^{d_{in}}}{G}\to\mathbb{R}^{d_{out}}$,
then the complexity of the task will depend on the dimension of the
quotient space $\nicefrac{\mathbb{R}^{d_{in}}}{G}$ which can be much
lower. If the symmetry is known, then one can for example fix $g^{*}$
and only learn $h^{*}$ (though other techniques exist, such as designing
kernels or features that respect the same symmetries) \citep{mallat2012group}.

\textbf{Symmetry Learning:} However if the symmetry is not known then both $g^{*}$ and $h^{*}$ have to be learned, and this is
where we require feature learning and/or compositionality. Shallow
networks are able to learn translation symmetries, since they can
learn so-called low-index functions which satisfy $f^{*}(x)=f^{*}(Px)$
for some projection $P$ (with a statistical complexity that depends
on the dimension of the space one projects into, not the full dimension
\citep{bach2017_F1_norm,abbe_2021_staircase}). Low-index functions
correspond exactly to the set of functions that are invariant under
translation along the kernel $\ker P$. To learn general symmetries,
one needs to learn both $h^{*}$ and the quotient map $g^{*}$ simultaneously,
hence the importance of feature learning.

For $g^{*}$ to be learnable efficiently, it needs to be regular enough
to not suffer from the curse of dimensionality, but many traditional
symmetries actually have smooth quotient maps, for example the quotient
map $g^{*}(x)=\left\Vert x\right\Vert ^{2}$ for rotation invariance.
This can be understood as a special case of composition of Sobolev
functions, whose generalization gap can be bounded:
\begin{thm}
\label{thm:generalization_composition_Sobolev} Let $\mathcal{F}=\mathcal{F}_{L}\circ\cdots\circ\mathcal{F}_{1}$
where $\mathcal{F}_{\ell}=\left\{ f_{\ell}:\mathbb{R}^{d_{\ell-1}}\to\mathbb{R}^{d_{\ell}}\text{ s.t. }\left\Vert f_{\ell}\right\Vert _{W^{\nu_{\ell},2}}\leq R,\left\Vert f_{\ell}\right\Vert _{\infty}\leq B,Lip(f_{\ell})\leq\rho_{\ell}\right\} $,
and let $r_{\max}=\max_{\ell}r_{\ell}$ for $r_{\ell}=\frac{d_{\ell-1}}{\nu_{\ell}}$,
then with probability $1-p$ we have for all $f\in\mathcal{F}$
\begin{align*}
\sqrt{\mathcal{L}(f)}-\sqrt{\tilde{\mathcal{L}}_{N}(f)} &\leq c_0 \left( \frac{B^2 R^{r_{max}}}{N} \right)^\frac{1}{2+r_{max}} + c_1 B \sqrt{\frac{-\log p}{N}},
\end{align*}
where $c_0$ depends only on $r_{\ell},\rho_\ell$. Thus if $\tilde{\mathcal{L}}_{N}(f) \leq O(N^{-\frac{2}{2+r_{max}}})$ then $\mathcal{L}(f) \leq O(N^{-\frac{2}{2+r_{max}}})$.
\end{thm}

We see that only the largest ratio $r_{\max}$ matters when it comes
to the rate of learning. Coming back to the symmetry learning example, we see that the hardness
of learning a function of the type $f^{*}=h\circ g$ with inner dimension
$d_{mid}$ and regularities $\nu_g$ and $\nu_h$ leading to ratios $r_g=\frac{d_{in}}{\nu_g}$ and $r_h=\frac{d_{mid}}{\nu_h}$ , the test error will of order (assuming the train error is small enough)
\[
\mathcal{L} = O\left(N^{-\min\{\frac{2}{2+r_g},\frac{2}{2+r_h}\}}\right)=O\left(N^{-\min\{\frac{2 \nu_g}{2\nu_g+d_{in}},\frac{2 \nu_h}{2\nu_h+d_{mid}}\}}\right). 
\]
This suggests the existence of two regimes depending on whether $g$ or $h$ is harder to learn, which is determined by which ratio $r_g$ or $r_h$ is larger.

In contrast, without taking advantage of the compositional structure,
we expect $f^{*}$ to be only $\min\{\nu_g,\nu_h\}$ times differentiable,
so trying to learn it as a single Sobolev function would lead to an
error rate of $N^{-\frac{2\min\{\nu_g,\nu_h\}}{2\min\{\nu_g,\nu_h\}+d_{in}}}=N^{-\min\{\frac{2 \nu_g}{2\nu_g+d_{in}},\frac{2 \nu_h}{2\nu_h+d_{in}}\}}$
which is no better than the compositional rate assuming $d_{mid}\leq d_{in}$, and can in some cases be arbitrarily worse.

Furthermore, since multiple compositions are possible, one can imagine
a hierarchy of symmetries that slowly reduce the dimensionality with
less and less regular quotient maps. For example one could imagine a
composition $f_{L}\circ\cdots\circ f_{1}$ with dimensions $d_{\ell}=d_{0}2^{-\ell}$
and regularities $\nu_\ell=d_{0}2^{-\ell}$ so that the ratios remain
constant $r_{\ell}=\frac{d_{0}2^{-\ell}}{d_{0}2^{-\ell+1}}=\frac{1}{2}$,
leading to an almost parametric rate of $N^{-\frac{1}{2}}\log N$, improving significantly over to the non-compositional rate of $N^{-2^{-L}}$.
\begin{rem}
A naive argument suggests
that the rate of $N^{-\frac{2}{2+r_{\max}}}$ should
be optimal: assume that the maximum $r_{\ell}$ is attained at a layer
$\ell$, then one can consider the subset of functions such that the
image $f_{\ell-1:1}(B(0,r))$ contains a ball $B(z,r')\subset\mathbb{R}^{d_{\ell-1}}$
and that the function $f_{L:\ell+1}$ is $\beta$-non-contracting
$\left\Vert f_{L:\ell+1}(x)-f_{L:\ell+1}(y)\right\Vert \geq\beta\left\Vert x-y\right\Vert $,
then learning $f_{L:1}$ should be as hard as learning $f_{\ell}$
over the ball $B(z,r')$, thus
forcing a rate of at least $N^{-\frac{2}{2+r_{\max}}}$. An analysis of minimax rates in a similar setting has been done in
\citep{giordano_2022_separation_GP_DGP}.
\end{rem}


\subsection{Breaking the Curse of Dimensionality with AccNets}
\label{sec:breaking_curse_accnets}

Now that we know that composition of Sobolev functions can be easily
learnable, even in settings where the curse of dimensionality should
make it hard to learn them, we need to find a model that can achieve
those rates. Though many models are possible \footnote{One could argue that it would be more natural to consider compositions
of kernel method models, for example a composition of random feature
models. But this would lead to a very similar model: this would be
equivalent to a AccNet where only the $W_{\ell}$ weights are learned,
while the $V_{\ell},b_{\ell}$ weights remain constant. Another family
of models that should have similar properties is Deep Gaussian Processes
\citep{damianou_2013_DGP}.}, we focus on DNNs, in particular AccNets. Assuming convergence to
a global minimum of the loss of sufficiently wide AccNets with two
types of regularization, one can guarantee close to optimal rates:
\begin{thm}
\label{thm:regularized_min_generalizes}Given a true function $f_{L^{*}:1}^{*}=f_{L^{*}}^{*}\circ\cdots\circ f_{1}^{*}$
going through the dimensions $d_{0}^{*},\dots,d_{L^{*}}^{*}$, along
with a continuous input distribution $\pi_{0}$ supported in $B(0,b_{0})$,
such that the distributions $\pi_{\ell}$ of $f_{\ell}^{*}(x)$ (for
$x\sim\pi_{0}$) are continuous too and supported inside $B(0,b_{\ell})\subset\mathbb{R}^{d_{\ell}^{*}}$.
Further assume that there are differentiabilities $\nu_{\ell}$ and
radii $R_{\ell}$ such that $\left\Vert f_{\ell}^{*}\right\Vert _{W^{\nu_{\ell},2}(B(0,b_{\ell}))}\leq R_{\ell}$,
and $\rho_{\ell}$ such that $Lip(f_{\ell}^{*})\leq\rho_{\ell}$.
For an infinite width AccNet with $L\geq L^{*}$ and dimensions $d_{\ell}\geq d_{1}^{*},\dots,d_{L^{*}-1}^{*}$,
we have for the ratios $\tilde{r}_{\ell}=\max \{\frac{d_{\ell-1}^{*}+3}{\nu_\ell}, 2 \}$:

(1) At a global min $\hat{f}_{L:1}$ of $\tilde{\mathcal{L}}_{N}(f_{L:1})+\lambda R(\theta)$,
we have $\mathcal{L}(\hat{f}_{L:1})=O(N^{-\frac{2}{2+\tilde{r}_{\max}}})$ for $\tilde{r}_{\max}=\max\{\tilde{r}_1,\dots,\tilde{r}_L\}$.

(2) At a global min $\hat{f}_{L:1}$ of  $\tilde{\mathcal{L}}_{N}(f_{L:1})+\lambda \tilde{R}(\theta)$,
we have $\mathcal{L}(\hat{f}_{L:1})=O(N^{-\frac{2}{2+\tilde{r}_{sum}}})$ for $\tilde{r}_{sum}=2 + \sum_\ell (\tilde{r}_\ell -2) $.
\end{thm}

\edit{In the proof, we build a network $\hat{f}_{L:1}$ that approximates $f^*_{L^*:1}$ in the following manner: we adapt approximation results from \citep{bach2017_F1_norm} to approximate each Sobolev function $f^*_\ell$ by a shallow network $\hat{f}_{\ell'}$ and if $L>L^*$ we add $L-L^*$ identity layers where the dimensionality $d^*_\ell$ is minimal (the parameter norm required to represent the identity on a $d$-dimensional representation is proportional to $d$, so it is optimal to add identity layers at the smallest dimension).} 

\edit{\textbf{$F_1$-norm/Barron norm:} This shows the power of replacing other parameter norm measures (such as the $(2,1)$-norm from \cite{bartlett_2017_composition_generalization}) with the $F_1$-norm, by allowing us to repurpose the already existing approximation results for shallow networks. While we do not prove that a similar approximations cannot be achieved with small $(2,1)$-norm too, it seems unlikely, since generic Sobolev functions require a high variety of active neurons (a natural way to approximate them is to use a random feature approach \citep{random_features}) which seems in contradiction with the $(2,1)$-norm which imposes a sparsity on the neurons.}

\edit{\textbf{Norm based vs Sparsity based:} In \citep{Schmidt-Hieber_2020_compositional_sparsity,poggio2024compositional} a similar class of functions is considered and optimal rates $N^{-\frac{2}{2+r_{max}}}$ with $r_{max}=\max \{\frac{d_{0}^*}{\nu_1},\dots,\frac{d_{L^*-1}^*}{\nu_{L^*}}\}$ instead of $\tilde{r}_{max}=\max \{\frac{d_{0}^*+3}{\nu_1},\dots,\frac{d_{L^*-1}^*+3}{\nu_{L^*}}, 2\}$ are proven. The presence off the $+3$ comes from the shallow network approximation results and could potentially be removed. The bigger gap happens for easy tasks, where the `fast' rates can be achieved $N^{-1}$ when $r_{max}\approx0$, whereas our result can never predict rates faster than $N^{-\frac{1}{2}}$ because $\tilde{r}_{max}\geq 2$. This might be a fundamental gap, because norm-based complexity measures are known to be unable to yield fast rates \citep{Srebro2010SmoothnessLN}, so a control on the dimensionality of the model might be required to obtain these fast rates. Real world tasks are probably rarely easy enough to achieve rates faster than $N^{-\frac{1}{2}}$, e.g. LLMs scaling laws have been observed to be roughly $N^{-0.095}$ in \citep{kaplan2020scaling}.}

\edit{\textbf{Lipschitz constant vs operator norm:} By evaluating the complexities $R,\tilde{R}$ on this approximation, we obtain two bounds illustrating the tradeoff between these two complexities. The first complexity measure based on Lipschitz constant $Lip(f_\ell)$ leads to almost optimal rates, whereas the second complexity measure can lead to arbitrarily suboptimal rates when two or more ratios $\frac{d^*_{\ell-1}+3}{\nu_\ell}$ are larger than $2$. This is due to the fact to approximate a general Sobolev functions with a large ratio $\frac{d^*_{\ell-1}+3}{\nu_\ell}$ with a shallow network, the parameter norm $\|W_\ell\|_F^2 +\|V_\ell\|_F^2$ needs to grow to infinity as $\epsilon \searrow 0$,  leading to exploding operator norms, even though the Lipschitz constant remains bounded.}

Obviously, the Lipschitz constants
$Lip(f_{\ell})$ are difficult to optimize over. For finite width
networks it is in theory possible to take the max over all linear
regions, but the complexity might be unreasonable. It might be more reasonable to leverage an implicit bias instead, such as a large learning rate, because a large Lipschitz constant implies that the network is sensible
to small changes in its parameters, so GD with a large learning rate
should only converge to minima with a small Lipschitz constant (such
a bias is described in \citep{jacot_2023_bottleneck2}). For this reason, we apply standard weight-decay to minimize the $F_1$-norms, and rely on the implicit bias to control the Lipschitz constants. Going further, techniques from  \citep{wei_2019_generalization_without_Lipschitz} could be used to replace the Lipschitz constant in the bound with the maximum Jacobian over the dataset, which could then be optimized directly.

\textbf{Impact of Depth:} As can be seen in the proof of Theorem \ref{thm:regularized_min_generalizes}, when the depth $L$ is strictly larger
than the true depth $L^{*}$, one needs to add identity layers, leading
to a so-called Bottleneck structure, which was proven emerge as a result of weight decay in \citep{jacot_2022_BN_rank,jacot_2023_bottleneck2,wen_2024_BN_CNN}.
These identity layers add a term that scales linearly in the additional
depth $\frac{(L-L^{*})d_{min}^{*}}{\sqrt{N}}$ to the first regularization,
(and multiplicative factor of order $L$ to the second), see the proof for more details. \edit{The removal of the $2/3$ and $3/2$ exponents\footnote{We achieve this by applying chaining accross the layers instead of applying to each layer separately.} in our bounds in comparison to previous bounds \cite{bartlett_2017_composition_generalization} allow us to get such a $O(L)$ bound instead of $O(L^\frac{3}{2})$. Interestingly, the first bound proposed in \cite{golowich2020size} to obtain a better depth dependence depends on the product of the Frobenius norms, which would grow exponentially as $\min_\ell \{d^*_0, \dots, d^*_L \}^{(L-L^*)}$ if the minimum dimension is 2 or more. Finally, note that by switching to a ResNet, there is no need for those identity layers, and the generalization becomes constant in depth.}

\textbf{Limitations:} 
There are a number of limitations to this result. First we assume
that one is able to recover the global minimizer of the regularized
loss, which should be hard in general\footnote{Note that the unregularized loss can be optimized polynomially, e.g.
in the NTK regime \citep{jacot2018neural,Allen-Zhu2018,Du2019}, but
this is an easier task than finding the global minimum of the regularized
loss where one needs to both fit the data, and do it with an minimal
regularization term.} (we already know from \citep{bach2017_F1_norm} that this is NP-hard
for shallow networks and a simple $F_{1}$-regularization). Note that
it is sufficient to recover a network $f_{L:1}$ whose regularized
loss is within a constant of the global minimum, which might be easier
to guarantee, but should still be hard in general. The typical method
of training with GD on the regularized loss is a greedy approach,
which might fail in general but could recover almost optimal parameters under the right conditions (some results suggest that training relies on first order correlations to guide the network in the right direction \citep{abbe_2021_staircase,abbe_2022_staircase,petrini_2023_learn_compositional}). 

Another limitation is that our proof requires an infinite width,
because we were not able to prove that a function with bounded $F_{1}$-norm
and Lipschitz constant can be approximated by a sufficiently wide
shallow networks with the same (or close) $F_{1}$-norm and Lipschitz
constant (we know from \citep{bach2017_F1_norm} that it is possible
without preserving the Lipschitzness). We are quite hopeful that this
condition might be removed in future work.



\subsubsection{Experiments}
\edit{We train our models on a synthetic dataset obtained by composing two Gaussian processes $Y=h(g(X))$ with Mat\'ern kernels $K_g,K_h$ chosen so that $g$ and $h$ have the right differentiability.} In Figure \ref{fig:heatmap_scaling}, we compare the empirical rates (by doing a linear fit on a log-log plot of test error as a function of $N$) and rates $\min \{\frac{1}{2},\frac{2\nu_g}{2\nu_g + d_{in}},\frac{2\nu_h}{2\nu_h + d_{mid}}\}$ which seem to yield the best match. Note that this best fit for the rates is a mix of the theoretical lower bound $\min \{\frac{2\nu_g}{2\nu_g + d_{in}},\frac{2\nu_h}{2\nu_h + d_{mid}}\}$ and the upper bound $\min \{\frac{1}{2},\frac{2\nu_g}{2\nu_g + d_{in} + 3},\frac{2\nu_h}{2\nu_h + d_{mid} + 3}\}$, suggesting that the appearance of the $+3$ in the denominator might be an artifact of the proofs, whereas the $\frac{1}{2}$ might actually be optimal. But the amount of noise in our approximation of the rates does not allow us to give a definite answer for what the rates should be.

Notice also that all experiments were done with a simple $L_2$-regularization, which suggests the Lipshitz constants $Lip(f_\ell)$ do not need to be optimized directly. The use of large learning rates and the edge of stability phenomenon \citep{cohen_2021_edge_of_stability} could explain this implicit control of the Lipschitzness, because a large Lipschitz constant can lead to exploding gradients, and training with large learning rates could implicit avoid such parameters. Stability under large learning rate has been shown to control the regularity of the representations in previous works \citep{jacot_2022_BN_rank,jacot_2023_bottleneck2}.

\subsubsection{Computational graphs}
Theorem \ref{thm:regularized_min_generalizes} can be adapted to show that AccNets can also learn any `computational graph' as in \citep{Schmidt-Hieber_2020_compositional_sparsity,poggio2024compositional}, where a variables $v_1,\dots,v_K$ each with their own dimension $d_k$. Each variable has a set of parents $P(k)\subset \{1,\dots,K\}$ on whom it depends $v_k = f_k((v_m : m \in P(k)))$ for a $\nu_k$-Sobolev or $F_1$ function $g_k$. We assume that the variables can be ordered in such a way that all parents of a variable belong have smaller indices, or equivalently that the directed graph with $K$ vertices, with edges connecting parents to their child is acyclic. The input variable is $1$ and output one is $K$.

It is then possible to organize the variables into `layers' of variables that only have parents in the previous layers (possibly adding identities to keep certain variables from previous layers until they are needed). We say that the depth $L^*$ of the computational graph is the minimal number of layers required to find such a representation, and it matches the length of the longest directed path in the graph. Thus the complete function can be written as the composition of $L^*$ functions $f^*_{L:1}$, where each $f^*_\ell$ is the product of a few $g_k$s, e.g. $f^*_\ell(v_1,v_2,v_3)=(g_1(v_1,v_2), g_2(v_3), g_3(v_1,v_3))$. The $F_1$-norm has the property that $F_1$-norm of the product of multiple functions is upper bounded by the sum of the $F_1$ norms, e.g. $\| f^*_\ell \|_{F_1}=\| g_1 \|_{F_1} + \| g_2\|_{F_1} + \|g_3\|_{F_1}$, as well as the Lipshchitz constant $Lip( f^*_\ell) = Lip(g_1) + Lip(g_2) + Lip(g_3)$. One can then easily see that, assuming bounded Lipschitz constants, we can guarantee a rate of learning of $N^{-\frac{2}{2+\tilde{r}_{\max}}}$ for $\tilde{r}_{max} = \max_{k=1,\dots,K} \frac{3+\sum_{m\in P(k)} d_k}{\nu_k}$.

\section{Conclusion}

We have given a generalization bound for Accordion Networks and as
an extension Fully-Connected networks. It depends on the $F_{1}$-norms
and Lipschitz constants of its shallow subnetworks. This allows us
to prove under certain assumptions that AccNets can learn general
compositions of Sobolev functions efficiently, making them able to
break the curse of dimensionality in certain settings, such as in
the presence of unknown symmetries.

\bibliographystyle{iclr2025_conference}
\bibliography{main}

\newpage{}

\appendix

The Appendix is structured as follows:
\begin{enumerate}
\item In Section \ref{sec:Experimental-Setup}, we describe the experimental
setup and provide a few additional experiments.
\item in Section \ref{sec:covering_numbers}, we introduce the notion of covering numbers and prove a Theorem that we will use to obtain generalization bounds from covering numbers bound.
\item In Section \ref{sec:AccNet-Generalization-Bounds}, we prove Theorems
\ref{thm:generalization_gap_shallow} and \ref{thm:generalization_gap_deep_AccNets}
from the main.
\item In Section \ref{sec:appendix_Composition-of-Sobolev}, we prove Proposition
\ref{prop:generalization_Sobolev_ball} and Theorem \ref{thm:generalization_composition_Sobolev}.
\item In Section \ref{sec:Generalization-at-the-regularized}, we prove
Theorem \ref{thm:regularized_min_generalizes} and other approximation
results concerning Sobolev functions.
\item In Section \ref{sec:Technical-results}, we prove a few technical
results on the covering number.
\end{enumerate}

\section{Experimental Setup}\label{sec:Experimental-Setup}
In this section, we review our numerical experiments and their setup both on synthetic and real-world datasets in order to address theoretical results more clearly and intuitively. The code used for experiments is publicly available \href{https://github.com/shc443/CoveringNumber_GB}{here}.

\subsection{Dataset}

\subsubsection{Empirical Dataset}

\edit{By Corollary A.6 from (\cite{tuo2015theoreticalframeworkcalibrationcomputer}), the Reproducing kernel Hilbert space (RKHS) generated by Matérn kernel is norm equivalent to the Sobolev space $H^{\nu + d/2} (\Omega) $, so we utilized Matérn kernels to generate the synthetic dataset which retains Sobolev properties.}

The Matérn kernel is considered a generalization of the radial basis function (RBF) kernel. It controls the differentiability, or smoothness, of the kernel through the parameter $\nu$. As $\nu \rightarrow \infty$, the Matérn kernel converges to the RBF kernel, and as $\nu \rightarrow 0$, it converges to the Laplacian kernel, a 0-differentiable kernel. In this study, we utilized the Matérn kernel to generate Gaussian Process (GP) samples based on the composition of two Matérn kernels, $K_g$ and $K_h$, with varying differentiability in the range [0.5,10]×[0.5,10]. The input dimension ($d_{in}$) of the kernel, the bottleneck mid-dimension ($d_{mid}$), and the output dimension ($d_{out}$) are 15, 3, and 20, respectively.

This outlines the general procedure of our sampling method for synthetic data:

\begin{enumerate}

\item Sample the training dataset $X \in \mathbb{R}^{D \times d_{in}}$   
\item From X, compute the $D \times D$ kernel $K_g$ with given $\nu_g$
\item From $K_g$, sample $Z \in\mathbb{R}^{D \times d_{mid}}$ with columns sampled from the Gaussian $\mathcal{N}(0,K_g)$.
\item From $Z$, compute $K_g$ with given $\nu_h$
\item From $K_h$, sample the test dataset $Y\in\mathbb{R}^{D \times d_{out}}$ with columns sampled from the Gaussian $\mathcal{N}(0,K_h)$.   

\end{enumerate}

\edit{We used 128 GB of RAM to generate our synthetic dataset. Due to the Matérn Kernel's time complexity of $\mathcal{O}(n^3)$ and the space complexity of $\mathcal{O}(n^2)$, the maximum possible dataset size for 128 GB of memory is approximately 52,500. Among the 52,500 dataset points, 50,000 were allocated for training, and 2,500 were used for the test dataset.}

\subsubsection{Real-world dataset: \edit{MNIST and} WESAD}
\edit{In our study, we utilized both MNIST (Modified National Institute of Standards and Technology) and WESAD (Wearable Stress and Affect Detection) to train our AccNets for classification tasks. As a standard benchmark for image classification tasks, MNIST consists of 60,000 grayscale images of 10 different handwritten digits.} On the other hand, the WESAD dataset provides multimodal physiological and motion data collected from 15 subjects using devices worn on the wrist and chest. For the purpose of our experiment, we specifically employed the Empatica E4 wrist device to distinguish between non-stress (baseline) and stress conditions.

\edit{After preprocessing, the dataset comprised a total of 136,482 instances. We implemented a train-test split ratio of approximately 75:25, resulting in 100,000 dataset for the training set and the rest 36,482 dataset for the test set. The overall hyperparameters and architecture of the AccNets model applied to the WESAD dataset were largely consistent with those used for our synthetic data. The primary differences were the use of 100 epochs for each iteration of $N_i$ from $N$ in the model setup and a learning rate set to 1e-5.}

\subsection{Model setups}

\edit{For analyzing the scaling law of test error for our synthetic dataset, we trained models with gradually increasing $N_i$ data points from our training data, where $N = [100, 200, 500, 1000, 2000, 5000, 10000, 20000, 50000]$. The models utilized for our analysis are the kernel method, shallow networks, fully connected deep neural networks (FCNN), and AccNets. For FCNN and AccNets, we set the network depth to 12 layers, with the layer widths as $[d_{in}, 500, 500, ..., 500, d_{out}]$ for DNNs, and $[d_{in}, 900, 100, 900, ..., 100, 900, d_{out}]$ for AccNets.}

\edit{In order to have a comparable number of neurons between the shallow networks and other deeper networks, the width for the shallow networks was set to 50,000, resulting in dimensions of $[d_{in}, 50000, d_{out}]$.}

\edit{We utilized ReLU as the activation function and $L^2$-norm as the cost function, with the Adam optimizer.} The total number of batch was set to 5, and the training process was conducted over 3600 epochs, divided into three phases. The detailed optimizer parameters are as follows:
\begin{enumerate}
    \item For the first 1200 epochs: learning rate $(lr) = 1.5 * 0.001$, weight decay = 0
    \item For the second 1200 epochs: $lr = 0.4 * 0.001$, weight decay $= 0.002$
    \item For the final 1200 epochs: $lr = 0.1 * 0.001$, weight decay $= 0.005$
\end{enumerate}

\edit{We conducted experiments utilizing 12 NVIDIA V100 GPUs (each with 32GB of memory) over approximately 7 days to train all the synthetic datasets. In contrast, training the WESAD and MNIST dataset required only one hour on a single V100 GPU.}

\begin{figure}
  \centering
  \includegraphics[width=1\linewidth]
  {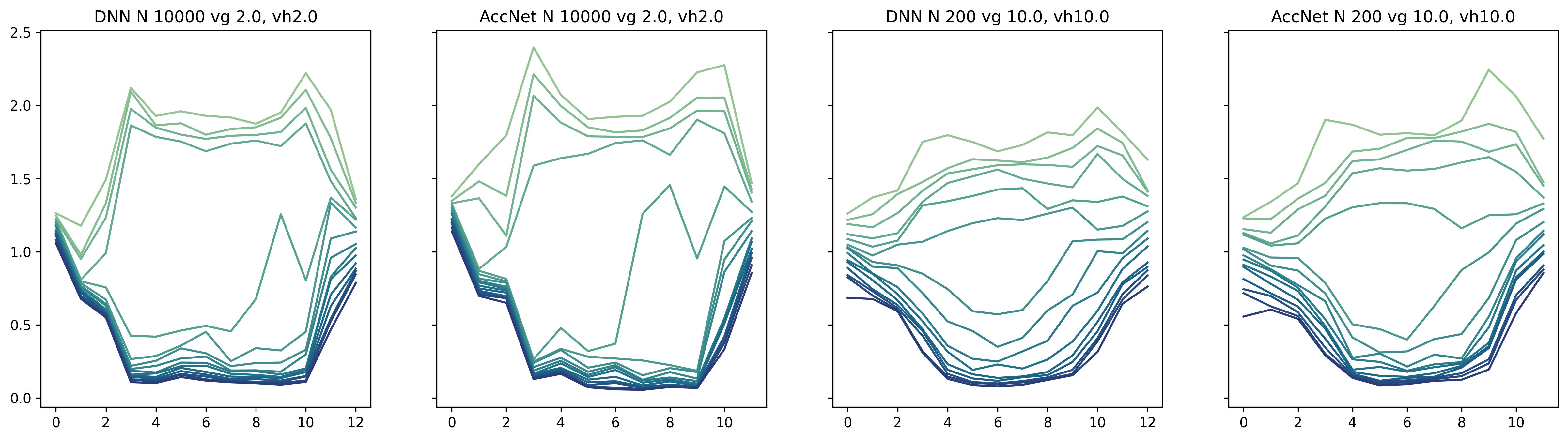}
  \caption{A comparison: singular values of the weight matrices for DNN and AccNets models. The first two plots represent cases where $N$ = 10000 while the right two plots correspond to $N$ = 200.The number of outliers at the top of each plot signifies the rank of each network. The plots with $N=10000$ datasets demonstrate a clearer capture of the true rank compared to those with $N=200$ indicating that a higher dataset count provides more accurate rank determination.}
  \label{fig:singular_values_models}
\end{figure}

\begin{figure}
  \centering
  \includegraphics[width=1\linewidth]
  {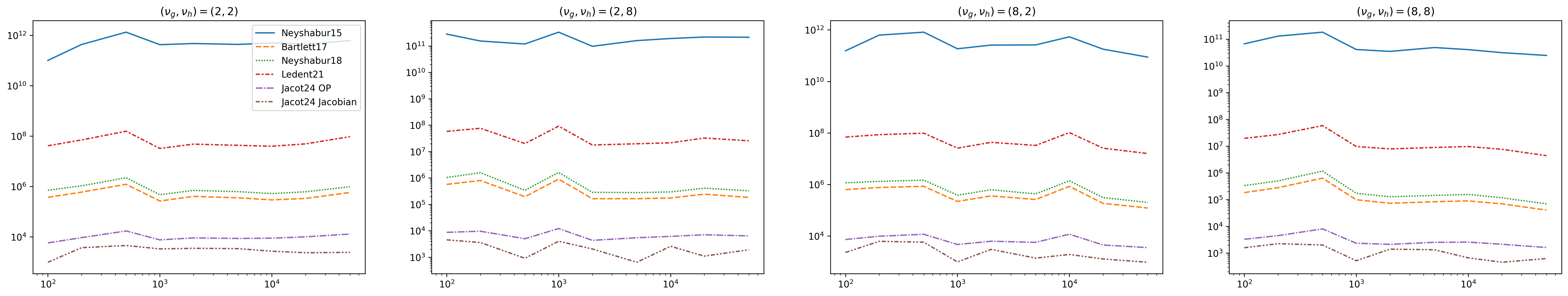}
  \caption{Comparison of our bounds to a number of previous bounds on the composition of two functions dataset across different differentiabilities of $g$ and $h$. For simplicity, we drop the constant prefactors in each bound, since they are mostly an artifact of the proofs. We see that our bound (Jacot24 Jacobian) obtains strictly better than previous ones, even when using upper bounding the Lipschitz constants by the operator norms (Jacot24 OP). Since we cannot compute the Lipschitz constant, we approximate them by a max over 100 random points (taking more points does not significantly change the final bound). The lines appear flat because most of the variation is between different bounds rather than as $N$ increases, but by zooming in, we can see a downward trend, albeit a slow and noisy one. The case $(2,2)$ lies in the regime where both $g$ and $h$ are hard, in which case our theory predicts that the operator norm bound should have a worse scaling exponent than the Lipschitz based bound. The experiments appear to match our prediction, because the Lipschitz-constant based bound seems to be the only one that is decreasing in $N$, while the operator norm bound, as well as all previous bounds, are increasing (the fact that uniform generalization bounds can be increasing in $N$ has been observed in previous work \citep{nagarajan_2019_uniform_bounds_unable}). Though these trends are interesting, there is clearly a lot of noise in the resulting curves, so it is difficult to confidently conclude anything from these experiments.}
  \label{fig:singular_values_models}
\end{figure}

\section{Covering Numbers}\label{sec:covering_numbers}
We will use a covering number argument to prove generalization bounds.
Before we start, we must define a few notions.

The $\epsilon$-covering number $\mathcal{N}_{2}(\mathcal{F};\epsilon)$
of set $\mathcal{F}$ of functions $f:\Omega\to\mathbb{R}^{d}$, is
the smallest integer $n$ such that for any distribution $\pi$ supported
on $\Omega$ there is a covering $\tilde{\mathcal{F}}$, i.e. a finite
set with no more than $n$ elements, such that for all $f\in\mathcal{F}$
there is a cover $\tilde{f}\in\tilde{\mathcal{F}}$ with 
\[
\left\Vert f-\tilde{f}\right\Vert _{\pi}=\sqrt{\mathbb{E}_{x\sim\pi}\left\Vert f(x)-\tilde{f}(x)\right\Vert ^{2}}\leq\epsilon.
\]
We will also sometimes say that $\tilde{f}$ covers $f$ in such case.

The covering number allows us to bound the difference between the errors of two datasets uniformly over the whole functions set $\mathcal{F}$ with high-probability:

\begin{thm}
\label{thm:covering_fast_rate_two_dataset}Let $\mathcal{F}$ be a
space of functions such that $\left\Vert g\right\Vert _{\infty}\leq B$
for all $g\in\mathcal{F}$. Given a true function $f^{*}$ such that
$\left\Vert f\right\Vert _{\infty}\leq B$, we have that with probability
at least $1-p$ over the sampling of two datasets $S,S'$ from a distribution
$\pi$, one has for all $g\in\mathcal{F}$:
\[
\left\Vert g-f^{*}\right\Vert _{S'}-\left\Vert g-f^{*}\right\Vert _{S}\leq2\min_{\epsilon}\epsilon+B\sqrt{2\frac{\log\mathcal{N}(\mathcal{F},\epsilon)-\log p}{N}}.
\]
\end{thm}

\begin{proof}
We start with a Chernoff bound: 
\[
\mathbb{P}_{S,S'}\left[\sup_{g\in\mathcal{F}}\left\Vert g-f^{*}\right\Vert _{S'}-\left\Vert g-f^{*}\right\Vert _{S}\geq c\right]\leq\mathbb{E}_{S,S'}\left[\exp\left(t\sup_{g\in\mathcal{F}}\left\Vert g-f^{*}\right\Vert _{S'}-\left\Vert g-f^{*}\right\Vert _{S}\right)\right]e^{-tc}
\]
Defining $M(t)=\mathbb{E}_{S,S'}\left[\exp\left(t\sup_{g\in\mathcal{F}}\left\Vert g-f^{*}\right\Vert _{S'}-\left\Vert g-f^{*}\right\Vert _{S}\right)\right]$,
this can be reformulated as saying that with probability $1-p$, we
have that for all $g\in\mathcal{F}$
\[
\left\Vert g-f^{*}\right\Vert _{S'}-\left\Vert g-f^{*}\right\Vert _{S}<\min_{t}\frac{\log M(t)-\log p}{t}.
\]

Let us now bound $M(t)$. For any pair of datasets $S,S'$, we consider
the $\epsilon$-covering $\tilde{\mathcal{F}}$ of $\mathcal{F}$
w.r.t. to the measure $\frac{S+S'}{2}$ which is the empirical measure
corresponding to the dataset of size $2N$ made up of the union of
$S$ and $S'$. Thus for any function $g\in\mathcal{F}$, there is
a $\tilde{g}\in\tilde{\mathcal{F}}$ such that $\left\Vert g-\tilde{g}\right\Vert _{\frac{S+S'}{2}}\leq\epsilon$
and $\left|\tilde{\mathcal{F}}\right|\leq\mathcal{N}(\mathcal{F},\epsilon)$.

We can now bound

\begin{align*}
M(t) & =\mathbb{E}_{S,S'}\left[\exp\left(t\sup_{g\in\mathcal{F}}\left\Vert g-f^{*}\right\Vert _{S'}-\left\Vert g-f^{*}\right\Vert _{S}\right)\right]\\
 & \leq\mathbb{E}_{S,S'}\left[\exp\left(t\sup_{g\in\mathcal{F}}\left\Vert \tilde{g}-g\right\Vert _{S}+\left\Vert \tilde{g}-g\right\Vert _{S'}+\left\Vert \tilde{g}-f^{*}\right\Vert _{S'}-\left\Vert \tilde{g}-f^{*}\right\Vert _{S}\right)\right]\\
 & \leq e^{2t\epsilon}\mathbb{E}_{S,S'}\left[\exp\left(t\sup_{g\in\mathcal{F}}\left\Vert \tilde{g}-f^{*}\right\Vert _{S'}-\left\Vert \tilde{g}-f^{*}\right\Vert _{S}\right)\right]\\
 & =e^{2t\epsilon}\mathbb{E}_{S,S'}\left[\sup_{g\in\mathcal{F}}\exp\left(t\frac{\sum_{i=1}^{N}\left\Vert \tilde{g}(x_{i})-f^{*}(x_{i})\right\Vert ^{2}-\left\Vert \tilde{g}(x_{i}')-f^{*}(x_{i}')\right\Vert ^{2}}{N\left(\left\Vert \tilde{g}-f^{*}\right\Vert _{S'}+\left\Vert \tilde{g}-f^{*}\right\Vert _{S}\right)}\right)\right]\\
 &\leq e^{2t\epsilon}\mathbb{E}_{S,S'}\left[\sup_{g\in\mathcal{F}}\exp\left(t\frac{\sum_{i=1}^{N}\left\Vert \tilde{g}(x_{i})-f^{*}(x_{i})\right\Vert ^{2}-\left\Vert \tilde{g}(x_{i}')-f^{*}(x_{i}')\right\Vert ^{2}}{\sqrt{2}N\left\Vert \tilde{g}-f^{*}\right\Vert _{\frac{S+S'}{2}}}\right)\right]
\end{align*}
where we used the fact that $\left\Vert h\right\Vert _{S}+\left\Vert h\right\Vert _{S'}\leq\sqrt{2\left\Vert h\right\Vert _{S}^{2}+2\left\Vert h\right\Vert _{S'}^{2}}=2\left\Vert h\right\Vert _{\frac{S+S'}{2}}\leq2\epsilon$ and conversely $\left\Vert h\right\Vert _{S}+\left\Vert h\right\Vert _{S'} \geq \sqrt{\left\Vert h\right\Vert^2_{S}+\left\Vert h\right\Vert^2_{S'}} = \sqrt{2} \left\Vert h\right\Vert_{\frac{S+S'}{2}}$. We may now introduce i.i.d. Rademacher
random variables $\sigma_{i}$ equal to $1$ and $-1$ with prob. $\frac{1}{2}$:

\begin{align*}
M(t)  & \leq e^{2t\epsilon}\mathbb{E}_{S,S'}\left[\mathbb{E}_{\sigma}\left[\sup_{\tilde{g}\in\tilde{\mathcal{F}}}\exp\left(t\frac{\sum_{i=1}^{N}\sigma_{i}\left(\left\Vert \tilde{g}(x_{i})-f^{*}(x_{i})\right\Vert ^{2}-\left\Vert \tilde{g}(x_{i}')-f^{*}(x_{i}')\right\Vert ^{2}\right)}{\sqrt{2}N\left\Vert \tilde{g}-f^{*}\right\Vert _{\frac{S+S'}{2}}}\right)\right]\right]\\
 & \leq e^{2t\epsilon}\mathbb{E}_{S,S'}\left[\sum_{\tilde{g}\in\tilde{\mathcal{F}}}\prod_{i=1}^{N}\mathbb{E}_{\sigma_{i}}\left[\exp\left(\frac{t\sigma_{i}\left(\left\Vert \tilde{g}(x_{i})-f^{*}(x_{i})\right\Vert ^{2}-\left\Vert \tilde{g}(x_{i}')-f^{*}(x_{i}')\right\Vert ^{2}\right)}{\sqrt{2}N\left\Vert \tilde{g}-f^{*}\right\Vert _{\frac{S+S'}{2}}}\right)\right]\right].
\end{align*}

To understand why adding the Rademacher random variables $\sigma_{i}$
does not change the expectation, one can think of these random variables
as randomly switching the $i$-th datapoints $x_{i}$ from $S$ and
$x'_{i}$ from $S'$ with each other (not that these switches leave the denominator unchanged too). Sampling two datasets $S$ and
$S'$ followed by random switches is equivalent to sampling without
the switches.

We now use Hoeffdings Lemma, which tells us that for any random var
$X$ with $\mathbb{E}X=0$ and $X\in[-b,b]$ a.s. one has $\mathbb{E}\exp tX\leq\exp\frac{t^{2}b^{2}}{2}$
to obtain
\begin{align*}
M(t) & \leq e^{2t\epsilon}\mathbb{E}_{S,S'}\left[\sum_{\tilde{g}\in\tilde{\mathcal{F}}}\exp\left(\frac{t^{2}\sum_{i=1}^{N}\left(\left\Vert \tilde{g}(x_{i})-f^{*}(x_{i})\right\Vert ^{2}-\left\Vert \tilde{g}(x'_{i})-f^{*}(x'_{i})\right\Vert ^{2}\right)^{2}}{2 N^{2}\left\Vert \tilde{g}-f^{*}\right\Vert _\frac{S+S'}{2}^{2}}\right)\right]\\
 & \leq e^{2t\epsilon}\mathbb{E}_{S,S'}\left[\sum_{\tilde{g}\in\tilde{\mathcal{F}}}\exp\left(\frac{t^{2}B^{2}\sum_{i=1}^{N}\left\Vert \tilde{g}(x_{i})-f^{*}(x_{i})\right\Vert ^{2}+\left\Vert \tilde{g}(x'_{i})-f^{*}(x'_{i})\right\Vert ^{2}}{ N^{2}\left\Vert \tilde{g}-f^{*}\right\Vert _\frac{S+S'}{2}^{2}}\right)\right]\\
 & =e^{2t\epsilon}\mathbb{E}_{S,S'}\left[\sum_{\tilde{g}\in\tilde{\mathcal{F}}}\exp\left(\frac{t^{2}B^{2}\left(\left\Vert \tilde{g}-f^{*}\right\Vert _{S}^{2}+\left\Vert \tilde{g}-f^{*}\right\Vert _{S'}^{2}\right)}{ N\left\Vert \tilde{g}-f^{*}\right\Vert _\frac{S+S'}{2}^{2}}\right)\right]\\
 & \leq e^{2t\epsilon}\mathbb{E}_{S,S'}\left[\sum_{\tilde{g}\in\tilde{\mathcal{F}}}\exp\left(\frac{t^{2}2B^{2}}{N}\right)\right]\\
 & \leq\mathcal{N}(\mathcal{F},\epsilon)\exp\left(2t\epsilon+\frac{t^{2}2B^{2}}{N}\right).
\end{align*}

This bound on $M(t)$ then implies that with probability at least
$1-p$ over the sampling of $S,S'$ for all $g\in\mathcal{F}$
\begin{align*}
\left\Vert g-f^{*}\right\Vert _{S'}-\left\Vert g-f^{*}\right\Vert _{S} & \leq\min_{t}\frac{\log M(t)-\log p}{t}\\
 & \leq\min_{t,\epsilon}2\epsilon+\frac{2B^{2}t}{N}+\frac{\log\mathcal{N}(\mathcal{F},\epsilon)-\log p}{t}\\
 & =2\min_{\epsilon}\epsilon+B\sqrt{2\frac{\log\mathcal{N}(\mathcal{F},\epsilon)-\log p}{N}}.
\end{align*}
\end{proof}

We can then use the above to bound the generalization error uniformly:
\begin{thm}\label{thm:covering_MSE_fast_rate}
Under the same setup as Theorem \ref{thm:covering_fast_rate_two_dataset},
we have that with prob. $1-p$ for all $g\in\mathcal{F}$
\[
\left\Vert g-f^{*}\right\Vert _{\pi}-\left\Vert g-f^{*}\right\Vert _{S}\leq\min_{\epsilon}2\epsilon+B\sqrt{8\frac{\log\mathcal{N}(\mathcal{F},\epsilon)}{N}}+cB\sqrt{\frac{-\log\nicefrac{p}{2}}{N}},
\]
for $c=\sqrt{2}(3+\sqrt{5})$.
\end{thm}

\begin{proof}
Our goal is to approximate the population error $\left\Vert f^{*}-g\right\Vert _{\pi}^{2}$
by the test error $\left\Vert f^{*}-g\right\Vert _{S'}^{2}$ for a
sample $S'$ independent of $g$, and this test error is itself close to the train error thanks to Theorem \ref{thm:covering_fast_rate_two_dataset}. 

We use Bennett's inequality which relies on the fact that the function $h(x)=\left\Vert f^{*}(x)-g(x)\right\Vert ^{2}$
is bounded by $4B^{2}$:
\begin{align*}
\mathbb{P}\left[\left\Vert f^{*}-g\right\Vert _{\pi}^{2}-\left\Vert f^{*}-g\right\Vert _{S'}^{2}\geq a\right] & =\mathbb{P}\left[\sum_{i=1}^{N}h(x_{i})-N\mathbb{E}h(x)\geq Na\right]\\
 & \leq\exp\left(-\frac{N\mathbb{E}_{\pi}\left[(h(x)-\mathbb{E}h(x))^{2}\right]}{(4B^{2})^{2}}\phi\left(\frac{4B^{2}a}{\mathbb{E}_{\pi}\left[(h(x)-\mathbb{E}h(x))^{2}\right]}\right)\right)
\end{align*}
for $\phi(u)=(1+u)\log(1+u)-u$.

Lemma \ref{lem:bound_phi} gives us the bound $\phi^{-1}(v)\leq\sqrt{2v}+2v$,
which implies that with probability $p$
\begin{align*}
\left\Vert f^{*}-g\right\Vert _{\pi}^{2}-\left\Vert f^{*}-g\right\Vert _{S'}^{2} & \leq\frac{\mathbb{E}_{\pi}\left[(h(x)-\mathbb{E}h(x))^{2}\right]}{4B^{2}}\left(\sqrt{\frac{-2(4B^{2})^{2}\log p}{N\mathbb{E}_{\pi}\left[(h(x)-\mathbb{E}h(x))^{2}\right]}}+\frac{-2(4B^{2})^{2}\log p}{N\mathbb{E}_{\pi}\left[(h(x)-\mathbb{E}h(x))^{2}\right]}\right)\\
 & =\sqrt{2\mathbb{E}_{\pi}\left[(h(x)-\mathbb{E}h(x))^{2}\right]\frac{-\log p}{N}}+8B^{2}\frac{-\log p}{N}.
\end{align*}
We now use the facts that $\mathbb{E}_{\pi}\left[(h(x)-\mathbb{E}h(x))^{2}\right]\leq\mathbb{E}_{\pi}\left[h(x)^{2}\right]\leq4B^{2}\mathbb{E}_{\pi}\left[h(x)\right]=4B^{2}\left\Vert g-f^{*}\right\Vert _{\pi}^{2}$
to obtain
\[
\left\Vert f^{*}-g\right\Vert _{\pi}^{2}-\left\Vert f^{*}-g\right\Vert _{S'}^{2}\leq B\left\Vert g-f^{*}\right\Vert _{\pi}\sqrt{8\frac{-\log p}{N}}+8B^{2}\frac{-\log p}{N}.
\]

Reordering this is implies

\[
\left\Vert f^{*}-g\right\Vert _{\pi}^{2}-\left\Vert f^{*}-g\right\Vert _{\pi}\sqrt{8B^{2}\frac{-\log p}{N}}-\left\Vert f^{*}-g\right\Vert _{S'}^{2}-8B^{2}\frac{-\log p}{N}\leq0
\]
which by the quadratic formula yields the inequality
\begin{align*}
\left\Vert f^{*}-g\right\Vert _{\pi} & \leq\sqrt{2B^{2}\frac{-\log p}{N}}+\sqrt{2B^{2}\frac{-\log p}{N}+\left\Vert f^{*}-g\right\Vert _{S'}^{2}+8B^{2}\frac{-\log p}{N}}\\
 & \leq\left\Vert f^{*}-g\right\Vert _{S'}+(\sqrt{2}+\sqrt{10})B\sqrt{\frac{-\log p}{N}}
\end{align*}

This implies that with probability $1-2p$ we have that for all $g\in\mathcal{F}$,
\begin{align*}
\left\Vert g-f^{*}\right\Vert _{\pi}-\left\Vert g-f^{*}\right\Vert _{S} & \leq\left\Vert f^{*}-g\right\Vert _{S'}-\left\Vert g-f^{*}\right\Vert _{S}+(\sqrt{2}+\sqrt{10})B\sqrt{\frac{-\log p}{N}}\\
 & \leq\min_{\epsilon}2\epsilon+2B\sqrt{2\frac{\log\mathcal{N}(\mathcal{F},\epsilon)-\log p}{N}}+(\sqrt{2}+\sqrt{10})B\sqrt{\frac{-\log p}{N}}\\
 & \leq\min_{\epsilon}2\epsilon+B\sqrt{8\frac{\log\mathcal{N}(\mathcal{F},\epsilon)}{N}}+\sqrt{2}(3+\sqrt{5})B\sqrt{\frac{-\log p}{N}}.
\end{align*}
\end{proof}

Let us now prove the lemma we used in the proof:
\begin{lem}
\label{lem:bound_phi}The function $\phi(u)=(1+u)\log(1+u)-u$ satisfies
\[
\phi(u)\leq\frac{1}{1+u}\frac{u^{2}}{2}.
\]
It is also invertible over the non-negative reals and its inverse
satisfies
\[
\phi^{-1}(v)\leq\sqrt{2v}+2v.
\]
\end{lem}

\begin{proof}
To obtain the first inequality, we compute the derivatives of $\phi$:
\begin{align*}
\phi'(u) & =\log(1+u)+1-1=\log(1+u)\\
\phi''(u) & =\frac{1}{1+u}.
\end{align*}
which implies that
\[
\phi(u)=\int_{0}^{u}\int_{0}^{v}\frac{1}{1+w}dwdv\geq\frac{1}{1+u}\int_{0}^{u}\int_{0}^{v}dwdv=\frac{1}{1+u}\frac{u^{2}}{2}.
\]

Solving for a non-negative $u$ in 
\[
v=\frac{1}{1+u}\frac{u^{2}}{2}
\]
yields
\begin{align*}
 & 2v =\frac{1}{u^{-2}+u^{-1}}\\
\implies & u^{-2}+u^{-1}-\frac{1}{2v} =0\\
\implies &u^{-1} =\frac{-1+\sqrt{1+\frac{2}{v}}}{2}\\
\implies &u =\frac{2\sqrt{v}}{\sqrt{2+v}-\sqrt{v}} = \sqrt{v}(\sqrt{2+v}+\sqrt{v})
\end{align*}
which implies that
\[
\phi^{-1}(v)\leq\sqrt{v}(\sqrt{2+v}+\sqrt{v})\leq\sqrt{2v}+2v.
\]
\end{proof}

\begin{example}
\label{exa:example_use_covering_number}As an example consider a function
space $\mathcal{F}$ whose log covering number (this is also called
the metric entropy) grows polynomially:
\[
\log\mathcal{N}_{2}(\mathcal{F},\epsilon)\leq c\epsilon^{-r}.
\]
Therefore we get that
\[
\left\Vert f^{*}-f\right\Vert _{\pi}-\left\Vert f^{*}-f\right\Vert _{S}\leq\min_{\epsilon}2\epsilon+\sqrt{8}B\frac{\sqrt{c}}{\sqrt{\epsilon^{r}N}}+c_{1}B\sqrt{\frac{-\log \nicefrac{p}{2}}{N}}.
\]
Choosing $\epsilon=\left(2B^{2}\frac{c}{N}\right)^{\frac{1}{2+r}}$
(chosen so that the two terms depending on $\epsilon$ match), we
obtain
\[
\left\Vert f^{*}-f\right\Vert _{\pi}-\left\Vert f^{*}-f\right\Vert _{S}\leq2\left(2B^{2}\frac{c}{N}\right)^{\frac{1}{2+r}}+c_{1}B\sqrt{\frac{-\log \nicefrac{p}{2}}{N}},
\]
which is of order $N^{-\frac{1}{2+r}}$. 

Assuming that the train error is zero (or sufficiently small, i.e.
$O(N^{-\frac{1}{2+r}})$), we get that 
\[
\left\Vert f^{*}-f\right\Vert _{\pi}^{2}=O(N^{-\frac{2}{2+r}}).
\]
\end{example}

\section{AccNet Generalization Bounds \label{sec:AccNet-Generalization-Bounds}}

The proof of generalization for shallow networks (Theorem \ref{thm:generalization_gap_shallow})
is the special case $L=1$ of the proof of Theorem \ref{thm:generalization_gap_deep_AccNets},
so we only prove the second:
\begin{thm}
Consider an accordion net of depth $L$ and widths $d_{L},\dots,d_{0}$,
with corresponding set of functions $\mathcal{F}=\{\sigma_L \circ f_{L:1}:\left\Vert f_{\ell}\right\Vert _{F_{1}}\leq R_{\ell},\text{Lip}(f_{\ell})\leq\rho_{\ell}\}$
with input space $\Omega=B(0,r)$ and with a final nonlinearity $\sigma_L$ that is $1$-Lipschitz and uniformly bounded by $B$. Then with probability $1-p$ over the sampling of the
training set $X$ from the distribution $\pi$, we have for all $f\in\mathcal{F}$
\begin{align*}
\sqrt{\mathcal{L}(f)}-\sqrt{\tilde{\mathcal{L}}_{N}(f)} & \leq c_{0}\sqrt{B\rho_{L:1}r\sum_{\ell'=1}^{L}\frac{R_{\ell'}}{\rho_{\ell'}}\sqrt{d_{\ell'}+d_{\ell'-1}}N^{-\frac{1}{2}}}(1+o(1))+c_{1}B\sqrt{\frac{-\log \nicefrac{p}{2}}{N},}
\end{align*}
for $c_{0}\approx14.6$
and $c_{1}\approx 7.4$. Thus if $\sqrt{\tilde{\mathcal{L}}_{N}(f)}\leq c_{0}\sqrt{B\rho_{L:1}r\sum_{\ell'=1}^{L}\frac{R_{\ell'}}{\rho_{\ell'}}\sqrt{d_{\ell'}+d_{\ell'-1}}N^{-\frac{1}{2}}}+c_{1}B\sqrt{\frac{-\log \nicefrac{p}{2}}{N}}$ we have
\begin{align*}
\mathcal{L}(f) & \leq8c_{0}^{2}B\rho_{L:1}r\sum_{\ell'=1}^{L}\frac{R_{\ell'}}{\rho_{\ell'}}\sqrt{d_{\ell'}+d_{\ell'-1}}N^{-\frac{1}{2}}(1+o(1))+8c_{1}^{2}B^{2}\frac{-\log \nicefrac{p}{2}}{N}.
\end{align*}

\end{thm}

\begin{proof}
The strategy is: (1) prove a covering number bound on $\mathcal{F}$,
(2) use it to bound the generalization error.

(1) We define $f_{\ell}=V_{\ell}\circ\sigma\circ W_{\ell}$ so that
$f_{\theta}=f_{L:1}=f_{L}\circ\cdots\circ f_{1}$. First notice that
we can write each $f_{\ell}$ as convex combination of its neurons:
\[
f_{\ell}(x)=\sum_{i=1}^{w_{\ell}}v_{\ell,i}\sigma(w_{\ell,i}^{T}x)=R_{\ell}\sum_{i=1}^{w_{\ell}}c_{\ell,i}\bar{v}_{\ell,i}\sigma(\bar{w}_{\ell,i}^{T}x)
\]
for $\bar{w}_{\ell,i}=\frac{w_{\ell,i}}{\left\Vert w_{\ell,i}\right\Vert }$,
$\bar{v}_{\ell,i}=\frac{v_{\ell,i}}{\left\Vert v_{\ell,i}\right\Vert }$,
$R_{\ell}=\sum_{i=1}^{\ell}\left\Vert v_{\ell,i}\right\Vert \left\Vert w_{\ell,i}\right\Vert $
and $c_{\ell,i}=\frac{1}{R_{\ell}}\left\Vert v_{\ell,i}\right\Vert \left\Vert w_{\ell,i}\right\Vert $.

Let us now consider a sequence $\epsilon_{k}=2^{-k}$ for $k=0,\dots,K$
and define $\tilde{v}_{\ell,i}^{(k)},\tilde{w}_{\ell,i}^{(k)}$ to
be the $\epsilon_{k}$-covers of $\bar{v}_{\ell,i},\bar{w}_{\ell,i}$,
furthermore we may choose $\tilde{v}_{\ell,i}^{(0)}=\tilde{w}_{\ell,i}^{(0)}=0$
since every unit vector is within a $\epsilon_{0}=1$ distance of
the origin. We will now show that one can approximate $f_{\theta}$
by approximating each of the $f_{\ell}$ by functions of the form
\[
\tilde{f}_{\ell}(x)=R_{\ell}\sum_{k=1}^{K_{\ell}}\frac{1}{M_{k,\ell}}\sum_{m=1}^{M_{k,\ell}}\tilde{v}_{\ell,i_{\ell,m}^{(k)}}^{(k)}\sigma(\tilde{w}_{\ell,i_{\ell,m}^{(k)}}^{(k)T}x)-\tilde{v}_{\ell,i_{\ell,m}^{(k)}}^{(k-1)}\sigma(\tilde{w}_{\ell,i_{\ell,m}^{(k)}}^{(k-1)T}x)
\]
for indices $i_{\ell,m}^{(k)}=1,\dots,w_{\ell}$ choosen adequately.
Notice that the number of functions of this type equals the number
of $M_{k,\ell}$ quadruples $(\tilde{v}_{\ell,i_{\ell,m}^{(k)}}^{(k)},\tilde{w}_{\ell,i_{\ell,m}^{(k)}}^{(k)T},\tilde{v}_{\ell,i_{\ell,m}^{(k)}}^{(k-1)},\tilde{w}_{\ell,i_{\ell,m}^{(k)}}^{(k-1)T})$
where these vectors belong to the $\epsilon_{k}$- resp. $\epsilon_{k-1}$-coverings
of the $d_{in}$- resp. $d_{out}$-dimensional unit sphere. Thus the
number of such functions is bounded by 
\[
\prod_{k=1}^{K_{\ell}}\left(\mathcal{N}_{2}(\mathbb{S}^{d_{in}-1},\epsilon_{k})\mathcal{N}_{2}(\mathbb{S}^{d_{out}-1},\epsilon_{k})\mathcal{N}_{2}(\mathbb{S}^{d_{in}-1},\epsilon_{k-1})\mathcal{N}_{2}(\mathbb{S}^{d_{out}-1},\epsilon_{k-1})\right)^{M_{k,\ell}},
\]
and we have this choice for all $\ell=1,\dots,L$. We will show that
with sufficiently large $M_{k,\ell}$ this set of functions $\epsilon$-covers
$\mathcal{F}$ which then implies that 
\[
\log\mathcal{N}_{2}(\mathcal{F},\epsilon)\leq2\sum_{\ell=1}^{L}\sum_{k=1}^{K_{\ell}}M_{k,\ell}\left(\log\mathcal{N}_{2}(\mathbb{S}^{d_{in}-1},\epsilon_{k-1})+\log\mathcal{N}_{2}(\mathbb{S}^{d_{in}-1},\epsilon_{k-1})\right).
\]

We will use the probabilistic method to find the right indices $i_{\ell,m}^{(k)}$
to approximate a function $f_{\ell}=R_{\ell}\sum_{i=1}^{w_{\ell}}c_{\ell,i}\bar{v}_{\ell,i}\sigma(\bar{w}_{\ell,i}^{T}x)$
with a function $\tilde{f}_{\ell}$. We take all $i_{\ell,m}^{(k)}$
to be i.i.d. equal to the index $i=1,\cdots,w_{\ell}$ with probability
$c_{\ell,i}$, so that in expectation
\begin{align*}
\mathbb{E}\tilde{f}_{\ell}(x) & =R_{\ell}\sum_{k=1}^{K_{\ell}}\sum_{i=1}^{w_{\ell}}c_{\ell,i}\left(\tilde{v}_{\ell,i}^{(k)}\sigma(\tilde{w}_{\ell,i}^{(k)T}x)-\tilde{v}_{\ell,i}^{(k-1)}\sigma(\tilde{w}_{\ell,i}^{(k-1)T}x)\right)\\
 & =R_{\ell}\sum_{i=1}^{w_{\ell}}c_{\ell,i}\tilde{v}_{\ell,i}^{(K)}\sigma(\tilde{w}_{\ell,i}^{(K)T}x).
\end{align*}
We will show that this expectation is $O(\epsilon_{K_{\ell}})$-close
to $f_{\ell}$ and that the variance of $\tilde{f}_{\ell}$ goes to
zero as the $M_{\ell,k}$s grow, allowing us to bound the expected
error $\mathbb{E}\left\Vert f_{L:1}-\tilde{f}_{L:1}\right\Vert _{\pi}^{2}\leq\text{\ensuremath{\epsilon}}^{2}$
which then implies that there must be at least one choice of indices
$i_{\ell,m}^{(k)}$ such that $\left\Vert f_{L:1}-\tilde{f}_{L:1}\right\Vert _{\pi}\leq\epsilon$.

Let us first bound the distance 
\begin{align*}
\left\Vert f_{\ell}(x)-\mathbb{E}\tilde{f}_{\ell}(x)\right\Vert  & =R_{\ell}\left\Vert \sum_{i=1}^{w_{\ell}}c_{\ell,i}\left(\bar{v}_{\ell,i}\sigma(\bar{w}_{\ell,i}^{T}x)-\tilde{v}_{\ell,i}^{(K)}\sigma(\tilde{w}_{\ell,i}^{(K)T}x)\right)\right\Vert \\
 & \leq R_{\ell}\sum_{i=1}^{w_{\ell}}c_{\ell,i}\left(\left\Vert \left(\bar{v}_{\ell,i}-\tilde{v}_{\ell,i}^{(K)}\right)\sigma(\bar{w}_{\ell,i}^{T}x)\right\Vert +\left\Vert \tilde{v}_{\ell,i}^{(K)}\left(\sigma(\bar{w}_{\ell,i}^{T}x)-\sigma(\tilde{w}_{\ell,i}^{(K)T}x)\right)\right\Vert \right)\\
 & \leq R_{\ell}\sum_{i=1}^{w_{\ell}}c_{\ell,i}\left(\left\Vert \bar{v}_{\ell,i}-\tilde{v}_{\ell,i}^{(K)}\right\Vert \left\Vert \bar{w}_{\ell,i}^{T}x\right\Vert +\left\Vert \tilde{v}_{\ell,i}^{(K)}\right\Vert \left\Vert \bar{w}_{\ell,i}^{T}x-\tilde{w}_{\ell,i}^{(K)T}x\right\Vert \right)\\
 & \leq2R_{\ell}\sum_{i=1}^{w_{\ell}}c_{\ell,i}\epsilon_{K_{\ell}}\left\Vert x\right\Vert \\
 & =2R_{\ell}\epsilon_{K_{\ell}}\left\Vert x\right\Vert .
\end{align*}
Then we bound the trace of the covariance of $\tilde{f}_{\ell}$ which
equals the expected square distance between $\tilde{f}_{\ell}$ and
its expectation: 
\begin{align*}
&\mathbb{E}\left\Vert \tilde{f}_{\ell}(x)-\mathbb{E}\tilde{f}_{\ell}(x)\right\Vert ^{2} \\ & =\sum_{k=1}^{K_{\ell}}\frac{R_{\ell}^{2}}{M_{k,\ell}^{2}}\sum_{m=1}^{M_{k,\ell}}\mathbb{E}\left\Vert \tilde{v}_{\ell,i_{\ell,m}^{(k)}}^{(k)}\sigma(\tilde{w}_{\ell,i_{\ell,m}^{(k)}}^{(k)T}x)-\tilde{v}_{\ell,i_{\ell,m}^{(k)}}^{(k-1)}\sigma(\tilde{w}_{\ell,i_{\ell,m}^{(k)}}^{(k-1)T}x)-\mathbb{E}\left[\tilde{v}_{\ell,i_{\ell,m}^{(k)}}^{(k)}\sigma(\tilde{w}_{\ell,i_{\ell,m}^{(k)}}^{(k)T}x)-\tilde{v}_{\ell,i_{\ell,m}^{(k)}}^{(k-1)}\sigma(\tilde{w}_{\ell,i_{\ell,m}^{(k)}}^{(k-1)T}x)\right]\right\Vert ^{2}\\
 & \leq\sum_{k=1}^{K_{\ell}}\frac{R_{\ell}^{2}}{M_{k,\ell}^{2}}\sum_{m=1}^{M_{k,\ell}}\mathbb{E}\left\Vert \tilde{v}_{\ell,m}^{(k)}\sigma(\tilde{w}_{\ell,m}^{(k)T}x)-\tilde{v}_{\ell,m}^{(k-1)}\sigma(\tilde{w}_{\ell,m}^{(k-1)T}x)\right\Vert ^{2}\\
 & =\sum_{k=1}^{K_{\ell}}\frac{R_{\ell}^{2}}{M_{k,\ell}}\sum_{i=1}^{w_{\ell}}c_{i}\left\Vert \tilde{v}_{\ell,i}^{(k)}\sigma\left(\tilde{w}_{\ell,i}^{(k)T}x\right)-\tilde{v}_{\ell,i}^{(k-1)}\sigma\left(\tilde{w}_{\ell,i}^{(k-1)T}x\right)\right\Vert ^{2}\\
 & \leq\sum_{k=1}^{K_{\ell}}\frac{2R_{\ell}^{2}\left\Vert x\right\Vert ^{2}}{M_{k,\ell}}\sum_{i=1}^{w_{\ell}}c_{i}\left\Vert \tilde{v}_{\ell,i}^{(k)}\right\Vert ^{2}\left\Vert \tilde{w}_{\ell,i}^{(k)}-\tilde{w}_{\ell,i}^{(k-1)}\right\Vert ^{2}+c_{i}\left\Vert \tilde{v}_{\ell,i}^{(k)}-\tilde{v}_{\ell,i}^{(k-1)}\right\Vert ^{2}\left\Vert \tilde{w}_{\ell,i}^{(k-1)}\right\Vert ^{2}\\
 & \leq\sum_{k=1}^{K_{\ell}}\frac{4R_{\ell}^{2}\left\Vert x\right\Vert ^{2}}{M_{k,\ell}}\left(\epsilon_{k}+\epsilon_{k-1}\right)^{2}\\
 & \leq\sum_{k=1}^{K_{\ell}}\frac{36R_{\ell}^{2}\left\Vert x\right\Vert ^{2}}{M_{k,\ell}}\epsilon_{k}^{2}.
\end{align*}
Putting both together, we obtain
\begin{align*}
\mathbb{E}\left\Vert f_{\ell}(x)-\tilde{f}_{\ell}(x)\right\Vert ^{2} & \leq4R_{\ell}^{2}\epsilon_{K_{\ell}}^{2}\left\Vert x\right\Vert ^{2}+\sum_{k=1}^{K_{\ell}}\frac{36R_{\ell}^{2}\left\Vert x\right\Vert ^{2}}{M_{k,\ell}}\epsilon_{k}^{2}\\
 & =4R_{\ell}^{2}\left\Vert x\right\Vert ^{2}\left(\epsilon_{K_{\ell}}^{2}+9\sum_{k=1}^{K_{\ell}}\frac{\epsilon_{k}^{2}}{M_{k,\ell}}\right).
\end{align*}

We will now use this bound, together with the Lipschitzness of $f_{\ell}$
to bound the error $\mathbb{E}\left\Vert f_{L:1}(x)-\tilde{f}_{L:1}(x)\right\Vert ^{2}$.
We will do this by induction on the distances $\mathbb{E}\left\Vert f_{\ell:1}(x)-\tilde{f}_{\ell:1}(x)\right\Vert ^{2}$.
We start by
\[
\mathbb{E}\left\Vert f_{1}(x)-\tilde{f}_{1}(x)\right\Vert ^{2}\leq4R_{1}^{2}\left\Vert x\right\Vert ^{2}\left(\epsilon_{K_{\ell}}^{2}+9\sum_{k=1}^{K_{\ell}}\frac{\epsilon_{k}^{2}}{M_{k,1}}\right).
\]
And for the induction step, we condition on the first $\ell-1$ layers
\begin{align*}
\mathbb{E}\left\Vert f_{\ell:1}(x)-\tilde{f}_{\ell:1}(x)\right\Vert ^{2} & =\mathbb{E}\left[\mathbb{E}\left[\left\Vert f_{\ell:1}(x)-\tilde{f}_{\ell:1}(x)\right\Vert ^{2}|\tilde{f}_{\ell-1:1}\right]\right]\\
 & =\mathbb{E}\left\Vert f_{\ell:1}(x)-\mathbb{E}\left[\tilde{f}_{\ell:1}(x)|\tilde{f}_{\ell-1:1}\right]\right\Vert ^{2}+\mathbb{E}\mathbb{E}\left[\left\Vert \tilde{f}_{\ell:1}(x)-\mathbb{E}\left[\tilde{f}_{\ell:1}(x)|\tilde{f}_{\ell-1:1}\right]\right\Vert ^{2}|\tilde{f}_{\ell-1:1}\right]\\
 & =\mathbb{E}\left\Vert f_{\ell:1}(x)-f_{\ell}(\tilde{f}_{\ell-1:1}(x))\right\Vert ^{2}+\mathbb{E}\mathbb{E}\left[\left\Vert \tilde{f}_{\ell:1}(x)-f_{\ell}(\tilde{f}_{\ell-1:1}(x))\right\Vert ^{2}|\tilde{f}_{\ell-1:1}\right]\\
 & \leq\rho_{\ell}^{2}\mathbb{E}\left\Vert f_{\ell-1:1}(x)-\tilde{f}_{\ell-1:1}(x)\right\Vert ^{2}+4R_{\ell}^{2}\mathbb{E}\left\Vert \tilde{f}_{\ell-1:1}(x)\right\Vert ^{2}\left(\epsilon_{K_{\ell}}^{2}+9\sum_{k=1}^{K_{\ell}}\frac{\epsilon_{k}^{2}}{M_{k,\ell}}\right).
\end{align*}
Now since
\begin{align*}
\mathbb{E}\left\Vert \tilde{f}_{\ell-1:1}(x)\right\Vert ^{2} & \leq\left\Vert f_{\ell-1:1}(x)\right\Vert ^{2}+\mathbb{E}\left\Vert f_{\ell-1:1}(x)-\tilde{f}_{\ell-1:1}(x)\right\Vert ^{2}\\
 & \leq\rho_{\ell-1}^{2}\cdots\rho_{1}^{2}\left\Vert x\right\Vert ^{2}+\mathbb{E}\left\Vert f_{\ell-1:1}(x)-\tilde{f}_{\ell-1:1}(x)\right\Vert ^{2}
\end{align*}
we obtain that
\begin{align*}
\mathbb{E}\left\Vert f_{\ell:1}(x)-\tilde{f}_{\ell:1}(x)\right\Vert ^{2} &\leq\left(\rho_{\ell}^{2}+4R_{\ell}^{2}\left(\epsilon_{K_{\ell}}^{2} +9\sum_{k=1}^{K_{\ell}}\frac{\epsilon_{k}^{2}}{M_{k,\ell}}\right)\right)\mathbb{E}\left\Vert f_{\ell-1:1}(x)-\tilde{f}_{\ell-1:1}(x)\right\Vert ^{2}\\ &+4R_{\ell}^{2}\rho_{\ell-1}^{2}\cdots\rho_{1}^{2}\left\Vert x\right\Vert ^{2}\left(\epsilon_{K_{\ell}}^{2}+9\sum_{k=1}^{K_{\ell}}\frac{\epsilon_{k}^{2}}{M_{k,\ell}}\right).
\end{align*}
We define $\tilde{\rho}_{\ell}^{2}=\rho_{\ell}^{2}\left[1+4\frac{R_{\ell}^{2}}{\rho_{\ell}^{2}}\left(\epsilon_{K_{\ell}}^{2}+9\sum_{k=1}^{K_{\ell}}\frac{\epsilon_{k}^{2}}{M_{k,\ell}}\right)\right]$
and obtain
\[
\mathbb{E}\left\Vert f_{L:1}(x)-\tilde{f}_{L:1}(x)\right\Vert ^{2}\leq4\sum_{\ell=1}^{L}\tilde{\rho}_{L:\ell+1}^{2}R_{\ell}^{2}\rho_{\ell-1:1}^{2}\left\Vert x\right\Vert ^{2}\left(\epsilon_{K_{\ell}}^{2}+9\sum_{k=1}^{K_{\ell}}\frac{\epsilon_{k}^{2}}{M_{k,\ell}}\right).
\]

Thus for any distribution $\pi$ over the ball $B(0,r)$, there is
a choice of indices $i_{\ell,m}^{(k)}$ such that 
\[
\left\Vert f_{L:1}-\tilde{f}_{L:1}\right\Vert _{\pi}^{2}\leq4\sum_{\ell=1}^{L}\tilde{\rho}_{L:\ell+1}^{2}R_{\ell}^{2}\rho_{\ell-1:1}^{2}r^{2}\left(\epsilon_{K_{\ell}}^{2}+9\sum_{k=1}^{K_{\ell}}\frac{\epsilon_{k}^{2}}{M_{k,\ell}}\right).
\]

We now simply need to choose $K_{\ell}$ and $M_{k,\ell}$ adequately.
To reach an error of $2\epsilon$, we choose
\begin{align*}
K_{\ell} & =\left\lceil -\log\epsilon+\frac{1}{2}\log\left[4\rho_{L:1}^{2}r^{2}\left(\sum_{\ell'=1}^{L}\frac{R_{\ell'}}{\rho_{\ell'}}\sqrt{d_{\ell'}+d_{\ell'-1}}\right)\frac{R_{\ell}}{\rho_{\ell}\sqrt{d_{\ell}+d_{\ell-1}}}\right]\right\rceil 
\end{align*}
where $\rho_{L:1}=\prod_{\ell=1}^{L}\rho_{\ell}$. Notice that that
$\epsilon_{K_{\ell}}^{2}\leq\frac{1}{4\rho_{L:1}^{2}r^{2}\left(\sum_{\ell'=1}^{L}\frac{R_{\ell'}}{\rho_{\ell'}}\sqrt{d_{\ell'}+d_{\ell'-1}}\right)}\frac{\rho_{\ell}\sqrt{d_{\ell}+d_{\ell-1}}}{R_{\ell}}\epsilon^{2}$. 

Given $s_{0}=\sum_{k=1}^{\infty}\sqrt{k}2^{-k}\approx1.3473<\infty$,
we define

\[
M_{k,\ell}=\left\lceil 36\rho_{L:1}^{2}r^{2}s_{0}\left(\sum_{\ell'=1}^{L}\frac{R_{\ell'}}{\rho_{\ell'}}\sqrt{d_{\ell'}+d_{\ell'-1}}\right)\frac{R_{\ell}}{\rho_{\ell}\sqrt{d_{\ell}+d_{\ell-1}}}\frac{2^{-k}}{\sqrt{k}}\frac{1}{\epsilon^{2}}\right\rceil .
\]
So that for all $\ell$
\begin{align*}
4\frac{R_{\ell}^{2}}{\rho_{\ell}^{2}}\left(\epsilon_{K_{\ell}}^{2}+9\sum_{k=1}^{K_{\ell}}\frac{\epsilon_{k}^{2}}{M_{k,\ell}}\right) & \leq\frac{\frac{R_{\ell}}{\rho_{\ell}}\sqrt{d_{\ell}+d_{\ell-1}}}{\rho_{L:1}^{2}r^{2}\left(\sum_{\ell'=1}^{L}\frac{R_{\ell}}{\rho_{\ell}}\sqrt{d_{\ell}+d_{\ell-1}}\right)}\epsilon^{2}\\
 & +\frac{\frac{R_{\ell}}{\rho_{\ell}}\sqrt{d_{\ell}+d_{\ell-1}}}{\rho_{L:1}^{2}r^{2}\left(\sum_{\ell'=1}^{L}\frac{R_{\ell}}{\rho_{\ell}}\sqrt{d_{\ell}+d_{\ell-1}}\right)}\epsilon^{2}\frac{\sum_{k'=1}^{K_{\ell}}\sqrt{k'}2^{-k'}}{s_{0}}\\
 & \leq2\frac{\frac{R_{\ell}}{\rho_{\ell}}\sqrt{d_{\ell}+d_{\ell-1}}}{\rho_{L:1}^{2}r^{2}\left(\sum_{\ell'=1}^{L}\frac{R_{\ell}}{\rho_{\ell}}\sqrt{d_{\ell}+d_{\ell-1}}\right)}\epsilon^{2}.
\end{align*}

Now this also implies that 
\[
\tilde{\rho}_{\ell}\leq\rho_{\ell}\exp\left(2\frac{\frac{R_{\ell}}{\rho_{\ell}}\sqrt{d_{\ell}+d_{\ell-1}}}{\rho_{L:1}^{2}r^{2}\left(\sum_{\ell'=1}^{L}\frac{R_{\ell}}{\rho_{\ell}}\sqrt{d_{\ell}+d_{\ell-1}}\right)}\epsilon^{2}\right)
\]
and thus
\[
\tilde{\rho}_{L:\ell+1}\leq\rho_{L:\ell+1}\exp\left(2\frac{\sum_{\ell'=\ell+1}^{L}\frac{R_{\ell}}{\rho_{\ell}}\sqrt{d_{\ell}+d_{\ell-1}}}{\rho_{L:1}^{2}r^{2}\left(\sum_{\ell'=1}^{L}\frac{R_{\ell}}{\rho_{\ell}}\sqrt{d_{\ell}+d_{\ell-1}}\right)}\epsilon^{2}\right)\leq\rho_{L:\ell+1}\exp\left(\frac{2}{\rho_{L:1}^{2}r^{2}}\epsilon^{2}\right).
\]
Putting it all together, we obtain that 
\begin{align*}
\left\Vert f_{L:1}-\tilde{f}_{L:1}\right\Vert _{\pi}^{2} & \leq4\sum_{\ell=1}^{L}\tilde{\rho}_{L:\ell+1}^{2}R_{\ell}^{2}\rho_{\ell-1:1}^{2}r^{2}\left(\epsilon_{K_{\ell}}^{2}+9\sum_{k=1}^{K_{\ell}}\frac{\epsilon_{k}^{2}}{M_{k,\ell}}\right)\\
 & \leq\exp\left(\frac{2}{\rho_{L:1}^{2}r^{2}}\epsilon^{2}\right)\rho_{L:1}^{2}r^{2}\sum_{\ell=1}^{L}4\frac{R_{\ell}^{2}}{\rho_{\ell}^{2}}\left(\epsilon_{K_{\ell}}^{2}+9\sum_{k=1}^{K_{\ell}}\frac{\epsilon_{k}^{2}}{M_{k,\ell}}\right)\\
 & \leq2\exp\left(\frac{2}{\rho_{L:1}^{2}r^{2}}\epsilon^{2}\right)\epsilon^{2}\\
 & =2\epsilon^{2}+O(\epsilon^{4}).
\end{align*}
Now since $\log\mathcal{N}_{2}(\mathbb{S}^{d_{\ell}-1},\epsilon)=d_{\ell}\log\left(\frac{1}{\epsilon}+1\right)$
and 
\[
M_{k,\ell}\leq36\rho_{L:1}^{2}r^{2}s_{0}\left(\sum_{\ell'=1}^{L}\frac{R_{\ell'}}{\rho_{\ell'}}\sqrt{d_{\ell'}+d_{\ell'-1}}\right)\frac{R_{\ell}}{\rho_{\ell}\sqrt{d_{\ell}+d_{\ell-1}}}\frac{2^{-k}}{\sqrt{k}}\frac{1}{\epsilon^{2}}+1,
\]
we have
\begin{align*}
\log\mathcal{N}_{2}(\mathcal{F},\sqrt{2}\exp\left(\frac{\epsilon^{2}}{\rho_{L:1}^{2}r^{2}}\right)\epsilon) & \leq2\sum_{\ell=1}^{L}\sum_{k=1}^{K_{\ell}}M_{k,\ell}\left(\log\mathcal{N}_{2}(\mathbb{S}^{d_{\ell}-1},\epsilon_{k-1})+\log\mathcal{N}_{2}(\mathbb{S}^{d_{\ell-1}-1},\epsilon_{k-1})\right)\\
 & \leq2\sum_{\ell=1}^{L}\sum_{k=1}^{K_{\ell}}M_{k,\ell}\left(d_{\ell}+d_{\ell-1}\right)\log(\frac{1}{\epsilon_{k-1}}+1)\\
 & \leq72s_{0}\rho_{L:1}^{2}r^{2}\left(\sum_{\ell'=1}^{L}\frac{R_{\ell'}}{\rho_{\ell'}}\sqrt{d_{\ell'}+d_{\ell'-1}}\right)\sum_{\ell=1}^{L}\frac{R_{\ell}}{\rho_{\ell}}\sqrt{d_{\ell}+d_{\ell-1}}\sum_{k=1}^{K_{\ell}}\frac{2^{-k}\log(\frac{1}{\epsilon_{k-1}}+1)}{\sqrt{k}}\frac{1}{\epsilon^{2}}\\
 & +2\sum_{\ell=1}^{L}\left(d_{\ell}+d_{\ell-1}\right)\sum_{k=1}^{K_{\ell}}\log(\frac{1}{\epsilon_{k-1}}+1)\\
 & \leq72s_{0}^{2}\rho_{L:1}^{2}r^{2}\left(\sum_{\ell'=1}^{L}\frac{R_{\ell'}}{\rho_{\ell'}}\sqrt{d_{\ell'}+d_{\ell'-1}}\right)^{2}\frac{1}{\epsilon^{2}}+o(\epsilon^{-2}).
\end{align*}

The diameter of $\mathcal{F}$ is smaller than $\rho_{L:1}r$, so
for all $\delta\geq\rho_{L:1}r$, $\log\mathcal{N}_{2}(\mathcal{F},\delta)=0$.
For all $\delta\leq\rho_{L:1}r$ we choose $\epsilon=\frac{\delta}{\sqrt{2e}}$
so that $\sqrt{2}\exp\left(\frac{\epsilon^{2}}{\rho_{L:1}^{2}r^{2}}\right)\epsilon\leq\delta$
and therefore 
\[
\log\mathcal{N}_{2}(\mathcal{F},\delta)\leq144s_{0}^{2}e\rho_{L:1}^{2}r^{2}\left(\sum_{\ell'=1}^{L}\frac{R_{\ell'}}{\rho_{\ell'}}\sqrt{d_{\ell'}+d_{\ell'-1}}\right)^{2}\frac{1}{\delta^{2}}+o(\delta^{-2}).
\]

(2) We now apply Theorem \ref{thm:covering_MSE_fast_rate} along the
lines of Example \ref{exa:example_use_covering_number} with $r=2$
and 
\[
c=144s_{0}^{2}e\rho_{L:1}^{2}r^{2}\left(\sum_{\ell'=1}^{L}\frac{R_{\ell'}}{\rho_{\ell'}}\sqrt{d_{\ell'}+d_{\ell'-1}}\right)^{2}\frac{1}{\delta^{2}},
\]
to obtain
\begin{align*}
\sqrt{\mathcal{L}(f)}-\sqrt{\tilde{\mathcal{L}}_{N}(f)} & \leq2 \sqrt{24 B s_{0}\sqrt{e}\rho_{L:1}r\sum_{\ell'=1}^{L}\frac{R_{\ell'}}{\rho_{\ell'}}\sqrt{d_{\ell'}+d_{\ell'-1}}}N^{-\frac{1}{4}}(1+o(1)) \\ 
 &+c_1 B\sqrt{\frac{-\log \nicefrac{p}{2}}{N}}\\
 & =c_0\sqrt{B\rho_{L:1}r\sum_{\ell'=1}^{L}\frac{R_{\ell'}}{\rho_{\ell'}}\sqrt{d_{\ell'}+d_{\ell'-1}}}N^{-\frac{1}{4}}(1+o(1))+c_1 B\sqrt{\frac{-\log \nicefrac{p}{2}}{N},}
\end{align*}
for $c_0=2\sqrt{24s_{0}\sqrt{e}}\approx14.6$
and $c_1=\sqrt{2}(3+\sqrt{5})\approx7.4$.

Therefore if $\sqrt{\tilde{\mathcal{L}}_{N}(f)}\leq c_{0}\sqrt{B\rho_{L:1}r\sum_{\ell'=1}^{L}\frac{R_{\ell'}}{\rho_{\ell'}}\sqrt{d_{\ell'}+d_{\ell'-1}}}N^{-\frac{1}{4}}+c_{1}B\sqrt{\frac{-\log \nicefrac{p}{2}}{N}}$,
we have
\begin{align*}
\mathcal{L}(f) & \leq\left(2c_{0}\sqrt{B\rho_{L:1}r\sum_{\ell'=1}^{L}\frac{R_{\ell'}}{\rho_{\ell'}}\sqrt{d_{\ell'}+d_{\ell'-1}}}N^{-\frac{1}{4}}(1+o(1))+2c_{1}B\sqrt{\frac{-\log \nicefrac{p}{2}}{N}}\right)^{2}\\
 & \leq8c_{0}^{2}B\rho_{L:1}r\sum_{\ell'=1}^{L}\frac{R_{\ell'}}{\rho_{\ell'}}\sqrt{d_{\ell'}+d_{\ell'-1}}N^{-\frac{1}{2}}(1+o(1))+8c_{1}^{2}B^{2}\frac{-\log \nicefrac{p}{2}}{N}.
\end{align*}
\end{proof}

\section{Composition of Sobolev Balls\label{sec:appendix_Composition-of-Sobolev}}
Let us first prove a simple generalization bound for Sobolev balls:
\begin{prop}[Proposition \ref{prop:generalization_Sobolev_ball} from the main.]
\label{prop:appendix_generalization_Sobolev_ball} Given a distribution
$\pi$ with support in $B(0,b)$, we have that with probability $1-p$
for all functions $f\in\mathcal{F}=\left\{ f:\left\Vert f\right\Vert _{W^{\nu,2}}\leq R,\left\Vert f\right\Vert _{\infty}\leq B\right\} $
\begin{align*}
\sqrt{\mathcal{L}(f)}-\sqrt{\tilde{\mathcal{L}}_{N}(f)} &= c_0 \left( \frac{B^2 R^r}{N} \right)^\frac{1}{2+r} + c_1 B \sqrt{\frac{-\log \nicefrac{p}{2}}{N}},
\end{align*}
where $r=\frac{d}{\nu}$. Therefore if $\sqrt{\tilde{\mathcal{L}}_{N}(f)}\leq c_0 \left( \frac{B^2 R^r}{N} \right)^\frac{1}{2+r} + c_1 B \sqrt{\frac{-\log \nicefrac{p}{2}}{N}}$,
we have
\begin{align*}
\mathcal{L}(f) &\leq 8c_0^2 \left( \frac{B^2 R^r}{N} \right)^\frac{2}{2+r} + 8c_1^2 B^2 \frac{-\log \nicefrac{p}{2}}{N}.
\end{align*}
\end{prop}

\begin{proof}
(1) We know from Theorem 5.2 of \citep{Birman_1967_covering_sobolev}
that the Sobolev ball $B_{W^{\nu,2}}(0,R)$ over any $d$-dimensional
hypercube $\Omega$ satisfies
\[
\log\mathcal{N}_{2}(B_{W^{\nu,2}}(0,R),\epsilon)\leq C_{0}\left(\frac{R}{\epsilon}\right)^{\frac{d}{\nu}}
\]
 for a constant $c$ and any measure $\pi$ supported in the hypercube. 
 
(2) We now apply Theorem \ref{thm:covering_MSE_fast_rate} along the
lines of Example \ref{exa:example_use_covering_number} with $r=\frac{d}{\nu}$ and $c=C_0 R^{r}$, to obtain that
\begin{align*}
\sqrt{\mathcal{L}(f)}-\sqrt{\tilde{\mathcal{L}}_{N}(f)} &\leq2\left(2B^{2}\frac{C_0 R^r}{N}\right)^{\frac{1}{2+r}}+(\sqrt{2}+c_1 B\sqrt{\frac{-\log \nicefrac{p}{2}}{N}} \\
&= c_0 \left( \frac{B^2 R^r}{N} \right)^\frac{1}{2+r} + c_1 B \sqrt{\frac{-\log \nicefrac{p}{2}}{N}}.
\end{align*}
\end{proof}

The bound on the covering number of Sobolev balls, can then be used to bound compositions of Sobolev balls, thanks to the following general result:

\begin{prop}
\label{prop:composition_covering_number}Let $\mathcal{F}_{1},\dots,\mathcal{F}_{L}$
be set of functions mapping through the sets $\Omega_{0},\dots,\Omega_{L}$,
then if all functions in $\mathcal{F}_{\ell}$ are $\rho_{\ell}$-Lipschitz,
we have
\[
\log\mathcal{N}_{2}(\mathcal{F}_{L}\circ\cdots\circ\mathcal{F}_{1},\sum_{\ell=1}^{L}\rho_{L:\ell+1}\epsilon_{\ell})\leq\sum_{\ell=1}^{L}\log\mathcal{N}_{2}(\mathcal{F}_{\ell},\epsilon_{\ell}).
\]
\end{prop}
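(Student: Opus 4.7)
The plan is to build a cover of $\mathcal{F}_L \circ \cdots \circ \mathcal{F}_1$ by composing optimal $\epsilon_\ell$-covers $\tilde{\mathcal{F}}_\ell$ of each factor, and control the total error by a layerwise telescoping argument. Concretely, let $\tilde{\mathcal{F}}_\ell \subset \mathcal{F}_\ell$ be an $\epsilon_\ell$-cover of minimal size $|\tilde{\mathcal{F}}_\ell| = \mathcal{N}_2(\mathcal{F}_\ell, \epsilon_\ell)$, and consider
\[
\tilde{\mathcal{F}} = \{\tilde{f}_L \circ \cdots \circ \tilde{f}_1 : \tilde{f}_\ell \in \tilde{\mathcal{F}}_\ell\}.
\]
The proposition will follow once I show that $\tilde{\mathcal{F}}$ is a $\bigl(\sum_\ell \rho_{L:\ell+1}\epsilon_\ell\bigr)$-cover of $\mathcal{F}_L \circ \cdots \circ \mathcal{F}_1$, since then $|\tilde{\mathcal{F}}| \le \prod_\ell \mathcal{N}_2(\mathcal{F}_\ell, \epsilon_\ell)$ and taking logarithms gives the stated inequality directly.

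For the covering claim, given any $f_{L:1} = f_L \circ \cdots \circ f_1$, pick $\tilde{f}_\ell \in \tilde{\mathcal{F}}_\ell$ closest to $f_\ell$ in the relevant norm. The central step is the telescoping identity
\[
f_{L:1} - \tilde{f}_{L:1} = \sum_{\ell=1}^{L} \bigl[\,\tilde{f}_{L:\ell+1} \circ f_\ell \circ f_{\ell-1:1} - \tilde{f}_{L:\ell+1} \circ \tilde{f}_\ell \circ f_{\ell-1:1}\,\bigr],
\]
each summand of which isolates the substitution of $f_\ell$ by $\tilde{f}_\ell$ at exactly one layer. Since $\tilde{f}_{L:\ell+1}$ is $\rho_{L:\ell+1}$-Lipschitz as a composition of $\rho_k$-Lipschitz maps, the $L^2$-norm of the $\ell$th summand is bounded by $\rho_{L:\ell+1} \|f_\ell - \tilde{f}_\ell\|_2$ evaluated on the pushforward of the input distribution under $f_{\ell-1:1}$. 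By the defining property of the cover this is at most $\rho_{L:\ell+1}\epsilon_\ell$, and the triangle inequality finishes the bound.

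The one subtle point is matching norms across layers: the $\epsilon_\ell$-cover of $\mathcal{F}_\ell$ must approximate $f_\ell$ with respect to the pushforward distribution produced by $f_{\ell-1:1}$, which itself depends on the lower layers being covered. I would handle this by interpreting $\mathcal{N}_2(\mathcal{F}_\ell, \epsilon_\ell)$ as a covering valid uniformly over admissible input distributions on the bounded set $\Omega_{\ell-1}$ (i.e., a sup-norm or worst-case $L^2$ cover). Lipschitzness of the lower factors together with boundedness of $\Omega_0$ ensure that each image $\Omega_{\ell-1}$ is bounded, so such a uniform cover is well defined, and the telescoping argument goes through with the same cardinality bound $\prod_\ell \mathcal{N}_2(\mathcal{F}_\ell,\epsilon_\ell)$.
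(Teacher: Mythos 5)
Your telescoping identity is correct, but it runs in the opposite direction from the paper's, and this is not a cosmetic difference — it is precisely what creates the difficulty you flag at the end and then paper over. You write the $\ell$th summand as $\tilde{f}_{L:\ell+1}\circ f_\ell\circ f_{\ell-1:1}-\tilde{f}_{L:\ell+1}\circ\tilde{f}_\ell\circ f_{\ell-1:1}$, so the \emph{inner} part is the true $f_{\ell-1:1}$; consequently the distribution under which $\|f_\ell-\tilde f_\ell\|_{L_2}$ must be small is the pushforward $(f_{\ell-1:1})_{\#}\pi_0$, and since $f_{\ell-1:1}$ ranges over the whole class $\mathcal{F}_{\ell-1}\circ\cdots\circ\mathcal{F}_1$, one fixed finite set $\tilde{\mathcal{F}}_\ell$ must be an $L_2$-cover simultaneously for an entire (typically uncountable) family of measures. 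Your proposed repair — read $\mathcal{N}_2(\mathcal{F}_\ell,\epsilon_\ell)$ as the size of a cover valid uniformly over all admissible input distributions — does make the proof go through, but it is a genuinely stronger requirement than the per-measure $L_2$ covering number: a single cover good for every measure is essentially a sup-norm cover, and $\mathcal{N}_\infty$ can be strictly larger than $\sup_\pi\mathcal{N}_2(\cdot,\pi,\epsilon)$. For the Sobolev balls this proposition is applied to downstream (where $\nu_\ell/d_{\ell-1}\le 1/2$ is exactly the interesting regime), this matters: the $L_2$ metric-entropy bound used from Birman--Solomjak holds for every fixed measure on the hypercube, whereas a measure-uniform ($L_\infty$-type) bound is not available without Sobolev embedding into $L_\infty$, i.e. without $\nu>d/2$. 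So your argument proves a version of the proposition with a reinterpreted, potentially much larger right-hand side.

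The paper's proof telescopes the other way, replacing layers from the inside out: the $\ell$th summand is $f_{L:\ell+1}\circ f_\ell\circ\tilde f_{\ell-1:1}-f_{L:\ell+1}\circ\tilde f_\ell\circ\tilde f_{\ell-1:1}$. Now the relevant pushforward at layer $\ell$ is under $\tilde f_{\ell-1:1}$, which ranges over the \emph{finite} set already chosen at lower layers. This lets the paper build the cover adaptively: for each prefix $(\tilde f_1,\dots,\tilde f_{\ell-1})$ one picks a cover $\tilde{\mathcal{F}}_\ell$ tailored to the one specific measure $(\tilde f_{\ell-1:1})_{\#}\pi_0$, and the total cardinality is still bounded by $\prod_\ell\sup_\pi\mathcal{N}_2(\mathcal{F}_\ell,\pi,\epsilon_\ell)$, which is exactly the quantity for which the Sobolev bounds are available. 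A further small consequence of the direction is that the paper applies Lipschitzness only to the true outer maps $f_{L:\ell+1}$, whereas your direction needs the cover elements $\tilde f_{L:\ell+1}$ to be $\rho_{L:\ell+1}$-Lipschitz, i.e. it forces you to insist on internal covers $\tilde{\mathcal{F}}_\ell\subset\mathcal{F}_\ell$, a constraint the paper's argument avoids. To fix your proof so that it establishes the proposition in the sense the paper actually uses, reverse the telescoping so that the already-chosen cover elements sit on the inside, and build the per-layer covers adaptively conditional on the prefix.
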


\begin{proof}
For any distribution $\pi_{0}$ on $\Omega$ there is a $\epsilon_{1}$-covering
$\tilde{\mathcal{F}_{1}}$ of $\mathcal{F}_{1}$ with $\left|\tilde{\mathcal{F}_{1}}\right|\leq\mathcal{N}_{2}(\mathcal{F}_{1},\epsilon_{1})$
then for any $\tilde{f}_{1}\in\tilde{\mathcal{F}_{1}}$ we choose
a $\epsilon_{2}$-covering $\tilde{\mathcal{F}}_{2}$ w.r.t. the measure
$\pi_{1}$ which is the measure of $f_{1}(x)$ if $x\sim\pi_{0}$
of $\mathcal{F}_{2}$ with $\left|\tilde{\mathcal{F}}_{2}\right|\leq\mathcal{N}_{2}(\mathcal{F}_{2},\epsilon)$,
and so on until we obtain coverings for all $\ell$. Then the set
$\tilde{\mathcal{F}}=\left\{ \tilde{f}_{L}\circ\cdots\circ\tilde{f}_{1}:\tilde{f}_{1}\in\tilde{\mathcal{F}}_{1},\dots,\tilde{f}_{L}\in\tilde{\mathcal{F}}_{L}\right\} $
is a $\sum_{\ell=1}^{L}\rho_{L:\ell+1}\epsilon_{\ell}$-covering of
$\mathcal{F}=\mathcal{F}_{L}\circ\cdots\circ\mathcal{F}_{1}$, indeed
for any $f=f_{L:1}$ we choose $\tilde{f}_{1}\in\mathcal{\tilde{F}}_{1},\dots,\tilde{f}_{L}\in\mathcal{\tilde{F}}_{L}$
that cover $f_{1},\dots,f_{L}$, then $\tilde{f}_{L:1}$ covers $f_{L:1}$:
\begin{align*}
\left\Vert f_{L:1}-\tilde{f}_{L:1}\right\Vert _{\pi} & \leq\sum_{\ell=1}^{L}\left\Vert f_{L:\ell}\circ\tilde{f}_{\ell-1:1}-f_{L:\ell+1}\circ\tilde{f}_{\ell:1}\right\Vert _{\pi}\\
 & \leq\sum_{\ell=1}^{L}\left\Vert f_{L:\ell}-f_{L:\ell+1}\circ\tilde{f}_{\ell}\right\Vert _{\pi_{\ell-1}}\\
 & \leq\sum_{\ell=1}^{L}\rho_{L:\ell+1}\epsilon_{\ell},
\end{align*}
and log cardinality of the set $\tilde{\mathcal{F}}$ is bounded $\sum_{\ell=1}^{L}\log\mathcal{N}_{2}(\mathcal{F}_{\ell},\epsilon_{\ell})$.
\end{proof}

We can now put the two previous results together to obtain a generalization bound for composition of Sobolev balls:
\begin{thm}
Let $\mathcal{F}=\mathcal{F}_{L}\circ\cdots\circ\mathcal{F}_{1}$
where $\mathcal{F}_{\ell}=\left\{ f_{\ell}:\mathbb{R}^{d_{\ell-1}}\to\mathbb{R}^{d_{\ell}}\text{ s.t. }\left\Vert f_{\ell}\right\Vert _{W^{\nu_{\ell},2}}\leq R,\left\Vert f_{\ell}\right\Vert _{\infty}\leq B,Lip(f_{\ell})\leq\rho_{\ell}\right\} $,
and let $r^{*}=\min_{\ell}r_{\ell}$ for $r_{\ell}=\frac{\nu_{\ell}}{d_{\ell-1}}$,
then with probability $1-p$ we have for all $f\in\mathcal{F}$
\begin{align*}
\sqrt{\mathcal{L}(f)}-\sqrt{\tilde{\mathcal{L}}_{N}(f)} &\leq c_0 \left( \frac{B^2 R^{r^*}}{N} \right)^\frac{1}{2+r^*} + c_1 B \sqrt{\frac{-\log \nicefrac{p}{2}}{N}},
\end{align*}
where $c_0,c_1$ depend only on $r_{\ell},\rho_\ell$ and $L$. Thus if $\sqrt{\tilde{\mathcal{L}}_{N}(f)}\leq c_0 \left( \frac{B^2 R^{r^*}}{N} \right)^\frac{1}{2+r^*} + c_1 B \sqrt{\frac{-\log \nicefrac{p}{2}}{N}}$,
we have
\begin{align*}
\mathcal{L}(f) & \leq 8c_0^2 \left( \frac{B^2 R^{r^*}}{N} \right)^\frac{2}{2+r^*} + 8c_1^2 B^2 \frac{-\log \nicefrac{p}{2}}{N}.
\end{align*}

\end{thm}

\begin{proof}
(1) We know from Theorem 5.2 of \citep{Birman_1967_covering_sobolev}
that the Sobolev ball $B_{W^{\nu_{\ell},2}}(0,R)$ over any
$d_{\ell}$-dimensional hypercube $\Omega$ satisfies
\[
\log\mathcal{N}_{2}(B_{W^{\nu,2}}(0,R),\pi_{\ell-1},\epsilon_{\ell})\leq\left(C_{\ell}\frac{R}{\epsilon_{\ell}}\right)^{r_{\ell}}
\]
 for $r_\ell = \frac{d_\ell}{\nu_\ell}$ and a constant $C_{\ell}$ that depends on the size of hypercube
and the dimension $d_{\ell}$ and the regularity $\nu_{\ell}$ and
any measure $\pi_{\ell-1}$ supported in the hypercube. 

Thus Proposition \ref{prop:composition_covering_number} tells us
that the composition of the Sobolev balls satisfies
\[
\log\mathcal{N}_{2}(\mathcal{F}_{L}\circ\cdots\circ\mathcal{F}_{1},\sum_{\ell=1}^{L}\rho_{L:\ell+1}\epsilon_{\ell})\leq\sum_{\ell=1}^{L}\left(C_{\ell}\frac{R}{\epsilon_{\ell}}\right)^{r_{\ell}}.
\]

Given $r^{*}=\max_{\ell}r_{\ell}$, we can bound it by $R^{r^*} \sum_{\ell=1}^{L}\left(C_{\ell}\frac{1}{\epsilon_{\ell}}\right)^{r^{*}}$
and by then choosing $\epsilon_{\ell}=\frac{\rho_{L:\ell+1}^{-1}\left(\rho_{L:\ell+1}C_{\ell}\right)^{\frac{r^*}{1+r^{*}}}}{\sum_{\ell}\left(\rho_{L:\ell+1}C_{\ell}\right)^{\frac{r^*}{1+r^{*}}}}\epsilon$,
we obtain that
\[
\log\mathcal{N}_{2}(\mathcal{F}_{L}\circ\cdots\circ\mathcal{F}_{1},\epsilon)\leq\left(\sum_{\ell=1}^{L}\left(\rho_{L:\ell+1}C_{\ell}\right)^{\frac{r^*}{1+ r^{*}}}\right)^{1+r^{*}}\left(\frac{R}{\epsilon}\right)^{r^{*}}.
\]

(2) Again, we apply Theorem \ref{thm:covering_MSE_fast_rate} along the
lines of Example \ref{exa:example_use_covering_number}, we obtain that
\begin{align*}
\sqrt{\mathcal{L}(f)}-\sqrt{\tilde{\mathcal{L}}_{N}(f)} &\leq c_0 \left(\frac{B^2 R^{r^*}}{N} \right)^\frac{1}{2+r^*} + c_1 B \sqrt{\frac{-\log \nicefrac{p}{2}}{N}}.
\end{align*}

\end{proof}

\section{Generalization at the Regularized Global Minimum\label{sec:Generalization-at-the-regularized}}
In this section, we first give the proof of Theorem~\ref{thm:regularized_min_generalizes} and then present detailed proofs of lemmas used in the proof. The lemmas are largely inspired by \citep{bach2017_F1_norm} and may be of independent interest.

\subsection{Theorem~\ref{thm:regularized_min_generalizes} in Section~\ref{sec:breaking_curse_accnets}}

\begin{thm}[Theorem~\ref{thm:regularized_min_generalizes} in the main]
\label{thm:regularized_min_generalizes_app}
Given a true function $f_{L^{*}:1}^{*}=f_{L^{*}}^{*}\circ\cdots\circ f_{1}^{*}$
going through the dimensions $d_{0}^{*},\dots,d_{L^{*}}^{*}$, along
with a continuous input distribution $\pi_{0}$ supported in $B(0,b_{0})$,
such that the distributions $\pi_{\ell}$ of $f_{\ell}^{*}(x)$ (for
$x\sim\pi_{0}$) are continuous too and supported inside $B(0,b_{\ell})\subset\mathbb{R}^{d_{\ell}^{*}}$.
Further assume that there are differentiabilities $\nu_{\ell}$ and
radii $R_{\ell}$ such that $\left\Vert f_{\ell}^{*}\right\Vert _{W^{\nu_{\ell},2}(B(0,b_{\ell}))}\leq R_{\ell}$,
and $\rho_{\ell}$ such that $Lip(f_{\ell}^{*})\leq\rho_{\ell}$.
For an infinite width AccNet with $L\geq L^{*}$ and dimensions $d_{\ell}\geq d_{1}^{*},\dots,d_{L^{*}-1}^{*}$,
we have for the ratios $\tilde{r}_{\ell}=\max \{\frac{d_{\ell-1}^{*}+3}{\nu_\ell}, 2 \}$:

(1) At a global min $\hat{f}_{L:1}$ of $\tilde{\mathcal{L}}_{N}(f_{L:1})+\lambda R(\theta)$,
we have $\mathcal{L}(\hat{f}_{L:1})=O(N^{-\frac{2}{2+\tilde{r}_{\max}}})$ for $\tilde{r}_{\max}=\max\{\tilde{r}_1,\dots,\tilde{r}_L\}$.

(2) At a global min $\hat{f}_{L:1}$ of  $\tilde{\mathcal{L}}_{N}(f_{L:1})+\lambda \tilde{R}(\theta)$,
we have $\mathcal{L}(\hat{f}_{L:1})=O(N^{-\frac{2}{2+\tilde{r}_{sum}}})$ for $\tilde{r}_{sum}=2 + \sum_\ell \tilde{r}_\ell -2 $.
\end{thm}

\begin{proof}
If $f^{*}=f_{L^{*}}^{*}\circ\cdots\circ f_{1}^{*}$ with $L^{*}\leq L$,
intermediate dimensions $d_{0}^{*},\dots,d_{L^{*}}^{*}$, along with
a continuous input distribution $\pi_{0}$ supported in $B(0,b_{0})$,
such that the distributions $\pi_{\ell}$ of $f_{\ell}^{*}(x)$ (for
$x\sim\pi_{0}$) are continuous too and supported inside $B(0,b_{\ell})\subset\mathbb{R}^{d_{\ell}^{*}}$.
Further assume that there are differentiabilities $\nu_{\ell}^{*}$
and radii $R_{\ell}$ such that $\left\Vert f_{\ell}^{*}\right\Vert _{W^{\nu_{\ell}^{*},2}(B(0,b_{\ell}))}\leq R_{\ell}$.

We first focus on the $L=L^{*}$ case and then extend to the $L>L^{*}$
case.

Each $f_{\ell}^{*}$ can be approximated by another function $\tilde{f}_{\ell}$
with bounded $F_{1}$-norm and Lipschitz constant. Actually if $2\nu_{\ell}\geq d_{\ell-1}^{*}+3$
one can choose $\tilde{f}_{\ell}=f_{\ell}^{*}$ since $\left\Vert f_{\ell}^{*}\right\Vert _{F_{1}}\leq C_{\ell}R_{\ell}$
by Lemma~\ref{lem:ball_high_diff}, and by assumption $Lip(\tilde{f}_{\ell})\leq\rho_{\ell}$.
If $2\nu_{\ell}<d_{\ell-1}^{*}+3$, then by Lemma~\ref{lem:ball_low_diff_approx} we know that there is a $\tilde{f}_{\ell}$ with $\left\Vert \tilde{f}_{\ell}\right\Vert _{F_{1}}\leq C_{\ell}R_{\ell}\epsilon_{\ell}^{-\frac{d^*_{\ell-1} + 3}{2 \nu_\ell}+1}$
and $Lip(\tilde{f}_{\ell})\leq C_{\ell}Lip(f_{\ell}^{*})\leq C_{\ell}\rho_{\ell}$
and error 
\[
\left\Vert f_{\ell}^{*}-\tilde{f}_{\ell}\right\Vert _{L_{2}(\pi_{\ell-1})}\leq c_{\ell}\left\Vert f^{*}-\tilde{f}_{\ell}\right\Vert _{L_{2}(B(0,b_{\ell}))}\leq c_{\ell}\epsilon_{\ell}.
\]
Note that by setting $\tilde{r}_{\ell}=\max \{\frac{d_{\ell-1}^{*}+3}{\nu_\ell}, 2 \}$, we can write $\left\Vert \tilde{f}_{\ell}\right\Vert _{F_{1}}\leq C_{\ell}R_{\ell}\epsilon_{\ell}^{-\frac{\tilde{r}_\ell}{2}+1}$ in both cases.

Therefore the composition $\tilde{f}_{L:1}$ satisfies
\begin{align*}
\left\Vert f_{L:1}^{*}-\hat{f}_{L:1}\right\Vert _{L_{2}(\pi_{\ell-1})} & \leq\sum_{\ell=1}^{L}\left\Vert \tilde{f}_{L:\ell+1}\circ f_{\ell:1}^{*}-\tilde{f}_{L:\ell}\circ f_{\ell-1:1}^{*}\right\Vert _{L_{2}(\pi)}\\
 & \leq\sum_{\ell=1}^{L}Lip(\tilde{f}_{L:\ell+1})c_{\ell}\epsilon_{\ell}\\
 & \leq\sum_{\ell=1}^{L}\rho_{L:\ell+1}C_{L:\ell+1}c_{\ell}\epsilon_{\ell}.
\end{align*}

For any $L\geq L^{*}$, dimensions $d_{\ell}\geq d_{\ell}^{*}$ and
widths $w_{\ell}\geq N$, we can build an AccNet that fits exactly
$\tilde{f}_{L:1}$, by simply adding zero weights along the additional
dimensions and widths, and by adding identity layers if $L>L^{*}$,
since it is possible to represent the identity on $\mathbb{R}^{d}$
with a shallow network with $2d$ neurons and $F_{1}$-norm $2d$
(by having two neurons $e_{i}\sigma(e_{i}^{T}\cdot)$ and $-e_{i}\sigma(-e_{i}^{T}\cdot)$
for each basis $e_{i}$). Since the cost in parameter norm of representing
the identity scales with the dimension, it is best to add those identity
layers at the minimal dimension $\min\{d_{0}^{*},\dots,d_{L^{*}}^{*}\}$.
We therefore end up with a AccNet with $L-L^{*}$ identity layers
(with $F_{1}$ norm $2\min\{d_{0}^{*},\dots,d_{L^{*}}^{*}\}$) and
$L^{*}$ layers that approximate each of the $f_{\ell}^{*}$ with
a bounded $F_{1}$-norm function $\tilde{f}_{\ell}$.

Since $f_{L:1}^{*}$ has zero population loss, the population loss
of the AccNet $\tilde{f}_{L:1}$ is bounded by $(\rho\sum_{\ell=1}^{L}\rho_{L:\ell+1}C_{L:\ell+1}c_{\ell}\epsilon_{\ell})^2$.Using Bennett's inequality as in the proof of Theorem \ref{thm:covering_MSE_fast_rate}
(though in the `other direction' since we want to bound $\tilde{\mathcal{L}}_{N}$
in terms of $\mathcal{L}$ and not vice versa), we obtain that with probability $1-p$
\begin{align*}
\tilde{\mathcal{L}}_{N}(\tilde{f}_{L:1}) & \leq\mathcal{L}(\tilde{f}_{L:1})+\sqrt{8B^{2}\mathcal{L}(\tilde{f}_{L:1})\frac{-\log p}{N}}+8B^{2}\frac{-\log p}{N}\\
 & \leq\left(\sqrt{\mathcal{L}(\tilde{f}_{L:1})}+\sqrt{8B^{2}\frac{-\log p}{N}}\right)^{2}\\
 & \leq\left(\rho\sum_{\ell=1}^{L}\rho_{L:\ell+1}C_{L:\ell+1}c_{\ell}\epsilon_{\ell}+\sqrt{8B^{2}\frac{-\log p}{N}}\right)^{2}
\end{align*}
(1) At  the global minimizer $\hat{f}_{L:1}=\hat{f}_{L}\circ\cdots\circ\hat{f}_{1}$
of the regularized loss (with the first regularization term), the regularized loss is upper bounded by 
\begin{align*}
&\left(\rho\sum_{\ell=1}^{L}\rho_{L:\ell+1}C_{L:\ell+1}c_{\ell}\epsilon_{\ell}+\sqrt{8B^{2}\frac{-\log p}{N}}\right)^{2}\\ &+ \lambda\sqrt{2d}\prod_{\ell=1}^{L^{*}}C_{\ell}\rho_{\ell}\left[\sum_{\ell=1}^{L^{*}}\frac{R_\ell\epsilon_\ell^{\frac{2-\tilde{r}_\ell}{2}} }{C_{\ell}\rho_{\ell}}+2(L-L^{*})\min\{d_{0}^{*},\dots,d_{L^{*}}^{*}\}\right].
\end{align*}

Taking $\epsilon_{\ell}= N^{-\frac{1}{2+\tilde{r}_\ell}}$ and $\lambda=N^{-\frac{1}{2}}$,
this is upper bounded by
\begin{align*}
 & \left(\rho\sum_{\ell=1}^{L}\rho_{L:\ell+1}C_{L:\ell+1}c_{\ell}N^{-\frac{1}{2+\tilde{r}_{\ell}}}+\sqrt{8B^{2}\frac{-\log p}{N}}\right)^{2}\\
 & +\sqrt{2d}\prod_{\ell=1}^{L^{*}}C_{\ell}\rho_{\ell}\left[\sum_{\ell=1}^{L^{*}}\frac{R_{\ell}}{C_{\ell}\rho_{\ell}}N^{-\frac{2}{2+\tilde{r}_{\ell}}}+2(L-L^{*})\min\{d_{0}^{*},\dots,d_{L^{*}}^{*}\}N^{-\frac{1}{2}}\right].
\end{align*}
The above is of order $N^{-\frac{2}{2+\tilde{r}_{\max}}}$ for $\tilde{r}_{\max}=\max \{ \tilde{r}_\ell : \ell=1,\dots,L \}$, which implies that at the global minimizer of the regularized loss,
the (unregularized) train loss is of order $N^{-\frac{2}{2+\tilde{r}_{\max}}}$
and the complexity measure $R(\hat{f}_{1},\dots,\hat{f}_{L})$ is
of order $N^{\frac{1}{2}-\frac{2}{2+\tilde{r}_{\max}}}$ which implies that the
test error will be of order $N^{-\frac{2}{2+\tilde{r}_{\max}}}$ as well.

(2) Let us now focus on the $\tilde{R}(\theta)$ regularizer instead.
Taking the same approximation $\tilde{f}_{L:1}$, we see that the global minimum $\hat{f}_{L:1}$ of the $\tilde{R}$-regularized
loss is upper bounded by 
\begin{align*}
 & \left(\rho\sum_{\ell=1}^{L}\rho_{L:\ell+1}C_{L:\ell+1}c_{\ell} \epsilon_\ell+\sqrt{8B^{2}\frac{-\log p}{N}}\right)^{2}\\
 & +\lambda \sqrt{2d}\left(L^* + (L-L^*)\min\{d_{0}^{*},\dots,d_{L^{*}}^{*}\} \right)\prod_{\ell=1}^{L^{*}} C_\ell R_{\ell}\epsilon_\ell^{\frac{2 - \tilde{r}_\ell}{2}}.
\end{align*}

where we used the bound $\Vert W_\ell \Vert_{op}\Vert V_\ell \Vert_{op} \leq \Vert f_\ell \Vert_{F_1}$.

Choosing $\epsilon_{\ell}=N^{-\frac{2}{2+\tilde{r}_{sum}}}$ for $\tilde{r}_{sum} = 2 + \sum_\ell (\tilde{r}_\ell - 2)$ and $\lambda=N^{-\frac{1}{N}}$
is upper bounded by
\begin{align*}
 & \left(\rho\sum_{\ell=1}^{L}\rho_{L:\ell+1}C_{L:\ell+1}c_{\ell}N^{-\frac{1}{2+\tilde{r}_{sum}}}+\sqrt{8B^{2}\frac{-\log p}{N}}\right)^{2}\\
 & +\sqrt{2d}\left(L^* + (L-L^*)\min\{d_{0}^{*},\dots,d_{L^{*}}^{*}\} \right)\left[\prod_{\ell=1}^{L^{*}} C_\ell R_{\ell}\right]N^{-\frac{2}{2+\tilde{r}_{sum}}}.
\end{align*}
Which implies that both the train error is of order $N^{-\frac{2}{2+\tilde{r}_{sum}}}$
and the regularization term is of order $N^{\frac{1}{2}-\frac{2}{2+\tilde{r}_{sum}}}$.

And since the $\tilde{R}$-regularized loss bounds the $R$-regularized loss, the test error will be of order $N^{-\frac{2}{2+\tilde{r}_{sum}}}$.

Note that if there is at a most one $\ell$ where $\tilde{r}_{\ell}>2$
then the rate is the same for both regularizers.
\end{proof}

\subsection{Lemmas on approximating Sobolev functions}
Now we present the lemmas used in this proof above that concern the approximation errors and Lipschitz constants of Sobolev functions and compositions of them. We will bound the $F_2$-norm and note that the $F_2$-norm is larger than the $F_1$-norm, cf. \citep[Section 3.1]{bach2017_F1_norm}.

\begin{lem}[Approximation for Sobolev function with bounded error and Lipschitz constant]
\label{lem:sphere_low_diff_approx}
Suppose $g:\sph_d\rightarrow\R$ is an even function with bounded Sobolev norm $\|g\|_{W^{\nu,2}}\leq R$ with $2\nu\leq d+2$, with inputs on the unit $d$-dimensional sphere. Then for every $\epsilon>0$, there is $\hat{g}\in\mathcal{G}_2$ with small approximation error $\|g-\hat{g}\|_{L_2(\sph_d)}=C(d,\nu)R\epsilon$, bounded Lipschitzness $\lip(\hat{g})\leq C'(d)\lip(g)$, and bounded norm
\[
    \|\hat{g}\|_{F_2}\leq C''(d,\nu)R\epsilon^{-\frac{d+3-2\nu}{2\nu}}.
\]
\end{lem}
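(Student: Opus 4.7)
The plan is to construct $\hat g$ as a smooth band-limited truncation of $g$ in the spherical-harmonic basis and then verify the three bounds separately.

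Expand $g=\sum_{k\ge 0}\sum_j g_{k,j}Y_{k,j}$ in an orthonormal spherical-harmonic basis on $\sph_d$; since $g$ is even, only even $k$ contribute, and I restrict the sum to those. Fix a smooth truncation profile $\phi:\mathbb{R}_{\ge 0}\to[0,1]$ with $\phi\equiv 1$ on $[0,1/2]$ and $\phi\equiv 0$ on $[1,\infty)$, choose a scale $K=C_0(\nu)\epsilon^{-1/\nu}$, and set
\[
\hat g(x)=\sum_{k,j}\phi(k/K)\,g_{k,j}\,Y_{k,j}(x).
\]
By the Funk--Hecke formula this is a spherical convolution $\hat g=g*\psi_K$ with a zonal kernel $\psi_K$ on $\sph_d$ whose spectral multipliers are $\phi(k/K)$. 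I would pick $\phi$ to be the spectral profile of a Jackson-type positive kernel (a standard construction in spherical approximation theory), so that $\psi_K\ge 0$, $\int\psi_K\,d\sigma=1$, and in particular $\|\psi_K\|_{L_1(\sph_d)}\le C'(d)$ uniformly in $K$.

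The $L_2$ approximation error is immediate from Parseval and the Sobolev bound: since $\phi(k/K)=1$ for $k\le K/2$,
\[
\|g-\hat g\|_{L_2(\sph_d)}^2\le \sum_{k>K/2}\sum_j g_{k,j}^2\le (K/2)^{-2\nu}\|g\|_{W^{\nu,2}}^2,
\]
and the choice of $K$ yields $\|g-\hat g\|_{L_2}\le C(d,\nu)R\epsilon$. For the Lipschitz bound I rewrite $\hat g(x)=\int g(R^{-1}x)\,d\tilde\mu_K(R)$ as an average of rotations of $g$ against the measure $\tilde\mu_K$ on $SO(d+1)$ associated with $\psi_K$, which gives
\[
\lip(\hat g)\le \|\tilde\mu_K\|_{\mathrm{TV}}\,\lip(g)=\|\psi_K\|_{L_1(\sph_d)}\lip(g)\le C'(d)\lip(g),
\]
since rotations preserve geodesic distance on the sphere.

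For the $F_2$-norm I use the Mercer decomposition of the ReLU kernel $\kappa(x,y)=\mathbb{E}_{v\sim\mathrm{unif}(\sph_d)}[\sigma(v\cdot x)\sigma(v\cdot y)]$: its spherical-harmonic eigenvalues for even $k\ge 2$ satisfy $\lambda_k\asymp k^{-(d+3)}$ (the classical computation recalled in \cite{bach2017_F1_norm}). Combining the RKHS bound $\|\hat g\|_{F_2}^2\le \sum_{k,j}|\phi(k/K)g_{k,j}|^2/\lambda_k$ with
\[
\sum_{k\le K}\sum_j g_{k,j}^2\,k^{d+3}=\sum_{k\le K}\sum_j g_{k,j}^2\,k^{2\nu}\cdot k^{d+3-2\nu}\le K^{d+3-2\nu}\|g\|_{W^{\nu,2}}^2
\]
(the last inequality uses $2\nu\le d+2$, which makes $k^{d+3-2\nu}$ nondecreasing in $k$) and substituting $K\asymp\epsilon^{-1/\nu}$ produces $\|\hat g\|_{F_2}\le C''(d,\nu)R\,\epsilon^{-(d+3-2\nu)/(2\nu)}$, matching the stated bound. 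The main technical obstacle is the Lipschitz step: producing a spectral cutoff $\phi$ whose zonal kernel $\psi_K$ has uniformly bounded $L_1$ norm (ideally is nonnegative) independently of $K$, which is why I invoke a Jackson-type positive kernel. The remaining pieces, namely the approximation order via Parseval and the $F_2$ bound via the ReLU RKHS eigenvalues, are then routine.
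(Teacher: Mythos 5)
Your proposal follows the same broad strategy as the paper — expand $g$ in spherical harmonics, truncate at scale $K\asymp\epsilon^{-1/\nu}$, estimate the $L_2$ error by Parseval, and bound the $F_2$-norm via the $k^{-(d+3)}$ decay of the ReLU kernel eigenvalues — and your $L_2$-error and $F_2$-norm estimates match the paper's (you use the Mercer eigenvalues, the paper the Funk--Hecke coefficients $\lambda_k=\Omega(k^{-(d+3)/2})$; these differ by a square and lead to the same $k^{d+3-2\nu}$ weight). The one genuine difference is the Lipschitz step: the paper keeps the hard cutoff $\hat g_k=\indic[k\le m]g_k$ and estimates the $L_1$ norm of the resulting Dirichlet-type zonal kernel through Cauchy--Schwarz and explicit Gegenbauer-orthogonality computations, while you replace the hard cutoff by a smooth spectral profile $\phi$ so that the convolution kernel has uniformly bounded $L_1$ norm by construction — the cleaner and more standard way to control the Lebesgue constant.

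There is, however, an error in how you set up that step. You ask for a zonal kernel $\psi_K$ that is simultaneously nonnegative and has Funk--Hecke multipliers $\phi(k/K)$ identically equal to $1$ for $k\le K/2$. These two requirements are incompatible except in degenerate cases: a nonnegative zonal probability kernel on $\sph_d$ has Funk--Hecke coefficients of modulus at most $1$, with equality for some $k\ge 1$ forcing the kernel to be a rotated Dirac mass. This is precisely why positive Jackson-type kernels cannot reproduce low frequencies exactly and only achieve second-order accuracy. What does exist, and what your Lipschitz argument actually needs, is a de la Vall\'ee Poussin-type zonal kernel: for a smooth $\phi$ with $\phi\equiv 1$ on $[0,1/2]$ and $\phi\equiv 0$ on $[1,\infty)$, the zonal kernel with multipliers $\phi(k/K)$ satisfies $\|\psi_K\|_{L_1(\sph_d)}\le C(d)$ uniformly in $K$ (a standard consequence of the smooth compactly supported cutoff, cf.\ \cite{dai2013approximation}), at the cost of $\psi_K$ changing sign. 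Since your Lipschitz bound only uses $\lip(\hat g)\le\|\psi_K\|_{L_1}\lip(g)$, positivity is irrelevant; drop that claim and the proof goes through.
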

\begin{proof}
    Given our assumptions on the target function $g$, we may decompose $g(x)=\sum_{k=0}^\infty g_k(x)$ along the basis of spherical harmonics with $g_0(x)=\int_{\sph_d}g(y)\mathrm{d}\tau_d(y)$ being the mean of $g(x)$ over the uniform distribution $\tau_d$ over $\sph_d$. The $k$-th component can be written as
    \[
    g_k(x) = N(d,k)\int_{\sph_d} g(y)P_k(x^Ty)\mathrm{d}\tau_d(y)
    \]
    with $N(d,k) = \frac{2k+d-1}{k} {k+d-2\choose d-1}$ and a Gegenbauer polynomial of degree $k$ and dimension $d+1$:
    \[
    P_k(t) = (-1/2)^k\frac{\Gamma(d/2)}{\Gamma(k+d/2)}(1-t^2)^{(2-d)/2}\frac{d^k}{dt^k}(1-t^2)^{k+(d-2)/2},
    \]
    known as Rodrigues’ formula. Given the assumption that the Sobolev norm $\|g\|_{W^{\nu,2}}^2$ is upper bounded, we have $\|f\|^2_{L_2(\sph_d)}\leq C_0(d,\nu)R$ for $f=\Delta^{\nu/2}g$ where $\Delta$ is the Laplacian on $\sph_d$ \citep{evans2022partial,bach2017_F1_norm}. Note that $g_k$ are eigenfunctions of the Laplacian with eigenvalues $k(k+d-1)$ \citep{atkinson2012spherical}, thus
    \begin{equation}\label{eq:rate_of_gk}
        \|g_k\|_{L_2(\sph_d)}^2 = \|f_k\|_{L_2(\sph_d)}^2(k(k+d-1))^{-\nu}\leq \|f_k\|_{L_2(\sph_d)}^2k^{-2\nu} \leq C_0(d,\nu)R^2k^{-2\nu}
    \end{equation}
    where in the last inequality holds we use $\|f\|^2_{L_2(\sph_d)} = \sum_{k\geq 0}\|f_k\|^2_{L_2(\sph_d)}$. Note using the Hecke-Funk formula, we can also write $g_k$ as scaled $p_k$ for the underlying density $p$ of the $F_1$ and $F_2$-norms:
    \[
    g_k(x) = \lambda_k p_k(x)
    \]
    where $\lambda_k=\frac{\omega_{d-1}}{\omega_d}\int_{-1}^1\sigma(t)P_k(t)(1-t^2)^{(d-2)/2}\dd t=\Omega(k^{-(d+3)/2})$ \citep[Appendix D.2]{bach2017_F1_norm} and $\omega_d$ denotes the surface area of $\sph_d$. Then by definition of $\|\cdot\|_{F_2}$, for some probability density $p$, 
    \[
    \|g\|_{F_2}^2 =\int_{\sph_d}|p|^2\dd \tau(v) = \|p\|_{L_2(\sph_d)}^2 = \sum_{0\leq k}\|p_k\|_{L_2(\sph_d)}^2 = \sum_{0\leq k}\lambda_k^{-2}\|g_k\|^2_{L_2(\sph_d)}.
    \]
    Now to approximate $g$, consider function $\hat{g}$ defined by truncating the ``high frequencies'' of $g$, i.e. setting $\hat{g}_k = \indic[k\leq m]g_k$ for some $m>0$ we specify later. Then we can bound the norm with
    \begin{align*}
    \|\hat{g}\|_{F_2}^2 
    &= \sum_{\substack{0\leq k:\lambda_k\neq 0}} \lambda_k^{-2}\|\hat{g}_k\|_{L_2(\sph_d)}^2 = \sum_{\substack{0\leq k \leq m\\\lambda_k\neq 0}} \lambda_k^{-2}\|g_k\|_{L_2(\sph_d)}^2\\
    &\overset{\mathrm{(a)}}{\leq} C_1(d,\nu)\sum_{\substack{0\leq k \leq m}} \|f_k\|_{L_2(\sph_d)}^2 k^{d+3-2\nu}\\
    &\leq C_1(d,\nu) m^{d+3-2\nu}\sum_{\substack{0\leq k \leq m}} \|f_k\|_{L_2(\sph_d)}^2\\
    &\leq C_2(d,\nu)R^2m^{d+3-2\nu}
    \end{align*}
    where (a) uses Eq~\ref{eq:rate_of_gk} and $\lambda_k=\Omega(k^{-(d+3)/2})$.

    To bound the approximation error,
    \begin{align*}
        \|g-\hat{g}\|_{L_2(\sph_d)}^2 &= \left\|\sum_{k>m}g_k\right\|_{L_2(\sph_d)}^2 \leq \sum_{k>m}\|g_k\|_{L_2(\sph_d)}^2\\
        &\leq \sum_{k>m}\|f_k\|_{L_2(\sph_d)}^2k^{-2\nu}\\
        &\leq C_0(d,\nu)R^2m^{-2\nu}\quad\text{since $\sum_{k>m}\|f_k\|_{L_2(\sph_d)}^2 \leq \|f\|_{L_2(\sph_d)}^2$.}
    \end{align*}
    Finally, choosing $m=\epsilon^{-\frac{1}{\nu}}$, we obtain $\|g-\hat{g}\|_{L_2(\sph_d)}\leq C(d,\nu)R\epsilon$ and
    \[
    \|\hat{g}\|_{F_2}\leq C'(d,\nu)R\epsilon^{-\frac{d+3-2\nu}{2\nu}}.
    \]
    Then it remains to bound $\lip(\hat{g})$ for our constructed approximation. By construction and by \citep[Theorem 2.1.3]{dai2013approximation}, we have $\hat{g} = g*h$ with now
    \[
    h(t) = \sum_{k=0}^m h_kP_k(t),\quad t\in[-1,1]
    \]
    by orthogonality of the Gegenbauer polynomial $P_k$'s and the convolution is defined as
    \[
    (g*h)(x) \coloneqq \frac{1}{\omega_d}\int_{\sph_d}g(y)h(\langle x,y\rangle)\dd y.
    \]
    The coefficients for $0\leq k\leq m$ given by \citep[Theorem 2.1.3]{dai2013approximation} are 
    \[
    h_k \overset{\mathrm{(a)}}{=} \frac{\omega_{d+1}}{\omega_d}\frac{\Gamma(d-1)}{\Gamma(d-1+k)}P_k(1)\frac{k!(k+(d-1)/2)\Gamma((d-1)/2)^2}{\pi 2^{2-d}\Gamma(d-1+k)} \overset{\mathrm{(b)}}{=} O\left(\frac{k}{\Gamma(d-1+k)}\right)
    \]
    where (a) follows from the (inverse of) weighted $L_2$ norm of $P_k$; (b) plugs in the unit constant $P_k(1) = \frac{\Gamma(k+d-1)}{\Gamma(d-1)k!}$ and suppresses the dependence on $d$. Note that the constant factor $\frac{\Gamma(d-1)}{\Gamma(d-1+k)}$ comes from the difference in the definitions of the Gegenbauer polynomials here and in \citep{dai2013approximation}. Then we can bound
    \begin{align*}
        \|\nabla \hat{g}(x)\|_{op} &\leq \int_{\sph_d}\|\nabla g(y)\|_{op}|h(\langle x,y\rangle)|\dd y\\
        &\leq \lip(g)\int_{\sph_d}|h(\langle x,y\rangle)|\dd y\\
        &\leq \sqrt{\omega_d}\lip(g)\left(\int_{\sph_d}h(\langle x,y\rangle)^2\dd y\right)^{1/2}\quad\quad \text{by Cauchy-Schwartz}\\
        &= \sqrt{\omega_d}\lip(g)\left(\sum_{k,j=0}^m\int_{\sph_d}h_kh_jP_k(\langle x,y\rangle)P_j(\langle x,y\rangle)\dd y\right)^{1/2}\\
        &=\sqrt{\omega_d}\lip(g)\left(\sum_{k,j=0}^m\int_{-1}^1 h_kh_jP_k(t)P_j(t)(1-t^2)^{\frac{d-2}{2}}\dd t\right)^{1/2} \quad\quad \text{by \citep[Eq A.5.1]{dai2013approximation}}\\
        &=\sqrt{\omega_d}\lip(g)\left(\sum_{k=0}^m h_k^2\int_{-1}^1 P_k(t)^2(1-t^2)^{\frac{d-2}{2}}\dd t\right)^{1/2} \quad\quad \text{by orthogonality of $P_k$'s w.r.t. this measure}\\
        &=\sqrt{\omega_d}\lip(g)\left(\sum_{k=0}^m h_k^2\frac{\pi 2^{2-d}\Gamma(d-1+k)}{k!(k+(d-1)/2)\Gamma((d-1)/2)^2}\right)^{1/2}\\
        &=\sqrt{\omega_d}\lip(g)\left(O(1)+\sum_{k=1}^m O\left(\frac{k}{\Gamma(d-1+k)k!} \right)\right)^{1/2}\\
        &=\sqrt{\omega_d}\lip(g)C(d)
    \end{align*}
    for some constant $C(d)$ that only depends on $d$. Hence $\lip(\hat{g})=C'(d)\lip(g)$.
\end{proof}

The next lemma adapts Lemma~\ref{lem:sphere_low_diff_approx} to inputs on balls instead of spheres following the construction in \citep[Proposition 5]{bach2017_F1_norm}.
\begin{lem}\label{lem:ball_low_diff_approx}
    Suppose $f:B(0,b)\rightarrow\R$ has bounded Sobolev norm $\|f\|_{W^{\nu,2}}\leq R$ with $\nu\leq (d + 2)/2$ even, where $B(0,b)=\{x\in\R^d:\|x\|_2\leq b\}$ is the radius-$b$ ball. Then for every $\epsilon>0$ there exists $f_\epsilon\in\mathcal{F}_2$ such that $\|f-f_\epsilon\|_{L_2(B(0,b))}=C(d,\nu)b^{\nu}R\epsilon$, $\lip(f_\epsilon)\leq C'(d)\lip(f)$, and
\[
    \|f_\epsilon\|_{F_2}\leq C''(d,\nu)b^{\nu}R\epsilon^{-\frac{d+3-2\nu}{2\nu}}
\]
\end{lem}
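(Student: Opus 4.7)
The plan is to reduce to the spherical setting of Lemma~\ref{lem:sphere_low_diff_approx} by lifting $f$ from $B(0,b)$ to an even function on $\sph_d$, following the construction in Proposition~5 of \cite{bach2017_F1_norm} while tracking how every norm transforms at each step.

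First I would rescale to the unit ball by setting $\tilde{f}(x)=f(bx)$ for $x\in B(0,1)$. A change of variables together with the chain rule gives $\|\tilde{f}\|_{W^{\nu,2}(B(0,1))}^{2}=\sum_{|\alpha|\leq\nu}b^{2|\alpha|-d}\|\partial^{\alpha}f\|_{L_{2}(B(0,b))}^{2}$ (with the top-order term dominant), $\lip(\tilde{f})=b\,\lip(f)$, and $L_2$-distances scaling by $b^{-d/2}$. Undoing this rescaling at the very end produces exactly the $b^{\nu}$ prefactor in the lemma, with all dimension-dependent scalings absorbed into $C,C',C''$.

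Next I lift $\tilde{f}$ from $B(0,1)\subset\R^d$ to $\sph_d\subset\R^{d+1}$ by parametrizing the upper hemisphere as $(x,\sqrt{1-\|x\|^2})$ and setting $\tilde{F}(y)=\tilde{f}(y_1,\ldots,y_d)$, reflected across the equator so that $\tilde{F}$ is symmetric under $y_{d+1}\mapsto -y_{d+1}$. Since Lemma~\ref{lem:sphere_low_diff_approx} requires the stronger evenness $\tilde{F}(-y)=\tilde{F}(y)$, I decompose $\tilde{f}=\tilde{f}_{e}+\tilde{f}_{o}$ into its $x\mapsto -x$ symmetric and antisymmetric parts: the lift of $\tilde{f}_e$ is already even, while $\tilde{f}_o$ is lifted via $\mathrm{sgn}(y_{d+1})\,\tilde{f}_o(y_1,\ldots,y_d)$, which is also even on $\sph_d$ and still realizable by ReLU ridge functions and hence inside $\mathcal{F}_2$. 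Because the lift Jacobian stays bounded (and bounded below) away from the equator, the Sobolev norm and Lipschitz constant of $\tilde{F}$ are controlled by those of $\tilde{f}$ up to constants depending only on $d$ and $\nu$.

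Applying Lemma~\ref{lem:sphere_low_diff_approx} to $\tilde{F}$ produces $\tilde{F}_\epsilon\in\mathcal{F}_2(\sph_d)$ with the announced rates, after which I restrict via the inverse lift to obtain $\tilde{f}_\epsilon$ on $B(0,1)$; the bounded inverse Jacobian transfers each of the approximation, Lipschitz, and $F_2$ bounds up to further constants, and undoing the initial rescaling gives $f_\epsilon$ on $B(0,b)$ with exactly the bounds claimed. The main obstacle is the evenness upgrade: the naive lift is only symmetric in $y_{d+1}$, and combining the even/odd decomposition with the $\mathrm{sgn}(y_{d+1})$ trick while keeping the lifted function Sobolev-regular near the equator requires a smooth cutoff on a slightly smaller ball (as in \cite{bach2017_F1_norm}); once this is set up, the remainder of the argument is bookkeeping on how the lift-restrict cycle transfers $L_2$-error, Lipschitzness, and the $F_2$-norm.
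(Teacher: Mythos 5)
Your plan diverges from the paper's construction at the lift step, and the divergence introduces a genuine gap. The paper (following Proposition~5 of \cite{bach2017_F1_norm}) does not lift $f$ to $\sph_d$ by the hemisphere parametrization $(x,\sqrt{1-\|x\|^2})$; it uses the positively homogeneous degree-one extension $g(z,a)=a\,f(\text{const}\cdot z/a)$, with $(z,a)\in\sph_d$ and $a$ bounded away from zero, and then pulls back via $f_\epsilon(x)=\hat g(ax/b,a)\,a^{-1}$. This specific form is not bookkeeping: it is what allows the $F_2$-norm to transfer from the sphere to the ball. If $\hat g(y)=\int_{\sph_d}\sigma(w^Ty)\,p(w)\,\dd\tau(w)$ and $y=(ax/b,a)$ with $a>0$, positive homogeneity of the ReLU gives
\[
f_\epsilon(x)=a^{-1}\int_{\sph_d}\sigma\!\bigl(a\,v^Tx/b+c\,a\bigr)p(v,c)\,\dd\tau=\int_{\sph_d}\sigma\!\bigl(v^Tx/b+c\bigr)p(v,c)\,\dd\tau,
\]
so $f_\epsilon$ is literally an infinite-width ReLU network on $B(0,b)$ with the same density $p$, and the $F_2$-bound carries over with no loss. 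Under your direct projection, a spherical ReLU neuron $\sigma(v^Tz+c\,a)$ pulls back to $\sigma(v^Tx+c\sqrt{1-\|x\|^2})$, which is not a ridge function of $x$; the ``bounded inverse Jacobian'' argument you invoke controls $L_2$ error and Lipschitzness but says nothing about membership in $\mathcal{F}_2$ or its norm, which is a statement about representability by ReLU neurons, not a metric quantity. So the crucial third conclusion of the lemma does not follow from your lift.

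Two secondary issues compound this. First, the hemisphere Jacobian $\partial_{x_i}\sqrt{1-\|x\|^2}=-x_i/\sqrt{1-\|x\|^2}$ blows up at the equator, and the image of $B(0,1)$ is the full closed hemisphere, so ``away from the equator'' is exactly where the points you need to cover live; shrinking to a smaller ball changes the domain of approximation. Second, the function $\mathrm{sgn}(y_{d+1})\,\tilde f_o(y_1,\dots,y_d)$ is discontinuous across the equator unless $\tilde f_o$ vanishes there, so it is not Sobolev-regular and cannot lie in $\mathcal{F}_2$ (functions with finite $F_1$- or $F_2$-norm are Lipschitz). The homogeneous lift sidesteps both of these because the factor $a$ makes $g$ vanish as $a\to 0$ and the parametrization keeps $a\geq 1/\sqrt{2}$. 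I would recommend replacing your hemisphere lift and evenness patch with the homogeneous extension; after that, the reduction to Lemma~\ref{lem:sphere_low_diff_approx} and the $b^\nu$ prefactor bookkeeping you describe go through.
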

\begin{proof}
    Define $g(z,a) = f\left(\frac{2bz}{a}\right)a$ on $(z,a)\in\sph_d$ with $z\in\R^d$ and $\frac{1}{\sqrt{2}}\leq a\in\R$. One may verify that unit-norm $(z,a)$ with $a\geq \frac{1}{\sqrt{2}}$ is sufficient to cover $B(0,b)$ by setting $x=\frac{bz}{a}$ and solve for $(z,a)$. Then we have bounded $\|g\|_{W^{\nu,2}}\leq b^\nu R$ and may apply Lemma~\ref{lem:sphere_low_diff_approx} to get $\hat{g}$ with $\|g-\hat{g}\|_{L_2(\sph_d)}\leq C(d,\nu)b^{\nu}R\epsilon.$ Letting $f_\epsilon(x) = \hat{g}\left(\frac{ax}{b},a\right)a^{-1}$ for the corresponding $\left(\frac{ax}{b},a\right)\in\sph_d$ gives the desired upper bounds.
\end{proof}

\begin{lem}\label{lem:ball_high_diff}
    Suppose $f:B(0,b)\rightarrow\R$ has bounded Sobolev norm $\|f\|_{W^{\nu,2}}\leq R$ with $\nu\geq (d+3)/2$ even. Then $f\in\mathcal{F}_2$ and $\|f\|_{F_2}\leq C(d,\nu)b^{\nu}R$.

    In particular, $W^{\nu,2}\subseteq \mathcal{F}_2$ for $\nu\geq (d+3)/2$ even.
\end{lem}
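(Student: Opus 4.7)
The plan is to mirror the lifting construction of Lemma~\ref{lem:ball_low_diff_approx} but, in the high-regularity regime $\nu \geq (d+3)/2$, drop the frequency truncation entirely. First, I would define $g(z,a) = f(2bz/a)\,a$ on the sphere $\sph_d$ (in the same manner as in the proof of Lemma~\ref{lem:ball_low_diff_approx}, extended to be even as required), which preserves Sobolev regularity up to a scale factor and yields $\|g\|_{W^{\nu,2}(\sph_d)} \leq C(d,\nu)\, b^{\nu} R$. It then suffices to show $\|g\|_{F_2} \leq C'(d,\nu)\, b^{\nu} R$ and invert the lift.

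Next, I would repeat the spherical harmonic expansion $g = \sum_{k \geq 0} g_k$ used inside the proof of Lemma~\ref{lem:sphere_low_diff_approx}, together with the two key quantitative inputs derived there: $\|g_k\|_{L_2(\sph_d)}^2 \leq \|f_k\|_{L_2(\sph_d)}^2\, k^{-2\nu}$ for $f = \Delta^{\nu/2} g$, and the asymptotic $\lambda_k = \Omega(k^{-(d+3)/2})$ for the Hecke--Funk coefficients. Since no truncation is needed, the $F_2$-norm identity $\|g\|_{F_2}^2 = \sum_{k:\lambda_k \neq 0} \lambda_k^{-2}\|g_k\|_{L_2(\sph_d)}^2$ is applied to the full series, giving
\[
\|g\|_{F_2}^2 \leq C_1(d,\nu) \sum_{k \geq 0} \|f_k\|_{L_2(\sph_d)}^2 \, k^{d+3-2\nu}.
\]

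The hypothesis $\nu \geq (d+3)/2$ makes the exponent $d+3-2\nu \leq 0$, so $k^{d+3-2\nu} \leq 1$ uniformly, and Parseval's identity on the sphere yields
\[
\|g\|_{F_2}^2 \leq C_1(d,\nu) \sum_{k\geq 0} \|f_k\|_{L_2(\sph_d)}^2 = C_1(d,\nu)\,\|f\|_{L_2(\sph_d)}^2 \leq C_2(d,\nu)\,b^{2\nu} R^2,
\]
using the Sobolev control on $f = \Delta^{\nu/2} g$ already invoked in Lemma~\ref{lem:sphere_low_diff_approx}. Undoing the lift recovers $\|f\|_{F_2} \leq C(d,\nu)\, b^{\nu} R$, and the ``in particular'' statement follows immediately.

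The proof is essentially a corollary of the machinery built for Lemma~\ref{lem:sphere_low_diff_approx}, so I expect no serious obstacle beyond bookkeeping. The only mildly delicate point is making sure the lifting step produces a legitimately \emph{even} function on $\sph_d$ (since the $F_2$-representation via ReLU neurons accommodates even targets), which is handled exactly as in Lemma~\ref{lem:ball_low_diff_approx}; and confirming that the constants produced by Hecke--Funk do not depend on the specific distribution of Sobolev mass across frequencies, which is ensured by the uniform bound $k^{d+3-2\nu} \leq 1$.
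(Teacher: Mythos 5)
Your proposal is correct and matches the paper's own proof essentially step for step: lift from the ball to $\sph_d$ as in Lemma~\ref{lem:ball_low_diff_approx}, expand in spherical harmonics, combine the decay $\|g_k\|_{L_2}^2 \leq \|(\Delta^{\nu/2}g)_k\|_{L_2}^2 k^{-2\nu}$ with $\lambda_k^{-2} = O(k^{d+3})$, observe $k^{d+3-2\nu}\leq 1$ under the hypothesis $\nu \geq (d+3)/2$, and close with Parseval and the Sobolev bound. The only difference is presentational — you spell out the lifting and un-lifting steps that the paper merely cites as ``follows that of Lemma~\ref{lem:sphere_low_diff_approx} and Lemma~\ref{lem:ball_low_diff_approx}.''
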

\begin{proof}
    This lemma reproduces \citep[Proposition 5]{bach2017_F1_norm} to functions with bounded Sobolev $L_2$ norm instead of $L_\infty$ norm. The proof follows that of Lemma~\ref{lem:sphere_low_diff_approx} and Lemma~\ref{lem:ball_low_diff_approx} and noticing that by Eq~\ref{eq:rate_of_gk},
    \begin{align*}
    \|g\|_{F_2}^2 
    &= \sum_{\substack{0\leq k:\lambda_k\neq 0}} \lambda_k^{-2}\|g_k\|_{L_2(\sph_d)}^2\\
    &\leq \sum_{\substack{0\leq k}} k^{d+3-2\nu}\|(\Delta^{\nu/2}g)_k\|^2_{L_2(\sph_d)}\\
    &\leq \|\Delta^{\nu/2}g\|^2_{L_2(\sph_d)}\\
    &\leq C_1(d,\nu)\|g\|^2_{W^{\nu,2}}\\
    &\leq C_1(d,\nu)R^2.
    \end{align*}
\end{proof}

Finally, we remark that the above lemmas extend straightforward to functions $f:B(0,b)\rightarrow\R^{d'}$ with multi-dimensional outputs, where the constants then depend on the output dimension $d'$ too.

\subsection{Lemma on approximating compositions of Sobolev functions}
With the lemmas given above and the fact that the $F_2$-norm upper bounds the $F_1$-norm, we can find infinite-width DNN approximations for compositions of Sobolev functions, which is also pointed out in the proof of Theorem~\ref{thm:regularized_min_generalizes}.
\begin{lem}
    Assume the target function $f:\Omega\rightarrow\R^{d_{out}}$, with $\Omega\subseteq B(0,b)\subseteq\R^{d_{in}}$, satisfies:
    \begin{itemize}
        \item $f = g_k\circ\cdots\circ g_1$ a composition of $k$ Sobolev functions $g_i: \R^{d_i}\rightarrow\R^{d_{i+1}}$ with bounded norms $\|g_i\|_{W^{\nu_i,2}}^2\leq R$ for $i=1,\dots,k$, with $d_1=d_{in}$;
        \item $f$ is Lipschitz, i.e. $\lip(g_i)<\infty$ for $i=1,\dots,k$.
    \end{itemize}
    If $\nu_i\leq (d_i+2)/2$ for any $i$, i.e. less smooth than needed, for depth $L\geq k$ and any $\epsilon>0$, there is an infinite-width DNN $\tilde{f}$ such that \begin{itemize}
        \item $\lip(\tilde{f})\leq C_1\prod_{i=1}^k\lip(g_i)$;
        \item $\|\tilde{f}-f\|_{L_2}\leq C_2b^{\frac{\nu_{\max}}{2}}R^{\frac{1}{2}}\epsilon$;
    \end{itemize}
    with $\nu_{\max} = \max_{i=1,\dots,k}\nu_i$, the constants $C_1$ depends on all of the input dimensions $d_i$ (to $g_i$) and $d_{out}$, and $C_2$ depends on $d_i,d_{out},\nu_i,k$, and $\lip(g_i)$ for all $i$. 

    If otherwise $\nu_i\geq (d_i+3)/2$ for all $i$, we can have $\tilde{f}=f$ where each layer has a parameter norm bounded by $C_3 b^{\frac{\nu_{\max}}{2}}R^{\frac{1}{2}}$, with $C_3$ depending on $d_i,d_{out}$, and $\nu_i$.
\end{lem}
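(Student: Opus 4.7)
The proof plan packages the per-layer approximation argument already deployed in the proof of Theorem \ref{thm:regularized_min_generalizes_app} into a standalone statement. First, for each factor $g_i$ I would apply the appropriate single-layer lemma. When $\nu_i\leq (d_i+2)/2$, Lemma \ref{lem:ball_low_diff_approx} yields an approximation $\tilde{g}_i$ with $\lip(\tilde{g}_i)\leq C'(d_i)\lip(g_i)$, bounded $F_2$-norm (hence bounded $F_1$-norm, so representable as an infinite-width shallow network), and layer error $\|g_i-\tilde{g}_i\|_{L_2(B(0,b_i))}\leq C(d_i,\nu_i)\,b_i^{\nu_i}\,R^{1/2}\epsilon_i$, where $b_i$ is a radius bound for the ball containing the inputs to $g_i$, derivable from $b$ and the Lipschitz constants $\lip(g_1),\dots,\lip(g_{i-1})$. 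When every $\nu_i\geq (d_i+3)/2$, Lemma \ref{lem:ball_high_diff} places $g_i\in\mathcal{F}_2$ outright with parameter norm bounded by $C_3\,b_i^{\nu_i}R^{1/2}$, so one may take $\tilde{g}_i=g_i$ and $\tilde{f}=f$, handling the second case directly.

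Second, setting $\tilde{f}=\tilde{g}_k\circ\cdots\circ\tilde{g}_1$, the Lipschitz bound is immediate from composition: $\lip(\tilde{f})\leq\prod_i C'(d_i)\lip(g_i)$, which absorbs $\prod_i C'(d_i)$ into the constant $C_1$. The $L_2$ error is then telescoped exactly as in Theorem \ref{thm:regularized_min_generalizes_app}:
\[
\|g_{k:1}-\tilde{g}_{k:1}\|_{L_2(\pi_0)}\leq\sum_{i=1}^{k}\lip(\tilde{g}_{k:i+1})\,\|g_i-\tilde{g}_i\|_{L_2(\pi_{i-1})},
\]
where $\pi_{i-1}$ is the pushforward of the input distribution through $g_{i-1:1}$. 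Converting each $L_2(\pi_{i-1})$-error into the $L_2(B(0,b_i))$-error controlled by Lemma \ref{lem:ball_low_diff_approx} requires a bounded-density comparison constant (as used in Theorem \ref{thm:regularized_min_generalizes_app}), after which choosing the $\epsilon_i$ proportional to $\epsilon$ and collecting the $b_i$- and $\lip(g_j)$-dependence into $C_2$ yields the claimed bound $\|\tilde{f}-f\|_{L_2}\leq C_2\,b^{\nu_{\max}/2}R^{1/2}\epsilon$. If the target depth $L$ exceeds $k$, I would pad with shallow identity layers exactly as in the proof of Theorem \ref{thm:regularized_min_generalizes_app}; this preserves the Lipschitz and approximation bounds, with only a modification of $C_1$ to account for the identity layers' Lipschitz constant.

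The main obstacle is carefully tracking the intermediate radii $b_i$ and converting the per-layer errors between the $L_2(\pi_{i-1})$- and $L_2(B(0,b_i))$-norms. Absent an explicit bounded-density assumption on the pushforward measures $\pi_{i-1}$, one could alternatively route through $L_\infty$-approximation in regimes where Sobolev embedding applies, or restrict attention to inputs drawn from uniform measures on the intermediate balls. A secondary bookkeeping check is to verify that the $b^{\nu_{\max}/2}$ factor in the stated bound is indeed attainable by the stated choice of $\epsilon_i$'s once the per-layer $b_i^{\nu_i}$ factors are combined with the Lipschitz-based radius escalations and absorbed into $C_2$.
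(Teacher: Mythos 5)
Your proposal follows essentially the same route as the paper's proof: apply Lemma~\ref{lem:ball_low_diff_approx} per layer on a ball $B(0,b'_i)$ whose radius $b'_i$ is tracked via the product of (approximate) Lipschitz constants, compose to get $\lip(\tilde f)\leq\prod_i C''_i\lip(g_i)$, telescope the $L_2$ error with the outer Lipschitz constants, invoke Lemma~\ref{lem:ball_high_diff} directly in the smooth case, and pad with identity layers when $L>k$. Your flag about reconciling $L_2(\pi_{i-1})$ versus $L_2(B(0,b'_i))$ is a legitimate subtlety: the paper's proof of this lemma leaves the measure in $\|\tilde f-f\|_{L_2}$ unspecified and implicitly treats the per-layer error as uniform-on-ball $L_2$, whereas the telescoping decomposition really requires the pushforward measure on the intermediate ball; the paper only introduces the bounded-density comparison constants $c_\ell$ in the proof of Theorem~\ref{thm:regularized_min_generalizes_app}, not here. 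So your observation is a minor improvement in rigor over the paper's own write-up rather than a gap in your proposal.
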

\begin{proof}
    Note that by Lipschitzness, 
    \[
    (g_i\circ\cdots\circ g_1)(\Omega)\subseteq B\left(0, b\prod_{j=1}^i\lip(g_j) \right),
    \] 
    i.e. the pre-image of each component lies in a ball. By Lemma~\ref{lem:ball_low_diff_approx}, for each $g_i$, if $\nu_i\leq (d_i+2)/2$, we have an approximation $\hat{g}_i$ on a slightly larger ball $b'_i=b\prod_{j=1}^{i-1}C''(d_j,d_{j+1})\lip(g_j)$ such that 
    \begin{itemize}
        \item $\|g_i-\hat{g}_i\|_{L_2}\leq C(d_i,d_{i+1},\nu_i)(b'_i)^{\frac{\nu_i}{2}}R^{\frac{1}{2}}\epsilon$;
        \item $\|\hat{g}_i\|_{F_2} \leq C'(d_i,d_{i+1},\nu_i)(b'_i)^{\frac{\nu_i}{2}}R^{\frac{1}{2}}\epsilon^{\frac{d_i+3-2\nu_i}{2\nu_i}}$;
        \item $\lip(\hat{g}_i)\leq C''(d_i,d_{i+1})\lip(g_i)$;
    \end{itemize}
    where $d_i$ is the input dimension of $g_i$. Write the constants as $C_i$, $C'_i$, and $C''_i$ for notation simplicity. Note that the Lipschitzness of the approximations $\hat{g}_i$'s guarantees that, when they are composed, $(\hat{g}_{i-1}\circ\cdots\circ \hat{g}_1)(\Omega)$ lies in a ball of radius $b'_i=b\prod_{j=1}^{i-1}C''_j\lip(g_j)$, hence the approximation error remains bounded while propagating. While each $\hat{g}_i$ is a (infinite-width) layer, for the other $L-k$ layers, we may have identity layers\footnote{Since the domain is always bounded here, one can let the bias translate the domain to the first quadrant and let the weight be the identity matrix, cf. the construction in \citep[Proposition B.1.3]{wen_2024_BN_CNN}.}.

    Let $\tilde{f}$ be the composed DNN of these layers. Then we have
    \[
    \lip(\tilde{f}) \leq \prod_{i=1}^kC''_i\lip(g_i)=C''(d_1,\dots,d_k,d_{out})\prod_{i=1}^k\lip(g_i)
    \]
    and approximation error
    \[
    \|\tilde{f}-f\|_{L_2} \leq \sum_{i=1}^kC_i(b'_i)^{\frac{\nu_i}{2}}R^{\frac{1}{2}}\epsilon\prod_{j>i}C''_j\lip(g_j) =O(b^{\frac{\nu_{\max}}{2}}R^{\frac{1}{2}}\epsilon)
    \]
    where $\nu_{\max} = \max_i\nu_i$, the last equality suppresses the dependence on $d_i,d_{out},\nu_i,k$, and $\lip(g_i)$ for $i=1,\dots,k$.

    In particular, by Lemma~\ref{lem:ball_high_diff}, if $\nu_i\geq (d_i+3)/2$ for any $i=1,\dots,k$, we can take $\hat{g}_i=g_i$. If this holds for all $i$, then we can have $\tilde{f}=f$ while each layer has a $F_2$-norm bounded by $O(b^{\frac{\nu_{\max}}{2}}R^{\frac{1}{2}})$.
\end{proof}

\section{Technical results}
\label{sec:Technical-results}

Here we show a number of technical results regarding the covering
number.

First, here is a bound for the covering number of Ellipsoids, which
is a simple reformulation of Theorem 2 of \citep{Dumer_2004_covering_ellipsoid}:
\begin{thm}
\label{thm:covering_ellipsoid}The $d$-dimensional ellipsoid $E=\{x:x^{T}K^{-1}x\leq1\}$
with radii $\sqrt{\lambda_{i}}$ for $\lambda_{i}$ the $i$-th eigenvalue
of $K$ satisfies $\log\mathcal{N}_{2}\left(E,\epsilon\right)=M_{\epsilon}\left(1+o(1)\right)$
for 
\[
M_{\epsilon}=\sum_{i:\sqrt{\lambda_{i}}\geq\epsilon}\log\frac{\sqrt{\lambda_{i}}}{\epsilon}
\]
if one has $\log\frac{\sqrt{\lambda_{1}}}{\epsilon}=o\left(\frac{M_{\epsilon}^{2}}{k_{\epsilon}\log d}\right)$
for $k_{\epsilon}=\left|\left\{ i:\sqrt{\lambda_{i}}\geq\epsilon\right\} \right|$ 
\end{thm}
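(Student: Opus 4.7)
The plan is to invoke Theorem~2 of \cite{Dumer_2004_covering_ellipsoid} directly: that theorem establishes matching upper and lower bounds on $\log\mathcal{N}_{2}$ of an ellipsoid parameterized by its semi-axes $a_{1}\geq\cdots\geq a_{d}$, and with the identification $a_{i}=\sqrt{\lambda_{i}}$ the leading term in Dumer's bounds is exactly $M_{\epsilon}=\sum_{i:a_{i}\geq\epsilon}\log(a_{i}/\epsilon)$, while his effective dimension coincides with $k_{\epsilon}$. The work therefore consists in translating his notation and verifying that the additive corrections are $o(M_{\epsilon})$ in the regime specified by the hypothesis.

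For the lower bound, I would use a standard volume/projection argument. Let $P$ denote the orthogonal projection onto the $k_{\epsilon}$-dimensional subspace spanned by the eigenvectors of $K$ with $\sqrt{\lambda_{i}}\geq\epsilon$. Since $P$ is $1$-Lipschitz, any $\epsilon$-covering of $E$ pushes forward to an $\epsilon$-covering of $P(E)$, and $P(E)$ is a $k_{\epsilon}$-dimensional ellipsoid with semi-axes $\sqrt{\lambda_{1}},\dots,\sqrt{\lambda_{k_{\epsilon}}}$. Comparing the volume of $P(E)$ against that of an $\epsilon$-ball in the range of $P$ then gives $\log\mathcal{N}_{2}(E,\epsilon)\geq M_{\epsilon}-O(k_{\epsilon})$.

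For the upper bound I would follow Dumer's construction: discretize each of the $k_{\epsilon}$ large-axis coordinates on a grid of spacing proportional to $\epsilon$, and observe that $E$ has diameter at most $\epsilon\sqrt{d-k_{\epsilon}}$ along the complementary directions, so the small-axis block can be collapsed onto the projected grid at the cost of a bounded inflation of the covering radius. Dumer's accounting produces a covering of log-size $M_{\epsilon}$ plus a correction whose dominant contribution is of order $\sqrt{k_{\epsilon}\log d\cdot\log(\sqrt{\lambda_{1}}/\epsilon)}$ together with a lower-order $O(k_{\epsilon})$ term. Combining with the lower bound, the hypothesis $\log(\sqrt{\lambda_{1}}/\epsilon)=o\bigl(M_{\epsilon}^{2}/(k_{\epsilon}\log d)\bigr)$ is tailored exactly so that $k_{\epsilon}\log d\cdot\log(\sqrt{\lambda_{1}}/\epsilon)=o(M_{\epsilon}^{2})$, forcing the square-root correction to be $o(M_{\epsilon})$ and yielding the claimed $(1+o(1))$ asymptotic.

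The main obstacle is purely bookkeeping: Dumer parameterizes ellipsoids by their semi-axes rather than by a positive-definite matrix $K$, and his bounds are stated with explicit constants under slightly different asymptotic conditions, so the substantive effort is tracking notation and confirming that his hypothesis reduces to the one stated here. The volumetric lower bound and the final asymptotic simplification are routine; the upper bound construction is the substantive ingredient that I would import from Dumer as a black box.
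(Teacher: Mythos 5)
Your proposal takes the same route the paper does: the paper simply presents this theorem as a reformulation of Theorem~2 of Dumer (2004) and offers no independent proof, and you likewise import Dumer's result as a black box, translating semi-axes $a_i$ to $\sqrt{\lambda_i}$ and checking that the hypothesis controls the correction terms. Your additional sketches of the volume lower bound and the grid-discretization upper bound are consistent with Dumer's argument and do not constitute a departure from the paper's treatment.
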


For our purpose, we will want to cover a unit ball $B=\left\{ w:\left\Vert w\right\Vert \leq1\right\} $
w.r.t. to a non-isotropic norm $\left\Vert w\right\Vert _{K}^{2}=w^{T}Kw$,
but this is equivalent to covering $E$ with an isotropic norm:
\begin{cor}
\label{cor:covering_unit_ball}The covering number of the ball $B=\left\{ w:\left\Vert w\right\Vert \leq1\right\} $
w.r.t. the norm $\left\Vert w\right\Vert _{K}^{2}=w^{T}Kw$ satisfies
$\log\mathcal{N}\left(B,\left\Vert \cdot\right\Vert _{K},\epsilon\right)=M_{\epsilon}\left(1+o(1)\right)$
for the same $M_{\epsilon}$ as in Theorem \ref{thm:covering_ellipsoid}
and under the same condition.

Furthermore, $\log\mathcal{N}\left(B,\left\Vert \cdot\right\Vert _{K},\epsilon\right)\leq\frac{\mathrm{Tr}K}{2\epsilon^{2}}\left(1+o(1)\right)$
as long as $\log d=o\left(\frac{\sqrt{\mathrm{Tr}K}}{\epsilon}\left(\log\frac{\sqrt{\mathrm{Tr}K}}{\epsilon}\right)^{-1}\right)$.
\end{cor}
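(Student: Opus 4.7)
The plan is to reduce covering $(B, \|\cdot\|_K)$ to covering a standard-norm ellipsoid and then invoke Theorem~\ref{thm:covering_ellipsoid}, turning the second bound into an elementary scalar inequality.

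First I would introduce the linear map $\phi:\mathbb{R}^d\to\mathbb{R}^d$ defined by $\phi(w)=K^{1/2}w$. A one-line calculation gives $\|\phi(w)-\phi(w')\|_2^2=(w-w')^T K(w-w')=\|w-w'\|_K^2$, so $\phi$ is an isometry from $(\mathbb{R}^d,\|\cdot\|_K)$ into Euclidean space, and $\phi(B)=\{u:u^T K^{-1}u\le 1\}=E$ is exactly the ellipsoid appearing in Theorem~\ref{thm:covering_ellipsoid}. Since isometries preserve covering numbers, $\mathcal{N}(B,\|\cdot\|_K,\epsilon)=\mathcal{N}_2(E,\epsilon)$, and applying Theorem~\ref{thm:covering_ellipsoid} directly yields the first assertion under its original hypothesis.

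For the second bound, the key analytic input is the elementary inequality $\log x\le (x^2-1)/2\le x^2/2$ for $x\ge 1$, verified by noting both sides vanish at $x=1$ and that the right-side derivative $x$ dominates the left-side derivative $1/x$ on $[1,\infty)$. Substituting $x=\sqrt{\lambda_i}/\epsilon$ (which is $\ge 1$ for every term appearing in the sum that defines $M_\epsilon$) gives
\[
    M_\epsilon \;=\; \sum_{i:\sqrt{\lambda_i}\ge\epsilon}\log\frac{\sqrt{\lambda_i}}{\epsilon} \;\le\; \sum_{i:\sqrt{\lambda_i}\ge\epsilon}\frac{\lambda_i}{2\epsilon^2} \;\le\; \frac{\mathrm{Tr}K}{2\epsilon^2}.
\]
Combining with the first assertion immediately produces $\log\mathcal{N}(B,\|\cdot\|_K,\epsilon)\le \mathrm{Tr}K/(2\epsilon^2)(1+o(1))$ whenever the hypothesis of Theorem~\ref{thm:covering_ellipsoid} is in force.

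The main obstacle is therefore showing that the corollary's weaker-looking hypothesis $\log d=o(\sqrt{\mathrm{Tr}K}/(\epsilon\log(\sqrt{\mathrm{Tr}K}/\epsilon)))$ implies the condition $\log(\sqrt{\lambda_1}/\epsilon)=o(M_\epsilon^2/(k_\epsilon\log d))$ demanded by Theorem~\ref{thm:covering_ellipsoid}. The natural attack uses $k_\epsilon\le d$ and $\sqrt{\lambda_1}\le\sqrt{\mathrm{Tr}K}$ to control the numerator, reducing the task to a lower bound on $M_\epsilon$: in the regime where $M_\epsilon$ is comparable to $\mathrm{Tr}K/\epsilon^2$ this is immediate, while in the complementary regime one argues separately that $M_\epsilon(1+o(1))$ already sits well below the target $\mathrm{Tr}K/(2\epsilon^2)(1+o(1))$, so the second bound follows trivially. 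A cleaner alternative, should the condition verification prove delicate for general eigenvalue distributions, is to apply the $\log x\le x^2/2$ inequality on top of a direct coordinate-wise / volume covering of the ellipsoid in the eigenbasis, which would yield the $\mathrm{Tr}K/(2\epsilon^2)(1+o(1))$ bound without having to route through the sharp asymptotic of Theorem~\ref{thm:covering_ellipsoid}.
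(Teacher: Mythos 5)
Your first paragraph is exactly the paper's argument for the first assertion: the linear map $K^{1/2}$ is an isometry from $(\R^d,\|\cdot\|_K)$ to $(\R^d,\|\cdot\|_2)$ sending $B$ to $E$, so covering numbers transfer and Theorem~\ref{thm:covering_ellipsoid} applies verbatim. That part is correct.

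The second half has a genuine gap, and you correctly locate it but do not close it. The scalar bound $\log x\le x^2/2$ does give $M_\epsilon\le \mathrm{Tr}K/(2\epsilon^2)$, but that is useless unless you can first invoke Theorem~\ref{thm:covering_ellipsoid} to get $\log\mathcal{N}=M_\epsilon(1+o(1))$, and the corollary's hypothesis in terms of $\mathrm{Tr}K$ simply does not imply the theorem's hypothesis for a general spectrum. Consider a single dominant eigenvalue: $k_\epsilon=1$, $M_\epsilon=\log(\sqrt{\lambda_1}/\epsilon)$, and the theorem demands $\log d=o\bigl(\log(\sqrt{\lambda_1}/\epsilon)\bigr)$, which is far stronger than the corollary's $\log d=o\bigl(\sqrt{\mathrm{Tr}K}/(\epsilon\log(\sqrt{\mathrm{Tr}K}/\epsilon))\bigr)$. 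Your proposed split into ``$M_\epsilon$ comparable to $\mathrm{Tr}K/\epsilon^2$'' versus ``$M_\epsilon$ much smaller'' is circular in the second branch: to conclude that $\log\mathcal{N}$ ``already sits below the target'' you would need $\log\mathcal{N}=M_\epsilon(1+o(1))$, which is exactly what the unverified hypothesis was supposed to deliver. The ``cleaner alternative'' of a coordinate-wise volume cover is also not spelled out and would not obviously recover the sharp constant $1/2$ in front of $\mathrm{Tr}K/\epsilon^2$.

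The idea you are missing is a majorization step. Since the sorted eigenvalues satisfy $\lambda_i\le\mathrm{Tr}K/i$, one has $K\preceq\bar K$ where $\bar K$ has eigenvalues $\bar\lambda_i=\mathrm{Tr}K/i$, hence $\|\cdot\|_K\le\|\cdot\|_{\bar K}$ and $\mathcal{N}(B,\|\cdot\|_K,\epsilon)\le\mathcal{N}(B,\|\cdot\|_{\bar K},\epsilon)$. Now apply Theorem~\ref{thm:covering_ellipsoid} to $\bar K$, not to $K$. For $\bar K$ everything is explicit: $\bar k_\epsilon=\lfloor\mathrm{Tr}K/\epsilon^2\rfloor$, $\bar M_\epsilon=\tfrac12\sum_{i\le\bar k_\epsilon}\log(\bar k_\epsilon/i)=\tfrac{\bar k_\epsilon}{2}(1+o(1))$, so $\bar M_\epsilon^2/(\bar k_\epsilon\log d)\approx\bar k_\epsilon/(4\log d)$ and the theorem's hypothesis for $\bar K$ reduces precisely to a condition on $\mathrm{Tr}K$, $\epsilon$, and $d$ of the stated form. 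This is how the paper converts the hypothesis into one that does not see the individual eigenvalues, and simultaneously produces the $\mathrm{Tr}K/(2\epsilon^2)$ bound without needing your $\log x\le x^2/2$ step at all.
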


\begin{proof}
If $\tilde{E}$ is an $\epsilon$-covering of $E$ w.r.t. to the $L_{2}$-norm,
then $\tilde{B}=K^{-\frac{1}{2}}\tilde{E}$ is an $\epsilon$-covering
of $B$ w.r.t. the norm $\left\Vert \cdot\right\Vert _{K}$, because
if $w\in B$, then $\sqrt{K}w\in E$ and so there is an $\tilde{x}\in\tilde{E}$
such that $\left\Vert x-\sqrt{K}w\right\Vert \leq\epsilon$, but then
$\tilde{w}=\sqrt{K}^{-1}x$ covers $w$ since $\left\Vert \tilde{w}-w\right\Vert _{K}=\left\Vert x-\sqrt{K}w\right\Vert _{K}\leq\epsilon$.

Since $\lambda_{i}\leq\frac{\mathrm{Tr}K}{i}$, we have $K\leq\bar{K}$
for $\bar{K}$ the matrix obtained by replacing the $i$-th eigenvalue
$\lambda_{i}$ of $K$ by $\frac{\mathrm{Tr}K}{i}$, and therefore
$\mathcal{N}\left(B,\left\Vert \cdot\right\Vert _{K},\epsilon\right)\leq\mathcal{N}\left(B,\left\Vert \cdot\right\Vert _{\bar{K}},\epsilon\right)$
since $\left\Vert \cdot\right\Vert _{K}\leq\left\Vert \cdot\right\Vert _{\bar{K}}$.
We now have the approximation $\log\mathcal{N}\left(B,\left\Vert \cdot\right\Vert _{\bar{K}},\epsilon\right)=\bar{M}_{\epsilon}\left(1+o(1)\right)$
for 
\begin{align*}
\bar{M}_{\epsilon} & =\sum_{i=1}^{\bar{k}_{\epsilon}}\log\frac{\sqrt{\mathrm{Tr}K}}{\sqrt{i}\epsilon}\\
\bar{k}_{\epsilon} & =\left\lfloor \frac{\mathrm{Tr}K}{\epsilon^{2}}\right\rfloor .
\end{align*}
We now have the simplification
\[
\bar{M}_{\epsilon}=\sum_{i=1}^{k_{\epsilon}}\log\frac{\sqrt{\mathrm{Tr}K}}{\sqrt{i}\epsilon}=\frac{1}{2}\sum_{i=1}^{\bar{k}_{\epsilon}}\log\frac{\bar{k}_{\epsilon}}{i}=\frac{\bar{k}_{\epsilon}}{2}(\int_{0}^{1}\log\frac{1}{x}dx+o(1))=\frac{\bar{k}_{\epsilon}}{2}(1+o(1))
\]
where the $o(1)$ term vanishes as $\epsilon\searrow0$. Furthermore,
this allows us to check that as long as $\log d=o\left(\frac{\sqrt{\mathrm{Tr}K}}{4\epsilon\log\frac{\sqrt{\mathrm{Tr}K}}{\epsilon}}\right)$,
the condition is satisfied
\[
\log\frac{\sqrt{\mathrm{Tr}K}}{\epsilon}=o\left(\frac{\bar{k}_{\epsilon}}{4\log d}\right)=o\left(\frac{\bar{M}_{\epsilon}^{2}}{\bar{k}_{\epsilon}\log d}\right).
\]
\end{proof}

Second we prove how to obtain the covering number of the convex hull
of a function set $\mathcal{F}$:
\begin{thm}
\label{thm:metric-entropy-convex-hull}Let $\mathcal{F}$ be a set
of $B$-uniformly bounded functions, then for all $\epsilon_{K}=B2^{-K}$
\[
\sqrt{\log\mathcal{N}_{2}(\mathrm{Conv}\text{\ensuremath{\mathcal{F}}},2\epsilon_{K})}\leq\sqrt{18}\sum_{k=1}^{K}2^{K-k}\sqrt{\log\mathcal{N}_{2}(\mathcal{F},B2^{-k})}.
\]
\end{thm}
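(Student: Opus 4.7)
The plan is to adapt the multi-scale probabilistic construction already used in the proof of Theorem~\ref{thm:generalization_gap_deep_AccNets} to the simpler setting of a single convex hull. Take an arbitrary $f=\sum_i c_i f_i\in\mathrm{Conv}\,\mathcal{F}$ with $c_i\ge 0$, $\sum_i c_i=1$, $f_i\in\mathcal{F}$. For each scale $k=0,1,\dots,K$ fix an $\epsilon_k$-cover $\tilde{\mathcal{F}}^{(k)}$ of $\mathcal{F}$ of size $\mathcal{N}_2(\mathcal{F},\epsilon_k)$, and let $\tilde f_i^{(k)}\in\tilde{\mathcal{F}}^{(k)}$ be a closest element to $f_i$. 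At scale $k=0$ one may take $\tilde f_i^{(0)}=0$, since $B$-boundedness implies $\|f_i\|_{L_2}\le B=\epsilon_0$; in particular $L_0:=\log\mathcal{N}_2(\mathcal{F},B)=0$. Telescope
\[
f_i \;=\; \bigl(f_i-\tilde f_i^{(K)}\bigr)\;+\;\sum_{k=1}^K\bigl(\tilde f_i^{(k)}-\tilde f_i^{(k-1)}\bigr).
\]

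Next, for each $k$ draw $M_k$ i.i.d. indices $i_m^{(k)}$ with $P(i=i')=c_{i'}$, independently across scales, and define the Monte-Carlo approximation
\[
\tilde f \;=\; \sum_{k=1}^K\frac{1}{M_k}\sum_{m=1}^{M_k}\bigl(\tilde f_{i_m^{(k)}}^{(k)}-\tilde f_{i_m^{(k)}}^{(k-1)}\bigr).
\]
By construction $\mathbb{E}\tilde f=\sum_i c_i \tilde f_i^{(K)}$, so the bias $\|f-\mathbb{E}\tilde f\|_{L_2}=\|\sum_i c_i(f_i-\tilde f_i^{(K)})\|\le\epsilon_K$. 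Independence across $k$ makes the variances add, and per-scale the variance is at most $M_k^{-1}\mathbb{E}\|\tilde f_i^{(k)}-\tilde f_i^{(k-1)}\|^2\le(\epsilon_k+\epsilon_{k-1})^2/M_k=9\epsilon_k^2/M_k$ by the triangle inequality through $f_i$. Therefore
\[
\mathbb{E}\|f-\tilde f\|_{L_2}^2 \;\le\; \epsilon_K^2+9\sum_{k=1}^K \frac{\epsilon_k^2}{M_k},
\]
and the probabilistic method yields some realization with $\|f-\tilde f\|_{L_2}\le 2\epsilon_K$ as soon as $9\sum_k \epsilon_k^2/M_k\le 3\epsilon_K^2$.

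The number of distinct approximants $\tilde f$ is at most $\prod_k\bigl(\mathcal{N}_2(\mathcal{F},\epsilon_k)\mathcal{N}_2(\mathcal{F},\epsilon_{k-1})\bigr)^{M_k}$, so
\[
\log\mathcal{N}_2(\mathrm{Conv}\,\mathcal{F},2\epsilon_K)\;\le\;\sum_{k=1}^K M_k(L_k+L_{k-1}),\qquad L_k:=\log\mathcal{N}_2(\mathcal{F},\epsilon_k).
\]
Minimizing $\sum_k M_k(L_k+L_{k-1})$ subject to $\sum_k \epsilon_k^2/M_k\le\epsilon_K^2/3$ by Cauchy-Schwarz (optimal $M_k\propto \epsilon_k/\sqrt{L_k+L_{k-1}}$) produces the value $3\epsilon_K^{-2}\bigl(\sum_k\epsilon_k\sqrt{L_k+L_{k-1}}\bigr)^2$. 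Substituting $\epsilon_k/\epsilon_K=2^{K-k}$, using $\sqrt{L_k+L_{k-1}}\le\sqrt{L_k}+\sqrt{L_{k-1}}$, re-indexing the shifted sum, and invoking $L_0=0$ gives $\sum_k 2^{K-k}\sqrt{L_k+L_{k-1}}\le\tfrac{3}{2}\sum_k 2^{K-k}\sqrt{L_k}$. Hence
\[
\sqrt{\log\mathcal{N}_2(\mathrm{Conv}\,\mathcal{F},2\epsilon_K)}\;\le\;\sqrt{3}\cdot\tfrac{3}{2}\sum_{k=1}^K 2^{K-k}\sqrt{L_k}\;\le\;\sqrt{18}\sum_{k=1}^K 2^{K-k}\sqrt{L_k}.
\]

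The main bookkeeping obstacle is the bias-variance decomposition across scales: one must be careful that independence across $k$ makes cross-scale covariances vanish, that the $k$-th variance bound uses only the triangle inequality through $f_i$ (never the covering centers alone), and that the rounding of the optimal $M_k$ to integers costs at most a harmless additive constant. The Lagrange/Cauchy-Schwarz optimization and the telescoping re-indexing are routine once the framework is set up.
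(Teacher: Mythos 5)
Your proposal is correct and follows essentially the same multi-scale probabilistic argument as the paper: the same telescoping through nested covers with $\tilde f^{(0)}=0$, the same Monte-Carlo sampling with $M_k$ draws per scale, the same bias-variance split in $L_2$, and the same Lagrange/Cauchy-Schwarz choice of sample sizes. The only material difference is cosmetic: in bounding $\sum_k 2^{K-k}\sqrt{L_k+L_{k-1}}$ you use $\sqrt{L_k+L_{k-1}}\le\sqrt{L_k}+\sqrt{L_{k-1}}$ and reindex (exploiting $L_0=0$), whereas the paper uses the cruder $L_{k-1}\le L_k$; your route gives the constant $\frac{3\sqrt{3}}{2}<\sqrt{18}$, which still implies the stated bound.
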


\begin{proof}
Define $\epsilon_{k}=B2^{-k}$ and the corresponding $\epsilon_{k}$-coverings
$\tilde{\mathcal{F}}_{k}$ (w.r.t. some measure $\pi$). For any $f$,
we write $\tilde{f}_{k}[f]$ for the function $\tilde{f}_{k}[f]\in\tilde{\mathcal{F}}_{k}$
that covers $f$. Then for any functions $f$ in $\mathrm{Conv}\mathcal{F}$,
we have 
\[
f=\sum_{i=1}^{m}\beta_{i}f_{i}=\sum_{i=1}^{m}\beta_{i}\left(f_{i}-\tilde{f}_{K}[f_{i}]\right)+\sum_{k=1}^{K}\sum_{i=1}^{m}\beta_{i}\left(\tilde{f}_{k}[f_{i}]-\tilde{f}_{k-1}[f_{i}]\right)+\tilde{f}_{0}[f_{i}].
\]
We may assume that $\tilde{f}_{0}[f_{i}]=0$ since the zero function
$\epsilon_{0}$-covers the whole $\mathcal{F}$ since $\epsilon_{0}=B$.

We will now use the probabilistic method to show that the sums $\sum_{i=1}^{m}\beta_{i}\left(\tilde{f}_{k}[f_{i}]-\tilde{f}_{k-1}[f_{i}]\right)$
can be approximated by finite averages. Consider the random functions
$\tilde{g}_{1}^{(k)},\dots,\tilde{g}_{m_{k}}^{(k)}$ sampled iid with
$\mathbb{P}\left[\tilde{g}_{j}^{(k)}=\left(\tilde{f}_{k}[f_{i}]-\tilde{f}_{k-1}[f_{i}]\right)\right]=\beta_{i}$. We have $\mathbb{E}[\tilde{g}_{j}^{(k)}]=\sum_{i=1}^{m}\beta_{i}\left(\tilde{f}_{k}[f_{i}]-\tilde{f}_{k-1}[f_{i}]\right)$
and 
\begin{align*}
\mathbb{E}\left\Vert \sum_{k=1}^{K}\frac{1}{m_{k}}\sum_{j=1}^{m_{k}}\tilde{g}_{j}^{(k)}-\sum_{k=1}^{K}\sum_{i=1}^{m}\beta_{i}\left(\tilde{f}_{k}[f_{i}]-\tilde{f}_{k-1}[f_{i}]\right)\right\Vert _{\pi}^{2} & \leq\sum_{k=1}^{K}\frac{1}{m_{k}^{2}}\sum_{j=1}^{m_{k}}\mathbb{E}\left\Vert \tilde{g}_{j}^{(k)}\right\Vert_{\pi}^{2}\\
 & =\sum_{k=1}^{K}\frac{1}{m_{k}}\sum_{i=1}^{m}\beta_{i}\left\Vert \tilde{f}_{k}[f_{i}]-\tilde{f}_{k-1}[f_{i}]\right\Vert _{\pi}^{2}\\
 & \leq\sum_{k=1}^{K}\frac{3^{2}\epsilon_{k}^{2}}{m_{k}},
\end{align*}
where we used the fact that $\left\Vert \tilde{f}_{k}[f_{i}]-\tilde{f}_{k-1}[f_{i}]\right\Vert _{\pi}\leq \epsilon_k + \epsilon_{k-1} = 3 \epsilon_k$.

If we choose $m_{k}=\frac{1}{a_{k}}(\frac{3\epsilon_{k}}{\epsilon_{K}})^{2}$
with $\sum a_{k}=1$ we know that there must exist a choice of $\tilde{g}_{j}^{(k)}$s
such that 
\[
\left\Vert \sum_{k=1}^{K}\frac{1}{m_{k}}\sum_{j=1}^{m_{k}}\tilde{g}_{j}^{(k)}-\sum_{k=1}^{K}\sum_{i=1}^{m}\beta_{i}\left(\tilde{f}_{k}[f_{i}]-\tilde{f}_{k-1}[f_{i}]\right)\right\Vert _{\pi}\leq\epsilon_{K}.
\]

This implies that the finite set $\tilde{\mathcal{C}}=\left\{ \sum_{k=1}^{K}\frac{1}{m_{k}}\sum_{j=1}^{m_{k}}\tilde{g}_{j}^{(k)}:\tilde{g}_{j}^{(k)}\in\tilde{\mathcal{F}}_{k}-\tilde{\mathcal{F}}_{k-1}\right\} $
is an $2\epsilon_{K}$ covering of $\mathcal{C}=\mathrm{Conv}\mathcal{F}$,
since we know that for all $f=\sum_{i=1}^{m}\beta_{i}f_{i}$ there
are $\tilde{g}_{j}^{(k)}$ such that 
\begin{align*}
\left\Vert \sum_{k=1}^{K}\frac{1}{m_{k}}\sum_{j=1}^{m_{k}}\tilde{g}_{j}^{(k)}-\sum_{i=1}^{m}\beta_{i}f_{i}\right\Vert _{\pi} & \leq\left\Vert \sum_{i=1}^{m}\beta_{i}\left(f_{i}-\tilde{f}_{K}[f_{i}]\right)\right\Vert _{\pi} \\ &+\sum_{k=1}^{K}\left\Vert \frac{1}{m_{k}}\sum_{j=1}^{m_{k}}\tilde{g}_{j}^{(k)}-\sum_{i=1}^{m}\beta_{i}\left(\tilde{f}_{k}[f_{i}]-\tilde{f}_{k-1}[f_{i}]\right)\right\Vert _{\pi}\\
 & \leq2\epsilon_{K}.
\end{align*}
Since $\left|\tilde{\mathcal{C}}\right|=\prod_{k=1}^{K}\left|\tilde{\mathcal{F}}_{k}\right|^{m_{k}}\left|\tilde{\mathcal{F}}_{k-1}\right|^{m_{k}}$,
we have
\begin{align*}
\log\mathcal{N}_{p}(\mathrm{Conv}(\mathcal{F}),2\epsilon_{K}) & \leq\sum_{k=1}^{K}\frac{1}{a_{k}}(\frac{3\epsilon_{k}}{\epsilon_{K}})^{2}\left(\log\mathcal{N}_{p}(\mathcal{F},\epsilon_{k})+\log\mathcal{N}_{p}(\mathcal{F},\epsilon_{k-1})\right)\\
 & \leq18\sum_{k=1}^{K}\frac{1}{a_{k}}2^{2(K-k)}\log\mathcal{N}_{2}(\mathcal{F},\epsilon_{k}).
\end{align*}
This is minimized for the choice
\[
a_{k}=\frac{2^{(K-k)}\sqrt{\log\mathcal{N}_{2}(\mathcal{F},\epsilon_{k})}}{\sum2^{(K-k)}\sqrt{\log\mathcal{N}_{2}(\mathcal{F},\epsilon_{k})}},
\]
which yields the bound 
\[
\sqrt{\log\mathcal{N}_{p}(\mathcal{C},2\epsilon_{K})}\leq\sqrt{18}\sum_{k=1}^{K}2^{K-k}\sqrt{\log\mathcal{N}_{2}(\mathcal{F},\epsilon_{k})}
\]
\end{proof}

\end{document}